\title{Spectral Estimators for Multi-Index Models:\\ Precise Asymptotics and Optimal Weak Recovery}
\author{
    Filip Kova\v{c}evi\'c\thanks{Institute of Science and Technology Austria. Email: \href{mailto:filip.kovacevic@ist.ac.at}{\texttt{filip.kovacevic@ist.ac.at}}.} 
    \and
    Yihan Zhang\thanks{University of Bristol. Email: \href{mailto:yihan.zhang@bristol.ac.uk}{\texttt{yihan.zhang@bristol.ac.uk}}.}
    \and
    Marco Mondelli\thanks{Institute of Science and Technology Austria. Email: \href{mailto:marco.mondelli@ist.ac.at}{\texttt{marco.mondelli@ist.ac.at}}.}
}
\begin{document}

\maketitle

\newcounter{asmpctr} 
\newcounter{asmpadd} 
\setcounter{asmpctr}{\value{enumi}}
\setcounter{asmpadd}{\value{enumi}}

\renewcommand{\qedsymbol}{$\blacksquare$}

\begin{abstract}%
Multi-index models provide a popular framework to investigate the learnability of functions with low-dimensional structure and, also due to their connections with neural networks, they have been object of recent intensive study. In this paper, we focus on recovering the subspace spanned by the signals via spectral estimators -- a family of methods routinely used in practice, often as a warm-start for iterative algorithms. Our main technical contribution is a precise asymptotic characterization of the performance of spectral methods, when sample size and input dimension grow proportionally and the dimension $p$ of the space to recover is fixed. Specifically, we locate the top-$p$ eigenvalues of the spectral matrix and establish the overlaps between the corresponding eigenvectors (which give the spectral estimators) and a basis of the signal subspace. Our analysis unveils a phase transition phenomenon in which, as the sample complexity grows, eigenvalues escape from the bulk of the spectrum and, when that happens, eigenvectors recover directions of the desired subspace. The precise characterization we put forward enables the optimization of the data preprocessing, thus allowing to identify the spectral estimator that requires the minimal sample size for weak recovery.
\end{abstract}

\let\citet\cite

\section{Introduction}
\label{sec:intro}

Modern machine learning practices operate on high-dimensional datasets that are believed to possess low-dimensional structures, and multi-index models are a popular statistical framework for studying such scenarios \cite{Li1,Li2,Li3}. Specifically, 
for a labeled dataset $ \cD = \brace{(a_i, y_i)}_{i = 1}^n $, where $ a_i \in \bbR^d $ and $ 
y_i \in \bbR $ denote features and responses respectively, a multi-index model postulates that each data pair follows a generalized regression and the responses are function of a \emph{low-dimensional} projection of the high-dimensional features. 
In formulas: 
\begin{equation}\label{eq:multi}
y_i = q\paren{ \inprod{a_i}{w_1^*}, \cdots, \inprod{a_i}{w_p^*}, \eps_i }, 
\end{equation}
where the link function $ q $ is a known nonlinearity operating on a $p$-dimensional linear transformation of $a_i$, given by $ W^* = \matrix{ w_1^*, & \cdots, & w_p^* } \in \bbR^{d\times p} $, and on additional randomness from $\eps_i$. 

We study the problem of estimating the signals $W^*$ given $ \cD $ under a Gaussian design where the $ a_i $'s are i.i.d.\ standard Gaussian. We focus on the proportional asymptotic regime where $n,d\to\infty$ with $ n/d \to \delta \in ]0,\infty[ $, while keeping the dimension of the low-dimensional projection $p$ fixed. 
Of particular interest in this work is the \emph{weak recovery} of the subspace spanned by $ w_1^*, \cdots, w_p^* $ which seeks an estimator $ \wh{W} = \matrix{\wh{w}_1, & \cdots, & \wh{w}_p} \in \bbR^{d\times p} $ s.t.\ at least one of $ \wh{w}_i $'s is non-trivially correlated with some linear combination of the signals. 
Formally, there exists $ v\in\bbS^{p-1} $ and some $ \wh{w}_i $ in $ \wh{W} $ such that the normalized correlation (or \emph{overlap}) $ \frac{\abs{\inprod{\wh{w}_i}{W^* v}}}{\normtwo{\wh{w}_i} \normtwo{W^* v}} $ is asymptotically non-vanishing. 

To solve the aforementioned problem, this paper focuses on 
spectral estimators that construct the matrix
$D = \frac{1}{n} \sum_{i = 1}^n \cT(y_i) a_ia_i^\top$,
from the dataset $\cD$ and then output its top-$p$ eigenvectors. 
Here, $\cT \colon \bbR \to \bbR$ refers to a user-defined preprocessing function, with popular choices including binary quantization and truncation \cite{Wang_subset,Chen_Candes}. 
Spectral estimators are easy to design, efficient to compute, and effective in practice \cite{Chen_monograph}. 
However, such class of methods remains understudied for multi-index models, with existing results falling short of producing exact asymptotics \cite{Chen_Meka} and being restricted to special cases, such as single-index (for which $p=1$, corresponding to generalized linear models \cite{mccullagh2019generalized}) \cite{lu2020phase,mondelli-montanari-2018-fundamental}, polynomial link functions \cite{Chen_Meka}, and mixed regression \cite{mixed-zmv-arxiv}, see also \Cref{sec:related}. 
In contrast, this paper tackles the problem for fully general multi-index models, precisely identifying under what conditions spectral estimators are effective. 

A compelling motivation for the precise analysis of spectral estimators comes from the choice of the preprocessing function $\cT$. In the single-index case, its optimization has led to significant performance gains for Gaussian \cite{mondelli-montanari-2018-fundamental,Luo_Alghamdi_Lu}, correlated \cite{Zhang_COLT} and rotationally invariant \cite{maillard2022construction} designs. Remarkably, the related optimal spectral estimators have also been shown to achieve the computational limits of weak learnability for a class of link functions. In the proportional regime of interest in this work ($n=\Theta(d)$),\footnote{Related, albeit different, perspectives are to consider a sample complexity polynomial in $d$ \cite{damian2023smoothing,Damian} or information-theoretic limits \cite{GLM_PNAS,Aubin_comm_machine}.} such limits have been connected to the performance of Approximate Message Passing (AMP) \cite{mondelli-montanari-2018-fundamental,Maillard} and, more precisely, to the stability of its trivial fixed point. Here, AMP refers to a family of iterative algorithms that is provably optimal among first-order methods \cite{celentano2020estimation,montanari2024statistically} and, in fact, there is significant evidence that AMP is optimal even among all polynomial-time algorithms, i.e., it achieves Bayes-optimal performance unless a statistical-to-computational gap is present \cite{GLM_PNAS,PCA_PNAS, Mont_Venk_AOS,VAMP,VKM}. Most recently, the multi-index case 
has been considered by \citet{troiani2024fundamental} and a threshold capturing its computational limits has been computed by studying whether AMP is able to improve on a non-trivial initialization, see \Cref{rmk:comp} for a connection with these results.

\paragraph{Main contributions.} This paper tackles the two main problems mentioned above, i.e., \emph{(i)} the lack of a precise analysis of spectral estimators for multi-index models, and \emph{(ii)} the design of an optimal preprocessing function $\cT$. Specifically, our results are summarized below.

\begin{enumerate}
 [leftmargin=6mm]
    \item For any preprocessing function $\cT$ satisfying mild assumptions, we precisely locate the top-$p$ eigenvalues of the spectral matrix $D$ and characterize the overlaps between the top-$p$ eigenvectors and a basis of the subspace spanned by the signals. Our results describe a phase transition phenomenon, akin to the classical BBP transition in the spiked covariance model \cite{BBAP}, with spectral outliers of $D$ corresponding to eigenvectors employed for signal recovery.

    \item Using the above characterization, we identify the optimal preprocessing function for weak recovery. 
    Our optimality result guarantees that no other choice of preprocessing function results in spectral estimators with a lower 
    threshold, and therefore it also implies the suboptimality of existing heuristics to choose $\cT$.  
\end{enumerate}

\section{Related work}
\label{sec:related}


\paragraph{Multi-index models.}
Several approaches have been proposed to perform statistical inference in multi-index models, e.g., 
structural adaption via maximum minimization \cite{dalalyan2008new}, 
projection pursuit regression \cite{yuan2011identifiability} and
techniques from compressed sensing 
\cite{fornasier2012learning}. 
\citet{andoni2014learning,Chen_Meka} consider polynomial link functions, with the latter work proposing a spectral warm start that requires a sample size $ n \gtrsim d (\log(d))^{\deg(q)} $, where $\deg(q)$ is the degree of the link. Due to the connection of multi-index models with two-layer neural networks, the area has witnessed a recent renewed interest with a focus on the performance of gradient-based methods. In particular, sample complexity bounds for gradient descent and statistical query lower bounds are provided by \citet{damian2022neural} when the link function is polynomial and by  \citet{Oko_GAM} when the multi-index model is the sum of single-index models. 
\citet{abbe2022merged,abbe2023sgd} introduce the concept of leap complexity and show that a class of staircase functions is learned via one-pass stochastic gradient descent (SGD) with $n = \Theta(d)$ samples. The leap exponent also appears in \cite{MIM_GF} as the time required by gradient flow to escape a saddle point. 
\citet{collins2024hitting} prove a deterministic equivalent of the SGD dynamics.  
\citet{Ren_Lee} provide an algorithm that recovers orthogonal multi-index models with a sample complexity matching the information exponent 
\cite{arous2021online}. 
We note that none of these methods pin-points exactly the sample complexity required to recover a multi-index model, which constitutes the focus of our work and is achieved via the class of spectral methods reviewed below.

\paragraph{Spectral estimators.} 
Spectral estimators have been applied to a variety of problems, e.g., community detection \cite{abbe2017community}, 
angular synchronization \cite{singer2011angular} and principal component analysis (PCA) \cite{Mont_Venk_AOS}. 
 In the setting of Gaussian design and proportional scaling between $n$ and $d$, their asymptotic performance 
 for single-index models is characterized by \cite{lu2020phase}. 
 Optimal weak recovery thresholds and optimal  overlaps are identified by \citet{mondelli-montanari-2018-fundamental} and \cite{Luo_Alghamdi_Lu}, respectively. 
 The above results are extended by \cite{dudeja-2020-rigorous-analysis} to subsampled Haar designs 
 and by \cite{Zhang_COLT} to correlated Gaussian designs. 
 Rotationally invariant designs are considered by \cite{maillard2022construction}, which conjecture the form of the optimal spectral estimator using a linearization of AMP and the analysis of the Bethe Hessian. Such conjecture is partly addressed by \citet{Zhang_COLT}, when the covariance of the $a_i$'s is rotationally invariant. 
We note that, in the single-index case, optimal spectral methods match computational thresholds obtained from the stability of AMP \cite{mondelli-montanari-2018-fundamental,Zhang_COLT} and, in special cases, information-theoretic thresholds as well \cite{mondelli-montanari-2018-fundamental}. An optimally-designed spectral estimator is able to meet the information-theoretic limits of weak recovery also for a class of heteroscedastic PCA problems \cite{MatrixDenoising}. Most closely related to our setting is work by \cite{mixed-zmv-arxiv}: the authors consider mixtures of single-index models with independent signals and provide precise asymptotics for spectral estimators by using a mix of tools from random matrix theory and the theory of AMP. In contrast, our approach is purely random matrix theoretic, and it allows us to handle a general class of multi-index models with arbitrary correlation among the signals. 

We finally note that a parallel paper by \citet{defilippis2025optimal} also considers a 
multi-index model with Gaussian design 
and introduces spectral estimators based on the linearization of AMP. \citet{defilippis2025optimal} then conjecture that such spectral estimators are optimal in the sense that they achieve the computational threshold identified by \cite{troiani2024fundamental}, providing both numerical and rigorous evidence in favor of the conjecture. Our work focuses on a family of spectral estimators popular in the related literature (cf.\ Eq.\ (41) in \citet{defilippis2025optimal} and \eqref{eqn:D} in our work), and it resolves the conjecture of \citet{defilippis2025optimal} for a wide class of link functions $q$, including all permutation-invariant ones, see \Cref{rmk:comp} for details. 

\section{Preliminaries}
\label{sec:prelim}

\paragraph{Notation.} Given a positive integer $n$, we use the shorthand $[n]:=\{1, \ldots, n\}$. We denote by $0_n$ the vector of length $n$ with all zeros. For a symmetric matrix $M$, we denote its Moore-Penrose pseudo-inverse as $M^\dagger$, the set of all its eigenvalues as $\Lambda^M$, its $i$-th largest eigenvalue as $\lambda_i^M$ and the corresponding $i$-th eigenvector of unit norm as $v_i^M$. 
We use $ (e_i^{(d)})_{i\in[d]} $ to denote the canonical basis of $ \bbR^d $ and suppress the superscript whenever there is no confusion. 
Unless otherwise specified, all limits of sequences of random quantities are computed in an almost sure sense as $n,d\to\infty$. 

\paragraph{Multi-index models and weak recovery.} We consider the problem of estimating $p$ signals using $n$ responses $(y_i)_{i\in [n]}$ generated i.i.d.\ from \eqref{eq:multi},
where $ \eps_i \in \bbR $ accounts for noise and $q:\bbR^{p+1}\to \bbR$ is the link function. 
We denote by $ A \coloneqq \matrix{ a_1, & \cdots, & a_n }^\top \in \bbR^{n\times d} $ the design matrix, by $ W^* \coloneqq \matrix{ w_1^*, & \cdots, & w_p^* } \in \bbR^{d\times p} $ the signal matrix, and by $ y \coloneqq \matrix{y_1, & \cdots, & y_n}^\top \in \bbR^n $ the response vector. 
Throughout the paper, we impose the following assumptions.

\begin{enumerate}[label=(A\arabic*)]
\setcounter{enumi}{\value{asmpctr}}

    \item \label[asmp]{asmp:A} $ A \in \bbR^{n\times d} $ contains i.i.d.\ standard Gaussian elements, i.e., $ A_{i,j} \iid \cN(0,1) $. 

    \item \label[asmp]{asmp:W*} $ (w_i^*)_{i\in[p]} $ are linearly independent, have unit norm and, if random, they are independent of $A$. 

    \item \label[asmp]{asmp:proportional} $p\ge1$ is fixed and $ n,d \to \infty $ s.t.\ $ n/d\to\delta\in]0,\infty[ $. 

    \item \label[asmp]{asmp:noise} $ \eps_1, \cdots, \eps_n $ are independent of $ A, W^* $, and they are i.i.d.\ according to a distribution $ P_\eps $ on $\bbR$ with finite first two moments. 

\setcounter{asmpctr}{\value{enumi}}
\end{enumerate}

\noindent The linear independence requirement in \Cref{asmp:W*} is mild.
For the purpose of subspace recovery (formally defined below), the presence of linearly dependent signals does not change the recoverability of a given estimator. 


\begin{definition}[Weak recovery]
    \label{def:weak_rec}
    Consider the model \Cref{eq:multi}. 
    Let $ \wh{W} \equiv \wh{W}(A, y) = \matrix{\wh{w}_1, & \cdots, & \wh{w}_p} \in \bbR^{d\times p} $ be an estimator such that $ \normtwo{\wh{w}_i} = 1 $ for all $i\in[p]$. 
    We say that $ \wh{W} $ weakly recovers the subspace $ \spn\brace{ w_1^*, \cdots, w_p^* } $ if 
    \begin{align}\label{eq:ournotion}
        \max_{v\in\bbS^{p-1}} \brace{ \liminf_{d\to\infty} \frac{\normtwo{\wh{W}^\top W^* v}}{\normtwo{W^* v}} } &> 0 ,  
    \end{align}
    where the almost sure limit is taken with respect to the proportional scaling in \Cref{asmp:proportional}. 
\end{definition}
In words, 
$ \wh{W} $ weakly recovers $ \cW \coloneqq \spn\brace{ w_1^*, \cdots, w_p^* } $ if at least one of the $ \wh{w}_i $'s attains non-vanishing correlation with \emph{some} linear combination of the signals $ w_1^*, \cdots, w_p^* $. 
Other notions of weak recovery for multi-index models have been considered in the literature. 
For instance, \citet{troiani2024fundamental}  also discuss the weak recovery of the \emph{whole} subspace $\cW$, replacing the $\max_{v\in\bbS^{p-1}}$ in \Cref{eq:ournotion} with $\min_{v\in\bbS^{p-1}}$. 
This requirement is clearly stronger than \Cref{def:weak_rec} since \emph{every} vector in $\cW$ needs to be weakly recovered by some $ \wh{w}_i $ in $ \wh{W} $. 
In this work, we focus exclusively on \Cref{def:weak_rec}. 

\paragraph{Spectral estimators.}

For a 
preprocessing function $\cT:\bbR\to \bbR$, let  
$ z_i \coloneqq \cT(y_i)$ for $i\in[n]$ and $Z \coloneqq \diag \matrix{ z_1, & \cdots, & z_n } \in \bbR^{n\times n}$.
Furthermore, we define the matrix $D \equiv D_n \in \bbR^{d\times d}$ as
\begin{align}
    D_n = \frac{1}{n}A^\top Z A = \frac{1}{n}\sum_{i=1}^{n}z_ia_ia_i^\top. \label{eqn:D}
\end{align}
The spectral estimator then outputs the eigenvectors corresponding to the $p$ largest eigenvalues of $D$, i.e., $ 
\matrix{ v_1^D, & \cdots, & v_p^D } $.

We now make a simplifying assumption on $ W^* $ without loss of generality. 
Note that, for any $ W^* $ subject to \Cref{asmp:W*}, by orthogonal invariance of the random design matrix $A$, the law of 
$ \brace{ \frac{\abs{\inprod{v_i^D}{W^* v}}}{\normtwo{W^* v}} : i\in[p], v\in\bbS^{d-1}} $ remains unchanged under the rotation mapping $ W^* \mapsto W^* O $ for any orthogonal matrix $ O \in \bbR^{p\times p} $. 
Therefore, for convenience, we take the unique rotation $O$ that, for all $i$, maps $ w_i^* $ to $\sum_{j = 1}^i c_{i,j} e_j$, with $\sum_{j = 1}^i c_{i,j}^2 = 1$, and  study $ \brace{ \abs{\inprod{v_i^D}{e_j}} : i,j\in[p] } $, i.e., the overlap of the eigenvectors with the basis $\{e_1, \cdots, e_p\}$ spanned by the rotated signals. We note that, after this rotation,  
$ W^*$ 
can be written as 
\begin{align}
    W^* &= \matrix{\wt{W}^* \\ 0_{(d-p)\times p}} \in \bbR^{d\times p},\qquad     \wt{W}^* = \matrix{
        c_{1,1} & c_{2,1} & \cdots & c_{p,1} \\
                & c_{2,2} & \cdots & c_{p,2} \\
                &         & \ddots & \vdots  \\
                &         &        & c_{p,p}
    } \in \bbR^{p\times p} . \label{eqn:W*} 
\end{align}
We further
define the random variables 
\begin{align}
    (s, \eps) \sim \cN(0_p, I_p) \ot P_\eps , \quad 
    y = q((\wt{W}^*)^\top s, \eps) , \quad 
    z = \cT(y) , \label{eqn:RV}
\end{align}
and make the following assumptions on the preprocessing function $\cT$. 

\begin{enumerate}[label=(A\arabic*)]
\setcounter{enumi}{\value{asmpctr}}
    \item \label[asmp]{asmp:T} $\cT$ is bounded and $\prob{z = 0}<1$. 
\setcounter{asmpctr}{\value{enumi}}
\end{enumerate}

\section{Main results}
\label{sec:results}

Consider the model \Cref{eq:multi} under \Cref{asmp:A,asmp:W*,asmp:proportional,asmp:noise}. 
Our first result accurately locates, in the high-dimensional limit, the $p$ largest eigenvalues of the matrix $D$ in \Cref{eqn:D} for any preprocessing function $\cT$ subject to \Cref{asmp:T}, thereby unveiling a spectral phase transition phenomenon. 
To present it, a sequence of definitions is needed. 

Let $ \sT $ be the collection of preprocessing functions $\cT$ subject to \Cref{asmp:T}. 
For any $ \cT \in \sT $, let $ z = \cT(y) $ as in \Cref{eqn:RV} and let 
$\tau = \inf\{c:\prob{z\leq c} = 1\}$ 
be the right edge of the support of $z$ (by \Cref{asmp:T}, $ \tau < \infty $). 
Define $\psi_\delta \colon ]\tau,\infty[ \to \bbR$ and its point of minimum as
\begin{equation}
    \psidelta(\lambda) \coloneqq \lambda\paren{\frac{1}{\delta}+\expt{\frac{z}{\lambda-z}}}, \qquad \lambdabardelta \coloneqq \argmin_{\lambda>\tau}\psidelta(\lambda). \notag 
\end{equation}
Note that $\psi_\delta$ is convex and, thus, its minimum is unique. 
Finally, define $\zetadelta\colon \bbR \to \bbR$ and $ R^\infty \colon ]\tau,\infty[ \to \bbR^{p\times p} $ as 
\begin{align}
    \zetadelta(\lambda) \coloneqq \psidelta(\max\{\lambdabardelta, \lambda\}), \qquad R^\infty(\alpha) \coloneqq \expt{\frac{\alpha ss^\top z}{\alpha-z}} ,
    \label{eqn:zeta_R}
\end{align}
where $s, z$ are jointly distributed according to \Cref{eqn:RV}. 

\begin{theorem}\label{thm:eigvalconv}
    Let $ \cT \colon \bbR \to \bbR $ be a preprocessing function subject to \Cref{asmp:T}, and let $ D\in\bbR^{d\times d} $ be defined in \Cref{eqn:D}.
    Let $\alpha_1\geq \dots\ \geq \alpha_j > \tau$ (for some $j\in[p]$) be all the solutions to 
    \begin{equation}\label{eq:master_eq2}
    \det\paren{\zetadelta(\alpha)I-R^\infty(\alpha)}=0.    
    \end{equation}
    Then, for the top $j$ eigenvalues of $D$, it holds that
    \begin{equation}\label{eq:outsidebulkeigconv}
        \lambda_1^D,\dots,\lambda_j^D \asconv \zetadelta(\alpha_1), \dots, \zetadelta(\alpha_j),
    \end{equation}
    and for the remaining $p-j$ eigenvalues, it holds that
    $$\lambda_{j+1}^D,\dots,\lambda_p^D \asconv \zetadelta(\lambdabardelta).$$ 
\end{theorem}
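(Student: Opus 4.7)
The plan is to exploit the block structure induced by the rotation \Cref{eqn:W*}: since $z_i = \cT(y_i)$ depends on $a_i$ only through the first $p$ coordinates $u_i \in \bbR^p$ of $a_i$, write $a_i = (u_i^\top, g_i^\top)^\top$ with $g_i \in \bbR^{d-p}$, and let $U \in \bbR^{n\times p}$, $G \in \bbR^{n\times (d-p)}$ collect the $u_i^\top$ and $g_i^\top$ as rows. Then
\begin{equation*}
D = \begin{pmatrix} D_{11} & D_{12} \\ D_{12}^\top & D_{22} \end{pmatrix}, \quad D_{11} = \tfrac{1}{n}U^\top Z U, \quad D_{12} = \tfrac{1}{n}U^\top Z G, \quad D_{22} = \tfrac{1}{n}G^\top Z G,
\end{equation*}
where crucially $G$ is independent of $(U,Z)$, and the pairs $(u_i, z_i)_i$ are i.i.d.\ copies of $(s,z)$ from \Cref{eqn:RV}.

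Since $G$ is independent of $Z$, $D_{22}$ is a weighted sample covariance matrix with dimension ratio $(d-p)/n \to 1/\delta$. By the Silverstein/Marchenko-Pastur analysis for generalized Wishart matrices, its empirical spectral distribution converges a.s.\ to a compactly supported limit with right edge $\psidelta(\lambdabardelta)$, and the largest eigenvalue of $D_{22}$ converges to that edge (no signal being present in $D_{22}$). Because $D_{22}$ is a $(d-p)$-dimensional principal submatrix of $D$, Cauchy interlacing yields $\lambda_{p+1}^D \le \lambda_1^{D_{22}}$ and $\lambda_p^D \ge \lambda_p^{D_{22}}$, so at most $p$ eigenvalues of $D$ can lie strictly above the bulk edge $\zetadelta(\lambdabardelta)$.

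To locate those outliers I would use the Schur complement: for $\lambda > \psidelta(\lambdabardelta)$ the block $\lambda I - D_{22}$ is invertible for $n$ large, giving $\det(\lambda I_d - D) = \det(\lambda I_{d-p} - D_{22}) \cdot \det\bigl(\lambda I_p - M_n(\lambda)\bigr)$ with $M_n(\lambda) := D_{11} + D_{12}(\lambda I - D_{22})^{-1} D_{12}^\top$, so outliers of $D$ are the zeros of $\det(\lambda I_p - M_n(\lambda))$. The core step is to show $M_n(\lambda) \to R^\infty(\alpha)$ whenever $\lambda = \psidelta(\alpha)$ with $\alpha > \lambdabardelta$. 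The LLN gives $D_{11} \to \expt{z s s^\top}$. Writing $D_{12}(\lambda I - D_{22})^{-1}D_{12}^\top = \frac{1}{n^2}\sum_{i,j} z_iz_j u_i u_j^\top\, g_i^\top R(\lambda) g_j$, I would split into diagonal and off-diagonal contributions. For the diagonal, Sherman-Morrison applied to the leave-one-out resolvent $R^{(i)} := (\lambda I - D_{22} + (z_i/n) g_i g_i^\top)^{-1}$ yields $g_i^\top R g_i = \frac{g_i^\top R^{(i)} g_i}{1 - (z_i/n) g_i^\top R^{(i)} g_i}$, and Hanson-Wright concentration (exploiting the independence of $g_i$ from $R^{(i)}$) gives $\tfrac{1}{n}g_i^\top R^{(i)} g_i \to \tfrac{1}{\delta} m(\lambda)$ with $m$ the limiting Stieltjes transform of $D_{22}$. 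Silverstein's fixed-point equation identifies $\lambda = \psidelta(\alpha) \iff m(\lambda)/\delta = 1/\alpha$, hence $\tfrac{1}{n}g_i^\top R g_i \to 1/(\alpha - z_i)$, and the diagonal contribution converges to $\expt{\tfrac{z^2}{\alpha - z} s s^\top}$; adding $D_{11}$ gives $\expt{\tfrac{z(\alpha-z)+z^2}{\alpha-z}ss^\top} = \expt{\tfrac{z\alpha}{\alpha-z}ss^\top} = R^\infty(\alpha)$. For the off-diagonal piece, a second Sherman-Morrison reduces $g_i^\top R g_j$ ($i\ne j$) to $g_i^\top R^{(ij)} g_j$ up to factors converging to a constant; since $g_i, g_j$ are independent of $R^{(ij)}$ and of each other, this quantity has conditional mean zero and variance $O(d)$, and a direct second-moment bound on the whole sum shows its operator norm is $O_P(n^{-1/2}) \to 0$.

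Finally, by monotonicity of $\psidelta$ on $(\lambdabardelta,\infty)$ and continuity of $\alpha \mapsto R^\infty(\alpha)$, each solution $\alpha_i > \lambdabardelta$ of $\det(\zetadelta(\alpha) I - R^\infty(\alpha)) = 0$ produces an outlier of $D$ at $\zetadelta(\alpha_i)$, and conversely every outlier arises this way; if there are exactly $j$ such solutions, the top $j$ eigenvalues converge as claimed, and the remaining $\lambda_{j+1}^D,\ldots,\lambda_p^D$ are sandwiched between $\lambda_p^{D_{22}}$ and $\lambda_1^{D_{22}}$ (both tending to $\zetadelta(\lambdabardelta)$) via Cauchy interlacing, so they stick to the bulk edge. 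The main obstacle is the quantitative, uniform-in-$\lambda$ control on the convergence $M_n(\lambda) \to R^\infty(\psidelta^{-1}(\lambda))$ above the bulk edge: one needs an anisotropic local-law type estimate for the resolvent of the weighted Wishart $D_{22}$ (stronger than pointwise Stieltjes convergence), so that the pointwise limit of $\det(\lambda I_p - M_n(\lambda))$ upgrades to uniform convergence on compact sets and the roots can be located cleanly by a Rouché/continuity argument on the determinant.
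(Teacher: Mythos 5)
Your decomposition is the same as the paper's (your $(D_{11},D_{12},D_{22})$ are its $(a,q^\top,P)$, and your $M_n(\lambda)$ is exactly its $R(\lambda)$), and your key limit $M_n(\lambda)\to R^\infty(\psidelta^{-1}(\lambda))$ is precisely the content of the paper's \Cref{thm:matrixRconvergence}. The \emph{route to the eigenvalues}, however, is genuinely different. The paper never locates the eigenvalues via $\det(\lambda I_p - M_n(\lambda))$. Instead it works with the \emph{complementary} $(d-p)\times(d-p)$ Schur complement through the functions $L_i(\mu)=\lambda_i(P-q(a-\mu I_p)^{-1}q^\top)$, patches these into monotone functions $\tilde L_i$, proves a deterministic fixed-point characterization $\tilde L_i(\mu)=\mu$ for $\lambda_i^D$ (\Cref{prop:eigvalrec}), and then establishes the limit of $\tilde L_i$ at each fixed $\mu$ by viewing $P-q(a-\mu I_p)^{-1}q^\top = \tfrac1n U^\top M_n U$ with $M_n$ a rank-$p$ perturbation of $Z$, so that the Bai--Yao spiked-sample-covariance theorems apply. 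Your proposal instead uses the $p\times p$ Schur complement plus a leave-one-out/Sherman--Morrison resolvent expansion.

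Your Sherman--Morrison bookkeeping for the diagonal terms is correct and does produce $R^\infty(\alpha)$ in the limit, and the interlacing argument for the non-outliers is sound. The issue you flag at the end, however, is not a minor technicality but exactly what makes the paper's route preferable. Once you want to read off roots from $\det(\lambda I_p - M_n(\lambda))$ you need convergence of $M_n(\lambda)$ that is uniform in $\lambda$ on a compact set above the (random) bulk edge, \emph{and} you need control of the edge itself, since the outlier-vs-bulk boundary is not known a priori; a pointwise-in-$\lambda$ Stieltjes-transform argument of the kind you sketch does not immediately give that, and you would need a local-law/anisotropic estimate for the weighted Wishart $D_{22}$ that is not off-the-shelf. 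The paper sidesteps this entirely: because $\tilde L_i(\mu)-\mu$ is deterministic, continuous, and \emph{strictly monotone} on its domain, pointwise a.s.\ convergence $\tilde L_i(\mu)\to\tilde L_i^\infty(\mu)$ upgrades for free to convergence of the unique root (via \cite[Lemma~A.1]{lu2020phase}), with no uniformity or Rouch\'e argument required. Your determinant $\det(\lambda I_p - M_n(\lambda))$ has no such monotonicity structure in $\lambda$, which is precisely why the gap is hard to close. A secondary, smaller imprecision: you only distinguish ``$\alpha_i>\lambdabardelta$ $\Rightarrow$ outlier'' from ``not a solution $\Rightarrow$ interlacing''; the theorem also allows solutions with $\tau<\alpha_i\le\lambdabardelta$, for which $\zetadelta(\alpha_i)=\zetadelta(\lambdabardelta)$ and the corresponding $\lambda_i^D$ sticks to the edge. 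Your interlacing bound does cover these since they converge to the same value, but your stated case split does not acknowledge them, whereas the paper's use of $\zetadelta(\alpha)=\psidelta(\max\{\lambdabardelta,\alpha\})$ makes the bookkeeping automatic.
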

In words, \Cref{thm:eigvalconv} shows a phase transition for the $j$ largest eigenvalues of $D$ as the (normalized) sample complexity $\delta$ varies. In fact, by the definition of $ \zeta_\delta(\cdot) $ in \Cref{eqn:zeta_R}, \Cref{eq:outsidebulkeigconv} implies that, for any $ k\in[j] $, if $ \alpha_k \le \lambdabardelta $, then $ \lambda_i^D $ asymptotically coincides with $\zetadelta(\lambdabardelta)$, which corresponds to the right edge of the bulk of the spectrum of $D$; conversely, if $ \alpha_k > \lambdabardelta $, then the asymptotic value of $ \lambda_i^D $ is strictly larger than the right edge, thereby forming a spectral outlier. 
This phenomenon mirrors the classical BBP phase transition for the spiked covariance model \cite{BBAP}. 


We note that \eqref{eq:master_eq2} has at most $p$ solutions in $]\tau,+\infty[$. Furthermore, if $\cT$ satisfies 
    \begin{equation}\label{assmp:psol}
        \inf_{\norm{2}{x}=1}\lim_{\alpha\to\tau^+}\expt{\frac{\alpha z\inprod{s}{x}^2}{\alpha-z}} = +\infty,
    \end{equation}
     we are guaranteed that \eqref{eq:master_eq2} has exactly $p$ solutions. Both statements are proved in \Cref{prop:exactlyp} deferred to \Cref{app:numsol}. 
     The condition in \eqref{assmp:psol} provides a natural generalization of the assumption made for $p=1$ in previous work, see Equation (82) in \cite{mondelli-montanari-2018-fundamental}.


Our second result characterizes the asymptotic performance, in terms of overlaps, of the eigenvectors corresponding to the spectral outliers of $D$. 

\begin{theorem}\label{thm:main}
    In the setting of \Cref{thm:eigvalconv}, let $\alpha_k = \alpha_{k+1} = \cdots = \alpha_{k+m-1} $ be solutions to \Cref{eq:master_eq2} of multiplicity $m$, i.e., $ \alpha_{k-1} > \alpha_k > \alpha_{k+m-2} $ ($ k\ge2, k+m-2\le j $), and let $E_k^\infty\subset\bbR^p$ be the $m$-dimensional eigenspace of $R^\infty(\alpha_k)$.
    If $\lambda_i^D \asconv \zetadelta(\alpha_k)>\zetadelta(\lambdabardelta)$ ($i\in \{k, \ldots, k+m-1\}$), then
    \begin{equation}\label{eq:liminfconv}
        \max_{l\in[p]}\liminf_{d\to\infty}\sum_{i=k}^{k+m-1}\abs{\inprod{v_i^D}{e_l^{(d)}}}^2>0.
    \end{equation}
    More precisely, under the additional assumption that either $m=1$ or the eigenspace $E_k^\infty$ stays invariant in a neighbourhood of $\alpha_k$, for any $ l\in[p] $,    
    \begin{align}
        \sum_{i=k}^{k+m-1}\abs{\inprod{v_i^D}{e_l^{(d)}}}^2\asconv \frac{\zetadelta'(\alpha_k) \sum_{i=k}^{k+m-1}\abs{\inprod{h_i^\infty}{e_l^{(p)}}}^2}{\zetadelta'(\alpha_k)+{h_k^\infty}^\top \frac{d}{d\alpha}R^\infty(\alpha_k)h_k^\infty},
        \label{eq:liminfconv_mult}
    \end{align}
    where $\brace{h_i^\infty : k \le i\le k+m-1}\subset\bbR^p$ is an orthonormal basis of $E_k^\infty$ and $\frac{d}{d\alpha}R^\infty(\cdot)$ denotes the entry-wise derivative of the matrix 
    $R^\infty(\cdot)$. 
    Conversely, for all $i$ s.t.\ $\lambda_i^D \asconv \zetadelta(\lambdabardelta)$, then
\begin{equation}\label{eq:liminfconv-conv}
        \max_{l\in[p]}\lim_{d\to\infty}\abs{\inprod{v_i^D}{e_l^{(d)}}}^2=0.
    \end{equation}

\end{theorem}

In words, \Cref{thm:main} shows that, if $ \lambda_k^D $ is an outlier (which happens when $ \alpha_k > \lambdabardelta$ by \Cref{thm:eigvalconv}), the corresponding eigenvectors $ V_k = \matrix{v_k^D, & \cdots, & v_{k+m-1}^D} \in \bbR^{d\times m} $ weakly recover $ \cW=\spn\brace{e_1^{(d)}, \cdots, e_p^{(d)}} $ and \Cref{eq:liminfconv_mult} expresses the asymptotic (squared) overlap $ \normtwo{V_k^\top e_l^{(d)}}^2 $ in terms of $p$-dimensional equations. Conversely, if $ \lambda_k^D $ converges to the right edge of the bulk (which happens if either $ \alpha_k \le \lambdabardelta$ or $k$ exceeds the number of solutions of \Cref{eq:master_eq2}), then the overlap vanishes. 

Let us further elaborate on the invariance of the eigenspace $E_k^\infty$ required for \Cref{eq:liminfconv_mult} to hold. Specifically, if $E_k^\infty$ is an eigenspace of $R^\infty(\alpha_k)$, then it is also an eigenspace of $R^\infty(\alpha_k+\Delta)$ for small enough $\Delta$. This condition ensures that the denominator on the RHS of \Cref{eq:liminfconv_mult} is independent of the choice of the vector from the eigenspace. If this was not the case, the norm of the overlap vectors, i.e. $\sum_{j=1}^{p}\abs{\inprod{v_i^D}{e_j}} ^2$, may not have an asymptotic limit, making analysis hard. The condition appears to be a proof artifact, and it is not even clear how to construct a case with eigenvalue multiplicity that violates the invariance condition. For example, the assumption is satisfied by models in which the eigenvalue multiplicity arises from the permutation-invariance of the link function $q$, see \Cref{prop:invlink} in \Cref{app:invariance}.

Taken together, \Cref{thm:eigvalconv,thm:main} generalize earlier work by \cite{mondelli-montanari-2018-fundamental} on the single-index model and by \cite{mixed-zmv-arxiv} on mixtures of single-index models with independent signals. The independence of signals is crucial to the approach in \cite{mixed-zmv-arxiv} which decomposes $D_n$ in \eqref{eqn:D} as the free sum of matrices each corresponding to one of the signals. To circumvent this difficulty, we pursue a $p$-dimensional analog of the strategy in \cite{mondelli-montanari-2018-fundamental}, with additional adjustments. Namely, all results for eigenvalues and overlaps in \cite{mondelli-montanari-2018-fundamental} (Eq.\ (95), (96), (97), and (99) therein) come from Eq.\ (94), for which there is no direct $p$-dimensional analogue. Our work identifies appropriate alternatives in the form of \Cref{eq:defLtilde} and \Cref{eq:eigenvec-body}. We then show that the eigenvalues of $D_n$ solve a fixed point equation (see \Cref{eq:eigvalrec-body}) and that the eigenvectors are related to a $p\times p$ matrix (see \Cref{eq:eigenvec-body}). Finally, studying the limiting behavior (as $n,d\to\infty$) of the $p$-dimensional objects gives the claimed asymptotic characterizations. For $p>1$, eigenvalue multiplicities complicate the analysis of the limits of eigenvalue derivatives and associated eigenspaces, which are needed for characterizing asymptotic overlaps. We handle such complications via tools from perturbation theory. The proof is outlined in \Cref{sec:pftec}, with several auxiliary results deferred to \Cref{sec:auxpf,app:asymptotbehav,app:asymptotbehav-eigvec}.

Our result on overlaps in \Cref{thm:main} concerns the basis vectors. 
However, unless the signals have vanishing correlation, one needs additional side information to assemble the overlaps with the basis into overlaps 
with the signals $ (w_i^*)_{i\in [p]} $, due to the sign ambiguity of eigenvectors. 

Finally, our third main result identifies an optimal preprocessing function allowing the spectral estimator to attain the lowest weak recovery threshold as per \Cref{def:weak_rec}. 
For any $\delta>0$, let 
 \begin{align}
     \delta_c(\cT) &\coloneqq \inf\brace{ \delta > 0 : \max_{l\in[p]} \lim_{d\to\infty} \sum_{i=1}^{p}\abs{\inprod{v_i^D}{e_l^{(d)}}}^2 > 0 } \notag 
 \end{align}
be the recovery threshold for $\cT$, that is, the smallest aspect ratio above which it achieves weak recovery.
Then, define the optimal weak recovery threshold as 
\begin{align}
    \delta_c &\coloneqq \inf_{\cT \in \sT} \delta_c(\cT) . \label{eq:deltacdef}
\end{align}
Finally, for random variables $(s,y)\in \bbR^p \times \bbR$ jointly distributed according to \Cref{eqn:RV}, we use $ p(y \mid s) $ to denote the conditional density of $y$ given $s$. 

\begin{theorem}\label{thm:opt}
    The optimal weak recovery threshold $\delta_c$ equals 
    \begin{align}\label{eq:specweak}
        \delta_c &= \brack{ \max_{u\in\bbS^{p-1}}\int_{\bbR}\frac{\paren{\expt[s]{p(y \mathrel{\vert} s)\cdot(\inprod{s}{u}^2-1)}}^2}{\expt[s]{p(y \mathrel{\vert} s)}}dy }^{-1}
        , 
    \end{align}
     where expectations are intended over $ s\sim\cN(0_p,I_p) $.    
    Furthermore, denoting by $u_c\in\bbS^{p - 1}$ a maximizer in the above expression,  for any $ \delta > \delta_c $, weak recovery is achieved by taking $\cT_\delta^*(y)$ as in 
   \begin{align}\label{eq:opt}
        \cT^*(y) &\coloneqq 1 - \frac{\expt[s]{p(y \mathrel{\vert} s)}}{\expt[s]{p(y \mathrel{\vert} s)\cdot\inprod{s}{u_c}^2}} , \qquad
        \cT_\delta^*(y) \coloneqq \frac{\sqrt{\delta_c} \cdot \taustar(y)}{\sqrt{\delta}-(\sqrt{\delta}-\sqrt{\delta_c})\cdot \taustar(y)} .  
    \end{align}
\end{theorem}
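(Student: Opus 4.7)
The plan is to reduce $\delta_c$ to a variational problem solvable by Cauchy--Schwarz, via a change of variable exploiting the scale invariance of the master equation \eqref{eq:master_eq2}. First, I would reformulate membership in $\sT_\delta$. By \Cref{thm:eigvalconv}, $\cT \in \sT_\delta$ iff \eqref{eq:master_eq2} admits a solution $\alpha > \lambdabardelta$, on which range $\zetadelta = \psidelta$. A direct computation gives $\frac{d}{d\alpha}R^\infty(\alpha) = -\expt{ss^\top z^2/(\alpha - z)^2} \preceq 0$, so $\lambda_{\max}(R^\infty(\alpha))$ is non-increasing, while $\psidelta(\alpha) \to \infty$; an intermediate-value argument then renders solvability equivalent to the strict inequality $\lambda_{\max}(R^\infty(\lambdabardelta)) > \psidelta(\lambdabardelta)$. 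Setting $\wt{z} := z/(\lambdabardelta - z)$ and using the first-order condition $\psidelta'(\lambdabardelta) = 0$ (which yields $\expt{\wt{z}^2} = 1/\delta$), this rewrites as $\max_{u \in \bbS^{p-1}} \expt{\wt{z}\paren{\inprod{s}{u}^2 - 1}} > 1/\delta = \expt{\wt{z}^2}$.

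Second, I would recast $\delta_c$ as a free variational problem. Since $\cT \mapsto c\cT$ scales $\lambdabardelta$ by $c$ while leaving $\wt{z}$ invariant, and conversely any bounded $\wt{z}_0 > -1$ of $y$ is realized via $\cT(y) = \bar\lambda \wt{z}_0(y)/(1 + \wt{z}_0(y))$ at the matched $\delta = 1/\expt{\wt{z}_0^2}$, the optimization over $\cT$ becomes one over such $\wt{z}$. Homogeneity under $\wt{z} \mapsto c\wt{z}$ eliminates the threshold-equality constraint to give
\begin{equation}\notag
\frac{1}{\delta_c} = \sup_{\wt{z} \ne 0} \max_{u \in \bbS^{p-1}} \frac{\paren{\expt{\wt{z}\paren{\inprod{s}{u}^2 - 1}}}^2}{\expt{\wt{z}^2}}.
\end{equation}
Interchanging $\sup$ and $\max$ and applying Cauchy--Schwarz at fixed $u$ yields $\sup_{\wt{z}}(\expt{\wt{z}\, h_u(y)})^2/\expt{\wt{z}^2} = \expt{h_u(y)^2}$, attained at $\wt{z} \propto h_u$, where $h_u(y) := \expt{\inprod{s}{u}^2 - 1 \mathrel{\vert} y} = \expt[s]{p(y \mathrel{\vert} s)(\inprod{s}{u}^2 - 1)}/\expt[s]{p(y \mathrel{\vert} s)}$. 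Writing $\expt{h_u(y)^2} = \int_{\bbR} \paren{\expt[s]{p(y \mathrel{\vert} s)(\inprod{s}{u}^2 - 1)}}^2/\expt[s]{p(y \mathrel{\vert} s)}\,dy$ reproduces \eqref{eq:specweak}.

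Finally, with $u_c \in \bbS^{p-1}$ a maximizer (existing by compactness), the optimal $\wt{z} = h_{u_c}$ maps back to $\cT^*$ via the algebraic identity $h_{u_c}/(1 + h_{u_c}) = 1 - \expt[s]{p(y \mathrel{\vert} s)}/\expt[s]{p(y \mathrel{\vert} s)\inprod{s}{u_c}^2}$, i.e.\ $\cT^*$ corresponds to $\bar\lambda = 1$. To verify $\cT_\delta^* \in \sT_\delta$ for $\delta > \delta_c$, set $\wt{z}_\delta^* := \sqrt{\delta_c/\delta}\, h_{u_c}$; then $\expt{(\wt{z}_\delta^*)^2} = 1/\delta$ matches the first-order condition while $\expt{\wt{z}_\delta^*(\inprod{s}{u_c}^2 - 1)} = 1/\sqrt{\delta \delta_c} > 1/\delta$ precisely when $\delta > \delta_c$, giving the strict inequality required for weak recovery. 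Converting back via $z_\delta^* = \bar\lambda\wt{z}_\delta^*/(1+\wt{z}_\delta^*)$ with $\bar\lambda = 1$ and substituting $h_{u_c} = \taustar/(1 - \taustar)$ recovers the formula for $\cT_\delta^*$ in \eqref{eq:opt}. The main technical obstacle is membership in $\sT$: while $\cT^*$ itself can fail to be bounded when $h_{u_c}$ is unbounded, the denominator $\sqrt{\delta} - (\sqrt{\delta} - \sqrt{\delta_c})\taustar \ge \sqrt{\delta_c} > 0$ (using $\taustar \le 1$) guarantees that $\cT_\delta^*$ is bounded for every $\delta > \delta_c$, which is precisely the motivation for introducing the parametric family in \eqref{eq:opt}.
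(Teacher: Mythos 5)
Your argument follows the paper's strategy essentially step for step: reduce the recovery condition to the scalar inequality $\lambda_{\max}(R^\infty(\lambdabardelta)) > \psidelta(\lambdabardelta)$, substitute $\wt{z} = z/(\lambdabardelta - z)$ and use the first-order condition $\psidelta'(\lambdabardelta)=0$ (i.e.\ $\expt{\wt{z}^2}=1/\delta$) to normalize, apply Cauchy--Schwarz (H\"older in the paper) to identify $1/\delta_c = \max_u\expt{h_u(y)^2}$, and invert the equality case to produce $\cT^*$ and $\cT_\delta^*$; the ``free variational problem'' framing is just a restatement of the paper's scale normalization $\lambdabardelta=1$. Two small points in your final $\sT_\delta$-membership check need tightening. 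First, the bound $\sqrt{\delta}-(\sqrt{\delta}-\sqrt{\delta_c})\taustar(y)\ge\sqrt{\delta_c}$ by itself does not give boundedness of $\cT_\delta^*$: when $\taustar(y)\to-\infty$ the numerator $\sqrt{\delta_c}\,\taustar(y)$ diverges as well, and what actually keeps $\cT_\delta^*$ bounded is that the denominator grows linearly in $|\taustar|$, so that $\cT_\delta^*(y)\to -\sqrt{\delta_c}/(\sqrt{\delta}-\sqrt{\delta_c})$; the paper makes this precise by rewriting $\cT_\delta^*(y) = \sqrt{\delta_c}\big/\big(\sqrt{\delta}/\taustar(y) - (\sqrt{\delta}-\sqrt{\delta_c})\big)$ for $\taustar(y)\neq 0$ and tracking the range of the denominator. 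Second, \Cref{asmp:T} also requires $\prob{z=0}<1$, which you do not verify; the paper disposes of this by observing that $\prob{\taustar(y)=0}=1$ would force $\delta_c=+\infty$, in which case the theorem holds vacuously.
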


In words, \Cref{thm:opt} shows that 
the preprocessing in \Cref{eq:opt} leads to weak recovery of the signal subspace with a sample complexity that is minimal \emph{among all spectral methods}. 
As for \Cref{thm:eigvalconv,thm:main}, the result generalizes earlier work  \cite{mondelli-montanari-2018-fundamental,mixed-zmv-arxiv} to the multi-index case with arbitrarily correlated signals. 
The proof of \Cref{thm:opt} is in \Cref{app:pfopt}.

The design of the optimal preprocessing function does not require the knowledge of $ \wt{W}^* $, but only of the link function $q$ and of the limiting sample covariance matrix of the signals $\Sigma:=(\wt{W}^*)^\top \wt{W}^*$. In fact, $ \cT^*(y) $ remains unchanged under $ \wt{W}^* \mapsto O \wt{W}^* $ for any orthogonal matrix $ O\in\bbR^{p\times p} $. 
Indeed, one can verify that under the above mapping, $ \expt[s]{p(y \mid s)} $ remains unchanged and $ u_c \mapsto O u_c $. 
We can then take $ O = ({{}\wt{W}^*}^\top \wt{W}^*)^{-1/2} {{}\wt{W}^*}^\top $ 
and equivalently define $$y = q\paren{(\wt{W}^*)^\top O^\top s, \eps}
    = q\paren{\brack{(\wt{W}^*)^\top \wt{W}^*}^{1/2} s, \eps}.$$ We note that the link $q$ and the sample covariance $\Sigma$ are assumed to be known e.g.\ when $y$ is the output of a neural network with $p$ neurons in the teacher-student model. 
If $q$ and $\Sigma$ are unknown, the problem is in general much harder, and a separate set of samples may be used to first estimate $q, \Sigma$ (sample splitting, as in \cite{sawaya2024high}). If $q$ is parameterized by $\theta$ of fixed dimension, then one can obtain $ \theta $ and $ \Sigma$ from the moments of the random variable $y$ in Eq.\ \eqref{eqn:RV} in some cases (e.g., $y=s_1 s_2$ with generic $\Sigma$; $q=\sum_{i=1}^p s_i^2+w$ with $\Sigma=I$ and $w$ of unknown mean and variance). In such cases, $ \theta $ and $ \Sigma$ can be consistently estimated using the empirical moments of the response vector with $o(n)$ samples. 
The same guarantees then continue to hold if one applies the optimal spectral estimator on the remaining $n-o(n)$ samples with the consistent estimate of $q$ and $\Sigma$.


\begin{remark}\label{rmk:comp}
Let $\cF(M) \coloneq \expt[y]{E(y)ME(y)}$
with $E(y)\coloneq \expt[s]{ss^\top - I_p\mathrel{\vert} y}$. Then, the threshold identified in Lemma 4.1 of \cite{troiani2024fundamental} 
is given by 
 $   \left(\sup_{\substack{M\succ 0_{p\times p}\\ \normf{M}=1}}\normf{\cF(M)}\right)^{-1}$.  %
    One can readily verify that this expression coincides with \Cref{eq:specweak} when the $\sup$ is achieved by a rank-1 matrix. This is the case when the matrices $E(y)$ are simultaneously diagonalizable for all $y$, a condition satisfied by permutation invariant link functions and all examples in Appendix F of \cite{troiani2024fundamental}. 
    The proof is outlined in \Cref{subsec:equivTroiani}.
\end{remark}
Note that the simultaneous diagonalizability of $E(y)$ also implies the simultaneous diagonalizability of $R^\infty(\alpha)$. 
Namely,
\begin{align*}
    R^\infty(\alpha) &= \expt{\frac{\alpha ss^\top z}{\alpha-z}}
    = \expt[y]{\frac{\alpha\cT(y)}{\alpha-\cT(y)}\expt[s]{ss^\top\mathrel{\vert} y}}= \expt[y]{\frac{\alpha\cT(y)}{\alpha-\cT(y)}(E(y)+I_p)}. 
\end{align*}
Thus, if $(E(y)+I_p)$ has the same eigenvectors for every $y$, so does $R^\infty(\alpha)$ for every $\alpha$. 
This implies that the invariance condition on the eigenspace is satisfied, and \Cref{eq:liminfconv_mult} holds for all eigenvectors with eigenvalues outside the bulk.

\paragraph{Numerical results.}


In \Cref{fig:mpr,fig:prod}, we consider instances of 
\Cref{eq:multi} with $d = 1500$ and $p = 2$, and we compute overlaps between 
$ v_i^D $ and 
$ w_j^* $ ($ 1 \le i,j\le 2 $). 
Each data point 
is obtained by averaging over $10$ i.i.d.\ trials, and error bars are reported at $1$ standard deviation. 
The corresponding theoretical predictions are plotted using solid curves. 
Both numerical and theoretical values of overlaps are plotted as a function of the aspect ratio $\delta$ and captioned `num' and `thy' respectively. 
\begin{wrapfigure}{r}{0.30\textwidth}
    \centering
    \includegraphics[width=0.30\textwidth]{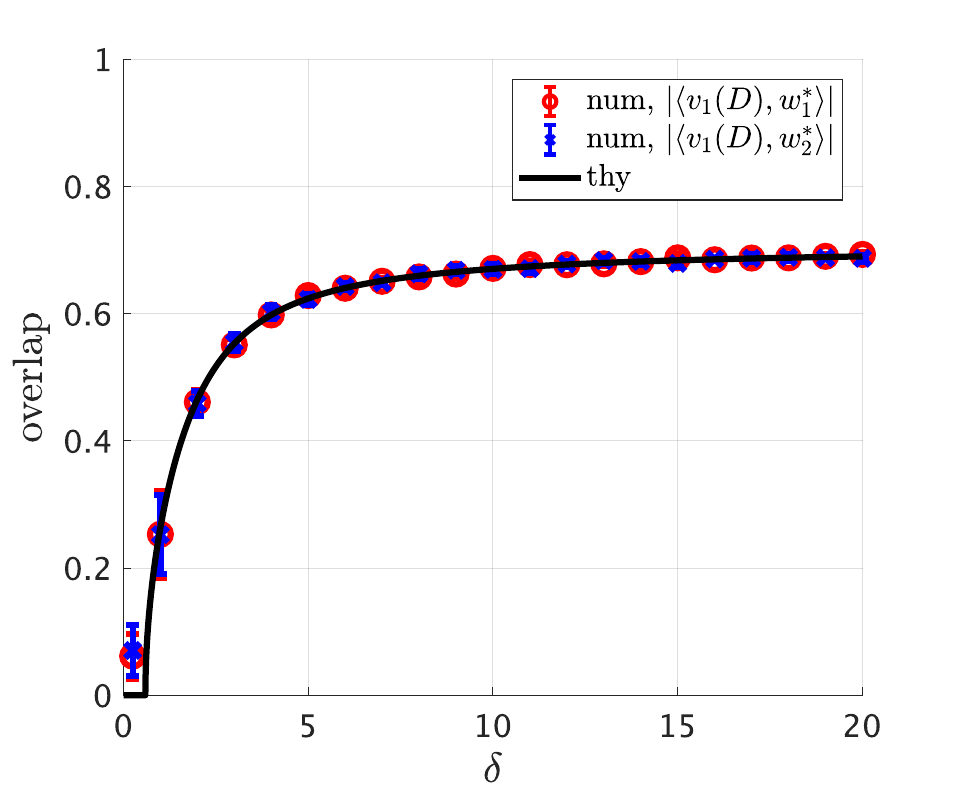}
    \caption{$q(s_1, s_2, \eps) = s_1s_2$. 
    Overlaps $ \abs{\inprod{v_1^D}{w_1^*}} $, $ \abs{\inprod{v_1^D}{w_2^*}} $ and theoretical predictions from \Cref{thm:main} are plotted as a function of $ \delta $.
}        
\label{fig:prod}
\end{wrapfigure}

In \Cref{fig:prod}, the link function is  $ q(s_1, s_2, \eps) = s_1 s_2 $ and the preprocessing function $\cT$ is the optimal one, see \Cref{subsec:prod} for the derivation. The signals are sampled i.i.d.\ from independent isotropic Gaussians. 
    Due to the permutation invariance of the model, for any eigenvector $ v_i^D $, the overlap $ \abs{\inprod{v_i^D}{w_j^*}} $ is asymptotically the same for $ j\in \{1, 2\}$. 
    As there is a single outlier in the model, we only plot $ \abs{\inprod{v_1^D}{w_1^*}} $ and $ \abs{\inprod{v_1^D}{w_2^*}}$. 

In \Cref{fig:mpr}, we consider a two-component mixed phase retrieval model: $ q(\xi_1, \xi_2, \eps) = \abs{\xi_\eps} $, where the mixing variable $ \eps $ is $\brace{1,2}$-valued with $ \prob{\eps = 1}= 0.6 $. 
   The signals are sampled from a bivariate correlated Gaussian with correlation $\rho=0.3$. 
    The performance of spectral estimators 
    is compared among five choices of preprocessing functions: \emph{(i)}
the quadratic function $ \cT(y) = \min\brace{ y^2, 10 } $ used by \cite{Yi_MLR} for mixed linear regression; 
\emph{(ii)} 
%
the trimming function $ \cT(y) = y^2 \indicator{ y^2 \le 7 } $ considered by \cite{Chen_Candes} which zeros out labels with large magnitude;  
\emph{(iii)} 
the subset function $ \cT(y) = \indicator{ y^2 > 2 } $ considered by \cite{Wang_subset} which quantizes the labels to binary values; \emph{(iv)}  
%
the optimal preprocessing function $ \cT(y) = \min\brace{ 1 - 1/y^2, -10 } $ for the \emph{non-mixed} phase retrieval model  derived by \cite{Luo_Alghamdi_Lu}; and \emph{(v)}
%
the optimal preprocessing function that is guaranteed by our theory to maximize the asymptotic overlap $ \abs{\inprod{v_1^D}{w_1^*}} $, see \Cref{subsec:mp} for details. 
 %
%
As we assume boundedness of $ \cT $ (see \Cref{asmp:T}), we truncate such functions whenever necessary. 
The truncation levels in \emph{(i)} and \emph{(iv)} are taken to be large enough so as not to significantly affect the performance; 
the truncation/quantization levels in \emph{(ii)} and \emph{(iii)} are 
taken from \cite{mondelli-montanari-2018-fundamental}. 

\begin{figure}[htbp]
    \centering
    \includegraphics[width=0.9\textwidth]{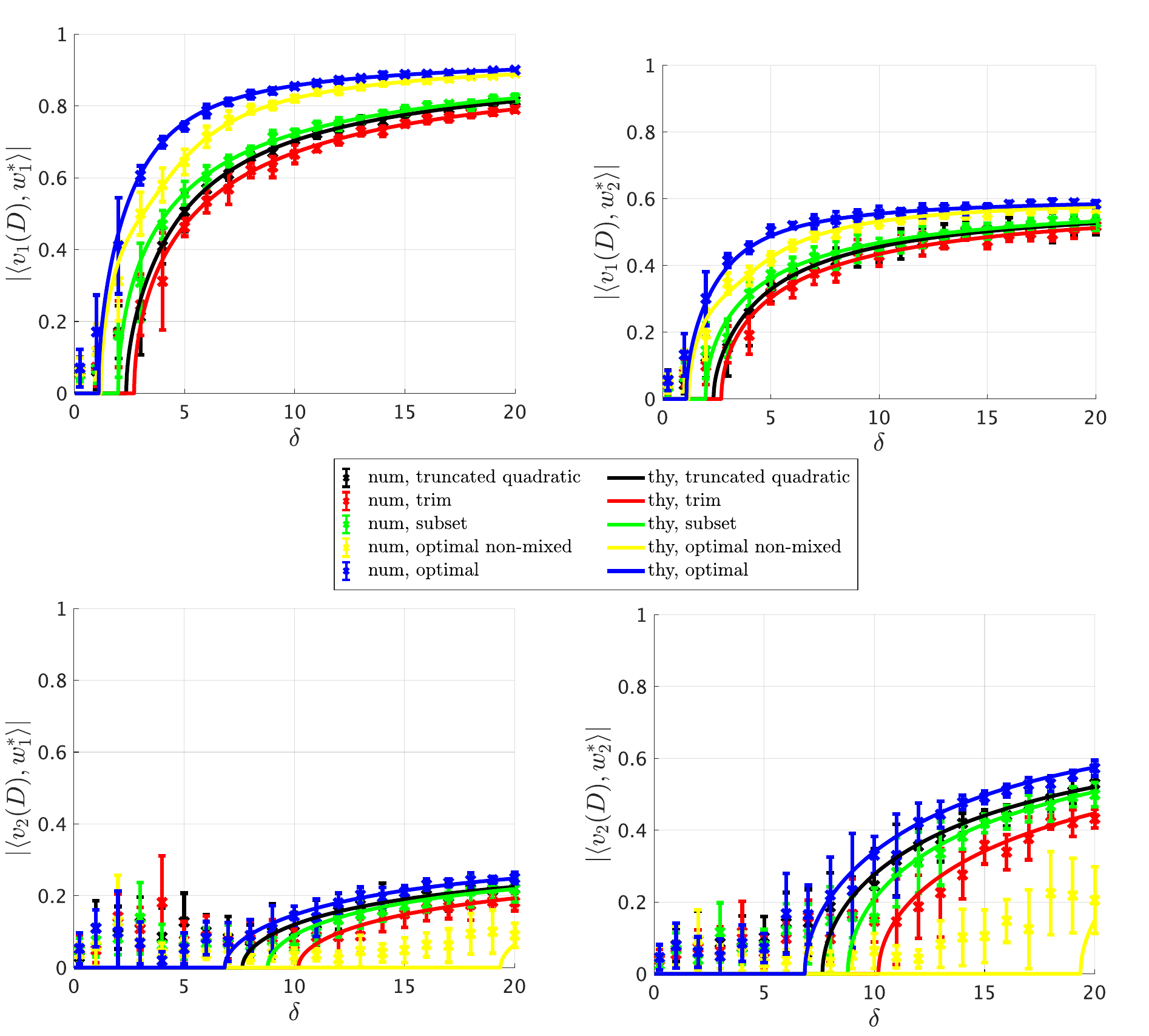}
    \caption{$ q(\xi_1, \xi_2, \eps) = \abs{\xi_\eps} $.
    Overlaps $ \brace{ \abs{\inprod{v_i^D}{w_j^*}} : 1\le i,j\le 2 } $ and theoretical predictions from \Cref{thm:main} are plotted as a function of $\delta$ for five preprocessing functions. 
    Our optimal preprocessing function in \Cref{eqn:T_mpr} attains both the lowest weak recovery threshold (as ensured by \Cref{thm:opt}) and the highest overlap. 
     }
    \label{fig:mpr}
 
\end{figure}

A few remarks 
on \Cref{fig:mpr,fig:prod} are in order. First, 
the numerical and theoretical results exhibit accurate agreement even for 
$ d = 1500 $, suggesting a rapid rate of convergence. 
%
Second, for mixed phase retrieval, 
the mixing effect and the correlation between signals 
have crucial effects on the design of the optimal preprocessing function. 
    Naively applying the optimal preprocessing function for \emph{non-mixed} phase retrieval results in poor performance of $ v_2^D $ which achieves positive overlap with either signal only if $ \delta > 19 $ (see the bottom row of \Cref{fig:mpr}). 
    In contrast, the weak recovery threshold of $ v_2^D $ resulting from our proposed choice of preprocessing 
    is less than $7$. 
%
 Third,  
while our choice of preprocessing is designed to minimize 
the weak recovery threshold of $ v_1^D $, 
the performance is also competitive in terms of 
overlaps, outperforming all alternatives.  %

\section{Proof techniques}\label{sec:pftec}

\paragraph{Equivalent spectral characterization.}

Let us write 
$a_i = \matrix{ s_i& u_i }^\top$, with $s_i\iid \cN(0,I_p)$ and $u_i\iid \cN(0,I_{d-p})$. Thus, as $w_i^* \in \spn\{e_1^{(d)},\dots,e_p^{(d)}\}$ (see \Cref{eqn:W*}), $y_i$ only depends on $s_i$ and, more precisely, 
    $y_i = 
    q\paren{ \inprod{s_i}{w^*_1}, \cdots, \inprod{s_i}{w^*_p}, \eps_i }$, 
with the abuse of notation that the $s_i$'s are now vectors in $\bbR^d$ with the last $d-p$ coordinates set to $0$.
Extending the notation to matrices, we have that $A  = \matrix{S& U},$
where $ S \coloneqq \matrix{ s_1 & \cdots & s_n }^\top \in \bbR^{n\times p} $ and $U \coloneqq \matrix{ u_1 & \cdots & u_n }^\top \in \bbR^{n\times (d-p)}$. Thus, we can re-write the spectral matrix $D_n$ in \eqref{eqn:D} as 
\begin{align}\label{eq:Ddef}
    D_n = \frac{1}{n}A^\top Z A  &= \frac{1}{n} \matrix{ S^\top \\ U^\top} Z  \matrix{ S & U}
                                 = \frac{1}{n} \matrix{ S^\top Z S & S^\top Z U\\
                                                         U^\top Z S & U^\top Z U}=  \matrix{ a & q^\top\\ 
                                             q & P}, 
\end{align}
where $a\coloneqq \frac{1}{n}S^\top Z S \in \bbR^{p\times p}$, $q \coloneqq \frac{1}{n}U^\top Z S\in \bbR^{(d-p)\times p}$ and $P \coloneqq \frac{1}{n }U^\top Z U\in \bbR^{(d-p)\times (d-p)}$.

\begin{SCfigure}[50][h]
    \centering
    \includegraphics[width=0.4\textwidth]{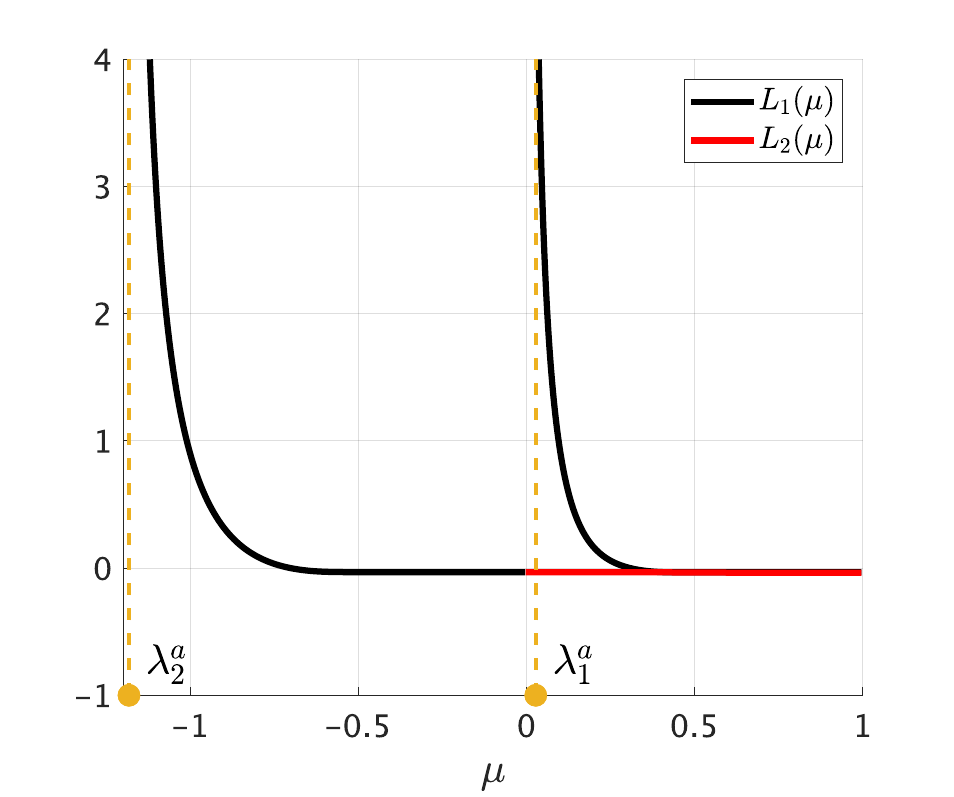}

    \caption{Plot of the functions $ \wt{L}_1, \wt{L}_2 $ (as well as $L_1, L_2$) defined in \Cref{eq:defLtilde} for the model $ y_i = \inprod{a_i}{w_1^*} \inprod{a_i}{w_2^*} $ ($1\le i\le n$), where $ w_1^*, w_2^* \in\bbR^{1500} $ are orthogonal unit vectors and $ n = 5d $. The preprocessing function is taken to be the optimal one given in \Cref{eqn:T_prod}. }
    \label{fig:L}
\end{SCfigure}

We start by working with a generic matrix $D$ of the form in the RHS of \eqref{eq:Ddef},
such that $a, P$ 
are symmetric 
and, for all eigenvectors $v_i^a$ of $a$, it holds that $qv_i^a\neq 0$ (this is the case for the matrix $D_n$ in \Cref{eq:Ddef} as showed in \Cref{lemma:eignon} in \Cref{sec:auxpf}). For simplicity, we assume that all eigenvalues of $a$ are different (if any multiplicity exists, notation and exposition can easily be adjusted accordingly).
Let $L_i:\bbR\setminus\Lambda^a\to \bbR$ (for $i\in [d-p]$) and $\tilde{L}_i: ]\lambda_{i}^a,+\infty[\setminus \Lambda^a\to\bbR$ (for $i\in [p]$) be defined as 

\begin{equation}\label{eq:defLtilde}    
L_i(\mu) \coloneqq \lambda_i(P-q(a-\mu I_p)^{-1}q^\top),\qquad \tilde{L}_i(\mu) \coloneqq
\begin{cases}
L_1(\mu) & \text{if } \lambda_i^a < \mu < \lambda_{i-1}^a, \\
\quad \vdots\\
L_i(\mu) & \text{if } \lambda_1^a < \mu < +\infty, 
\end{cases}
\end{equation}
see \Cref{fig:L} for a plot. 
The $\tilde{L}_i$'s are piecewise continuous, since the $L_i$'s are continuous in the respective domains. Furthermore, 
the domain of $\tilde{L}_i$ can be continuously extended to $\Lambda^a$ (\Cref{lemma:samelimits} in \Cref{sec:auxpf}), and  $\tilde{L}_i(\mu)$ is non-increasing in this domain (\Cref{lem:eiglimits} in \Cref{sec:auxpf}).

We use the functions in \Cref{eq:defLtilde} to characterize the top $p$ eigenvalues $(\lambda_i^D)_{i\in [p]}$ of the matrix $D$. Specifically, \Cref{prop:eigvalrec} in \Cref{sec:auxpf} shows that, for $i\in[p]$, $\lambda_i^D$ is the unique solution to
    \begin{equation}\label{eq:eigvalrec-body}
        \tilde{L}_i(\mu) = \mu,\qquad \mbox{for }\mu\in ]\lambda_i^a,\infty[.
    \end{equation}  
Moving to eigenvectors, we write $v_i^D\coloneqq\matrix{h_i\\ g_i}$ ($h_i\in \bbR^{p}$, $g_i\in \bbR^{d-p}$) 
and characterize $h_i$ in terms of the function $R(\lambda)\coloneq a - q^\top(P-\lambda I_{d-p})^{-1}q$ defined for $\lambda\in]\lambda_1^p,+\infty[$. 
Specifically, \Cref{prop:eigenvec} in \Cref{sec:auxpf} shows that, for all $i$ s.t.\ $\lambda_i^D>\lambda_1^P$,
    \begin{equation}\label{eq:eigenvec-body}
    h_i = \frac{\tilde{h}_i}{\sqrt{1-\tilde{h}_i^\top\frac{d}{d\lambda}R(\lambda_i^D)\tilde{h}_i}},
    \end{equation}
    where $\tilde{h}_i$ is the 
    eigenvector of $R(\lambda_i^D)$ and $\frac{d}{d\lambda}R(\lambda)$ is the entry-wise derivative of 
    $R(\lambda)$. Finally, \Cref{lemma:matrixRchar} in \Cref{sec:auxpf} 
    expresses the entries of $R(\lambda)$ and $\frac{d}{d\lambda}R(\lambda)$ in terms of 
    \begin{equation}\label{eq:auxfun}
    \cLi(\mu)\coloneqq \lambda_1(P+\mu q_iq_i^\top),\qquad  \cL_{i,j}(\mu)\coloneqq \lambda_1(P+\mu (q_i+q_j)(q_i+q_j)^\top),
    \end{equation}
where $q_i$ is the $i$-th column of $q$.

In summary, the eigenvalues of $D$ are given by the fixed points of $\tilde L_i$ (see \Cref{eq:eigvalrec-body}); the eigenvectors are computed in \Cref{eq:eigenvec-body} via the $p\times p$ matrices $R(\lambda)$ and $\frac{d}{d\lambda}R(\lambda)$, which are in turn related to $\cLi, \cL_{i,j}$ (see \Cref{eq:auxfun}). The rationale for this equivalent spectral characterization is that $\tilde L_i, \cLi, \cL_{i,j}$ are all low-rank perturbations of $P$ (the first is rank-$p$, the other two are rank-$1$). Thus, when considering the proportional asymptotic regime ($n, d\to\infty$ with $n/d$ and $p$ fixed), we can leverage the wide random matrix theoretic literature on low-rank perturbations and, specifically, results from \cite{bai-yao-2012}. As a final, more technical note, the diagonal entries of $R(\lambda)$ and $\frac{d}{d\lambda}R(\lambda)$ are given by
\begin{equation}\label{eq:matrixRminusonediag-body}
R(\lambda)_{i,i} = a_{i,i}+\frac{1}{\cLi^{-1}(\lambda)},\qquad 
        \frac{d}{d\lambda}R(\lambda)_{i,i} = -\frac{1}{\paren{\cLi^{-1}(\lambda)}^2\cLi'(\cLi^{-1}(\lambda))},      
\end{equation}
which is reminiscent of the expressions for the single-index case, see \cite[Lemma 3.1]{lu2020phase}. In contrast, the off-diagonal entries are more cumbersome (see \Cref{eq:matrixRminusoneoffdiag}), and they are linked to the multi-index nature of the model.

\paragraph{Sketch of the proof of \Cref{thm:eigvalconv}.}
    We start with some manipulations: \eqref{eq:master_eq2} is equivalent to $\prod_{i=1}^p( \zetadelta(\alpha)-\lambda_i^\infty(\alpha))=0$, where $\lambda_i^\infty (\alpha)$
is the $i$-th largest eigenvalue of $ R^\infty(\alpha) $. Thus, the assumption of the theorem implies that $\alpha_i$ is the unique solution to 
    $\zetadelta(\alpha) - \lambda_i^\infty(\alpha)=0$.
    Let $\mu_i^*\coloneq {\lambda_i^\infty}(\alpha_i) $ and $\tildeLiinfty(\mu)\coloneq\zeta_\delta\paren{(\lambda^\infty_i)^{-1}(\mu)}$. Then,  $\mu_i^*$ is the unique solution to 
    $\tildeLiinfty(\mu)-\mu=0$.
    
    Next, recall that $\lambda_i^D$ is the unique solution to \eqref{eq:eigvalrec-body}. \Cref{prop:tildeLiconv} in \Cref{app:asymptotbehav} establishes the limit of $\tildeLi$, by relying on the classical results by \cite{bai-yao-2012} adapted to the analysis of single-index models in \cite{lu2020phase,mondelli-montanari-2018-fundamental}. Formally, this gives that 
    \begin{equation}\label{eq:asconvtildeli-body}
        \tildeLi(\mu) - \mu \asconv\tildeLiinfty(\mu) - \mu.
    \end{equation}
    As $\mu_i^*$ is the unique solution to 
    $\tildeLiinfty(\mu)-\mu=0$, we conclude that 
    \begin{equation}\label{eq:fplast-body}
\lambda_i^D\asconv\tildeLiinfty(\mu_i^*).
    \end{equation}
    Substituting $\mu_i^* = {\lambda_i^\infty}(\alpha_i)$ in \Cref{eq:fplast-body} gives the desired result in \eqref{eq:outsidebulkeigconv} for the top-$j$ eigenvalues.
    
The passages to show the claim for the remaining $p-j$ eigenvalues are similar. As \eqref{eq:master_eq2} has only $j$ solutions by assumption, $\zetadelta(\alpha) - \lambda_i^\infty(\alpha) = \tildeLiinfty(\lambda_i^\infty(\alpha))-\lambda_i^\infty(\alpha)=0$
    has no solutions for $\alpha>\tau$ and $i> j$. Thus, $\tildeLiinfty(\mu)-\mu=0$
    has no solutions for $\mu \in ]\lambda_i^{a^\infty},t_i^\infty[$, with $t_i^\infty\coloneq \lim_{\alpha\to\tau} \lambda^\infty_i(\alpha)$. This in turn implies that, for large enough $n$, the solution to 
    $\tildeLi(\mu) - \mu =0 $
    must be for $\mu>t_i^\infty$.
    
    At this point, we use again \Cref{prop:tildeLiconv} to show that, for $\mu>t_i^\infty$, 
    $\tildeLi(\mu)\asconv \zetadelta(\lambdabardelta),$
    which implies that 
    $\lambda_i^D\asconv \zetadelta(\lambdabardelta)$
    for $i\in\{j+1, \ldots, p\}$, thus proving the desired result. The detailed proof is in \Cref{app:pf2}.
    
\paragraph{Sketch of the proof of \Cref{thm:main}.}
As for the eigenvectors, we start with an asymptotic characterization of  $R(\lambda)$ and $\frac{d}{d\lambda}R(\lambda)$. Specifically, \Cref{thm:matrixRconvergence} in \Cref{app:auxeig} shows that
\begin{equation}\label{eq:matrixRconv-body}
         R(\lambda_k^D)\asconv R^\infty(\alpha_k),\qquad \frac{d}{d\lambda}R(\lambda_k^D)\asconv \frac{1}{\zetadelta'(\alpha_k)}\frac{d}{d\alpha}R^\infty(\alpha_k),
     \end{equation}
     where $\alpha_k$ is the $k$-th largest solution of \Cref{eq:master_eq2} and $\alpha_k>\lambdabardelta$. To obtain \Cref{eq:matrixRconv-body}, the idea is to analyze the functions in \Cref{eq:auxfun}, 
as $n,d\to \infty$. We do so by using results from \cite{bai-yao-2012}, which give
\begin{equation}\label{eq:sketch1}
\cLi(\mu)\asconv \zetadelta \circ {Q_i}^{-1}\circ G(\mu),\qquad \cL_{i,j}(\mu)\asconv \zetadelta \circ {Q_{i,j}}^{-1}\circ G(\mu),
\end{equation}
where $G(\mu) = -1/\mu$, $Q_i(\alpha) \coloneq \expt{\frac{s_i^2z^2}{z-\alpha}}$ and $Q_{i,j}(\alpha) \coloneq \expt{\frac{(s_i+s_j)^2z^2}{z-\alpha}}$. 
As $\alpha_k>\lambdabardelta$, \Cref{thm:eigvalconv} guarantees that $\lambda_k^D\asconv  \zetadelta(\alpha_k)$ and therefore
\begin{equation}\label{eq:sketch2}
\cLi^{-1}(\lambda_k^D)\asconv G \circ Q_i \circ \zetadelta^{-1} \circ \zetadelta(\alpha_k) = G \circ Q_i (\alpha_k).
\end{equation}
By the law of large numbers, $a\asconv a^\infty \coloneq \expt{zss^\top}$. Thus, by combining \Cref{eq:matrixRminusonediag-body,eq:sketch1,eq:sketch2} we conclude
\begin{equation}\label{eq:tempdiagconv-body}
    R(\lambda_k^D)_{i,i}\asconv a^{\infty}_{i,i}-Q_i (\alpha_k)= \expt{s_i^2z} -  \expt{\frac{s_i^2z^2}{z-\alpha_k}} = \expt{\frac{\alpha_k s_i^2z}{\alpha_k-z}} = R^\infty(\alpha_k)_{i,i}.
\end{equation}
Moreover, $\cLi(\mu)$ is differentiable (see \Cref{lemma:matrixRchar} in \Cref{sec:auxpf}), so for its derivative it holds that
$$\cLi'(\mu)\asconv \zetadelta' \circ {Q_i}^{-1}\circ G(\mu) \cdot \paren{Q_i^{-1}}'\circ G(\mu) \cdot G'(\mu).$$
Plugging this into \eqref{eq:matrixRconv-body} we get 
$$\frac{d}{d\lambda}R(\lambda_k^D)_{i,i}\asconv \frac{\frac{d}{d\alpha}(R^\infty(\alpha_k)_{i,i})}{\zetadelta'(\alpha_k)}.$$
By performing similar calculations on the off-diagonal entries of $R(\lambda)$ and $\frac{d}{d\lambda}R(\lambda)$, \Cref{eq:matrixRconv-body} follows (the complete proof is deferred to \Cref{app:auxeig}).

We are now ready to prove \Cref{eq:liminfconv_mult} when there is no multiplicity ($m=1$). Having a multiplicity adds technical complications (and also the assumption on the invariance of the eigenspace $E_k^\infty$), which are handled in \Cref{app:pfthmmain} where we also prove \Cref{eq:liminfconv,eq:liminfconv-conv}. 


First, note that the desired overlaps can be expressed as 
\begin{equation}\label{eq:innerp}    
\abs{\inprod{v_k^D}{e_j^{(d)}}}^2 = (e_j^{(p)})^\top h_kh_k^\top (e_j^{(p)}).
\end{equation}
Furthermore, recall that \emph{(i)} $h_k$ is related to the unit norm eigenvector $\tilde{h}_k$ of $R(\lambda_k^D)$ and to $\frac{d}{d\lambda}R(\lambda_k^D)$ via \Cref{eq:eigenvec-body}, and that \emph{(ii)} the limits of $R(\lambda_k^D)$ and to $\frac{d}{d\lambda}R(\lambda_k^D)$ are given by \Cref{eq:matrixRconv-body}. 
    Then, applying the results from \cite[II.1.4]{kato2013perturbation}, 
    the orthonormal projection to the eigenspace corresponding to the $k$-th eigenvalue also converges, i.e., 
    $\Pi_{h_k} \asconv \Pi_{h^\infty_k},$
    where $\Pi_{h_k} = \frac{h_kh_k^\top}{\norm{2}{h_k}^2} = \tilde{h}_k\tilde{h}_k^\top$ and $\Pi_{h^\infty_k} = \frac{h^\infty_k{h^\infty_k}^\top}{\norm{2}{h^\infty_k}^2}=h^\infty_k{h^\infty_k}^\top$. As a consequence, we have that
    $$\norm{2}{h_k}=\frac{1}{\sqrt{1+\tilde{h}_k^\top\frac{d}{d\lambda}R(\lambda_k^D)\tilde{h}_k}}\asconv \frac{\sqrt{\zetadelta'(\alpha_k)}}{\sqrt{\zetadelta'(\alpha_k)+{h^\infty_k}^\top\frac{d}{d\lambda}R^\infty(\alpha_k)h^\infty_k}}.$$
    Combining these results, we obtain that
    $$h_kh_k^\top\asconv \frac{\zetadelta'(\alpha_k)h^\infty_k {h^\infty_k}^\top}{\zetadelta'(\alpha_k)+{h^\infty_k}^\top\frac{d}{d\lambda}R^\infty(\alpha_k)h^\infty_k},$$
    which, together with \Cref{eq:innerp}, proves the claim.


\section{Concluding remarks} \label{sec:conc}

This paper provides the first asymptotic characterization of spectral estimators for multi-index models: we unveil a phase transition in the top-$p$ eigenvalues of the spectral matrix $D$ in \Cref{eqn:D}, giving a low-dimensional (and simple to check) condition for spikes to emerge from the bulk of the spectrum; the eigenvalue phase transition is associated to the recovery of the subspace spanned by the signals via the corresponding eigenvectors and, under some technical conditions, we prove a precise expression for the asymptotic overlap; finally, we optimize the data preprocessing and identify the spectral estimator that weakly recovers the signal subspace with the smallest sample complexity. 

Spectral methods are commonly used as a warm start for other procedures, often of iterative nature. Thus, our analysis provides the starting point to combine spectral estimators either with a simple linear estimator \cite{mondelli2022optimal,mixed-zmv-arxiv}  or with AMP \cite{Mondelli_Venk}, with the objective of achieving -- at least in absence of statistical-to-computational gaps -- the Bayes-optimal limits of inference as characterized by \citet{Aubin_comm_machine}.

\section*{Acknowledgements}

This work was done when Y.\ Z.\ was at the Institute of Science and Technology Austria. 
Y.\ Z.\ and M.\ M.\ are funded by the European Union (ERC, INF$^2$, project number 101161364). Views and opinions expressed are however those of the author(s) only and do not necessarily reflect those of the European Union or the European Research Council Executive Agency. Neither the European Union nor the granting authority can be held responsible for them. The authors would like to acknowledge (in alphabetical order) discussions with Yatin Dandi, Leonardo Defilippis and Bruno Loureiro concerning their parallel work \cite{defilippis2025optimal}.

\bibliographystyle{alpha} 
\bibliography{ref} 

\newpage

\appendix

\crefalias{section}{appendix}

\section{Formal statements and proofs for the equivalent spectral characterization}\label{sec:auxpf}

We start with two auxiliary results.

\begin{lemma}\label{lemma:fr}
    Under \Cref{asmp:T}, almost surely for all sufficiently large $n$ it holds that $$\rk(ZS) = p.$$
\end{lemma}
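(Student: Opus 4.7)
The plan is to reduce the rank statement to a genericity claim for i.i.d.\ samples from an absolutely continuous distribution. Since $Z$ is diagonal, the $i$-th row of $ZS$ equals $z_i s_i^\top$; so letting $I \coloneqq \{i \in [n] : z_i \ne 0\}$ one has $\rk(ZS) = \dim \spn\{s_i : i \in I\}$, and it suffices to exhibit, for all $n$ sufficiently large, $p$ indices in $I$ whose corresponding $s_i$'s are linearly independent in $\bbR^p$.

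Next I would use \Cref{asmp:T} to deduce $p_0 \coloneqq \prob{z \ne 0} > 0$. Since the pairs $(s_i, \eps_i)$ are i.i.d., so are the indicators $\indicator{z_i \ne 0}$, and the strong law of large numbers yields $|I|/n \asconv p_0$; in particular $|I| \to \infty$ almost surely. Conditioning on the $\sigma$-algebra generated by $(\indicator{z_i \ne 0})_{i=1}^n$ and using independence of $(s_i, \eps_i)$ across $i$, the sub-collection $(s_i, \eps_i)_{i \in I}$ is (conditionally) i.i.d.\ from the law of $(s, \eps)$ given $z \ne 0$. Writing $G(s) \coloneqq \prob{z \ne 0 \mid s}$ and letting $\phi$ denote the standard Gaussian density on $\bbR^p$, the $s$-marginal of this conditional law has density proportional to $\phi(s)\, G(s)$.

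By Fubini, $\expt[s]{G(s)} = p_0 > 0$, so $G > 0$ on a set of positive Lebesgue measure; thus the conditional law of $s$ given $z \ne 0$ is absolutely continuous with respect to Lebesgue measure on $\bbR^p$, and it assigns zero mass to every proper linear subspace (each of which has Lebesgue measure zero). Consequently, if $\tilde s_1, \dots, \tilde s_p$ are i.i.d.\ draws from this conditional law, the event that the matrix with columns $\tilde s_1, \dots, \tilde s_p$ has vanishing determinant is contained in a polynomial variety of Lebesgue measure zero in $\bbR^{p \times p}$, and hence has probability zero. Selecting the first $p$ indices of $I$ therefore gives $\rk(ZS) = p$ almost surely for all sufficiently large $n$.

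I do not foresee a serious obstacle: the only mildly delicate point is conditioning on the random set $I$, which follows from standard manipulations with the $\sigma$-algebra generated by $(\indicator{z_i \ne 0})_i$ together with the i.i.d.-ness of the sample; the remainder is a routine measure-theoretic genericity argument.
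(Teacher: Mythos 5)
Your proof is correct, and at the level of structure it matches the paper's: first argue that at least $p$ of the $z_i$'s are nonzero, then show the corresponding rows of $S$ are linearly independent. The substantive difference lies in how the second step is handled. The paper's justification is terse and arguably imprecise — it speaks of the ``columns of $S$'' being ``almost surely independent'' without addressing that the set of indices $i$ with $z_i \neq 0$ is itself a function of the $s_i$'s through $y_i$, so one is selecting rows based on a quantity correlated with those rows. Your conditioning argument (pass to the $\sigma$-algebra generated by $(\indicator{z_i \ne 0})_i$, under which the sub-sample $(s_i)_{i\in I}$ becomes conditionally i.i.d.\ from an absolutely continuous law, and then invoke the measure-zero determinant variety) cleanly and correctly dissolves this dependency, and has the additional virtue of not relying on Gaussianity beyond absolute continuity. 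A shorter route, probably closer to what the paper intends, is to observe that for each fixed $n$ the Gaussian matrix $S_n$ almost surely has the property that \emph{every} $p$-subset of its rows is linearly independent (a finite union of measure-zero determinant events), and this holds for all $n$ simultaneously by countability; that observation makes the randomness of the index set $I$ harmless without any conditioning. Both are valid; yours is the more careful and more portable of the two, at the cost of a slightly longer argument.
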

\begin{proof}
 Notice first that, for all sufficiently large $n$, there are almost surely at least $p$ elements in the array $\brack{z_1 ,z_2 ,\dots}$ that are non-zero. This follows from \Cref{asmp:T} that $\prob{Z = 0}<1$, as done in the proof of \cite[Proposition 3.2]{lu2020phase}. Now, the $i$-th column of $ZS$ is obtained by scaling a standard $p$ dimensional Gaussian by $z_i$. As the columns of $S$ are almost surely independent and, for sufficiently large $n$, at least $p$ elements $z_i$'s are non-zero, it must be that at least $p$ columns of $ZS$ are linearly independent, which proves the claim. 
\end{proof}

\begin{lemma}\label{lemma:eignon}
    For every eigenvector $v_i^a$, it holds almost surely that 
    $$qv_i^a\neq 0.$$
\end{lemma}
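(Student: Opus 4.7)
The plan is to reduce the statement to showing that $U^\top(ZSv_i^a)\neq 0$ almost surely, and then exploit the independence of $U$ from everything that determines the vector it is multiplying. Recall from the decomposition $A=[S\ \ U]$ that $U\in\bbR^{n\times(d-p)}$ has i.i.d.\ standard Gaussian entries and is independent of $S$, while $\eps$ is independent of $A$ by \Cref{asmp:noise}. Consequently $Z=\diag(\cT(y_1),\ldots,\cT(y_n))$ is a measurable function of $(S,\eps)$ only, and so is $a=\tfrac1n S^\top Z S$ and any eigenvector $v_i^a$ of it; in particular $v_i^a$ is independent of $U$.

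The first step will be to argue that $w\coloneqq ZSv_i^a\in\bbR^n$ is almost surely nonzero. By \Cref{lemma:fr}, for all sufficiently large $n$ we have $\rk(ZS)=p$ almost surely, so $ZS\colon\bbR^p\to\bbR^n$ is injective; since $v_i^a$ has unit norm (in particular is nonzero), this forces $w\neq 0$ on the same event.

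The second step will be a standard conditioning argument. Conditional on $(S,\eps)$, both $w$ and $v_i^a$ are fixed, while $U$ is still an i.i.d.\ standard Gaussian matrix. Hence $U^\top w=\sum_{k=1}^n w_k U_{k,\cdot}^\top$ is a centred Gaussian vector in $\bbR^{d-p}$ with covariance $\normtwo{w}^2 I_{d-p}$. On the event $\{w\neq 0\}$ this distribution has a density, so $\prob{U^\top w=0 \mid S,\eps}=0$, and therefore $qv_i^a=\tfrac1n U^\top w\neq 0$ almost surely. A union bound over the finitely many $i\in[p]$ upgrades the statement to all eigenvectors simultaneously.

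I do not anticipate any real obstacle: the only point that requires a little care is keeping the order of conditioning straight, namely using that $v_i^a$ depends on $(S,\eps)$ but not on $U$ before invoking the Gaussianity of $U^\top w$ given $(S,\eps)$. Everything else is a direct consequence of \Cref{lemma:fr} and the basic fact that a non-degenerate Gaussian puts no mass at the origin.
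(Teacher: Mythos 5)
Your proof is correct and follows essentially the same approach as the paper's: reduce to $U^\top(ZSv_i^a)\neq 0$, invoke \Cref{lemma:fr} to get $ZSv_i^a\neq 0$ almost surely, condition on $(S,\eps)$ so that $U$ is still Gaussian and independent, conclude that a non-degenerate Gaussian assigns zero mass to the origin, and finish with a union bound over $i\in[p]$. The only cosmetic difference is that you compute the full Gaussian covariance of $U^\top w$ while the paper factorizes coordinate-by-coordinate; both arguments are the same conditioning step.
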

\begin{proof}
    By definition, it holds that $v_i^a\neq 0$. Thus, \Cref{lemma:fr} implies that almost surely $Z S v^a_i \neq 0$.
    Furthermore, the elements of the matrix $U$ are sampled independently from $Z$ and $S$, so we can fix $x_i\coloneqq ZSv^a_i \neq 0$ and conclude 
    $$\prob{U^\top x_i=0} = 0,$$
    for the probability measure associated to the elements of $U$. This is due to the fact that, for $x_i\neq 0$, 
    $$\prob{U^\top x_i=0}=\prob{\forall \,j\in [d-p],\ \inprod{u^j}{x_i} = 0} = \prod_{i=1}^{d-p}\prob{\inprod{u_j}{x_i} = 0}=0,$$
    as each $u_j\in\bbR^n$ is sampled from a multi-variate Gaussian. 
    Lastly, using the union bound, we have 
    $$\prob{\exists \,i\in [p], U^\top x_i=0} \leq \sum_{i=1}^p \prob{U^\top x_i=0}=0,$$
    implying  that almost surely $qv^a_i\neq 0$ for every eigenvector of the matrix $a$. 
\end{proof}

For $i$ such that $p+1\leq i\leq d-p$, let us define
\begin{equation}\label{eq:defLtildeother}    
\tilde{L}_i(\mu) \coloneqq
\begin{cases}
L_{i-p}(\mu) & \text{if } \lambda_p^a < \mu < \lambda_{p-1}^a, \\
\quad \vdots\\
L_i(\mu) & \text{if } \lambda_1^a < \mu < +\infty,
\end{cases}
\end{equation}
where we restrict the domain to $\mu \in ]\lambda_{p}^a,+\infty[\setminus \Lambda^a$. This complements the definition of $\tilde{L}_i(\mu)$ for $i\in [p]$ in \Cref{eq:defLtilde}. 
For $1\leq i\leq d-p$, all functions $\tilde{L}_i$ can be continuously extended, as proved below. 
\begin{lemma}\label{lemma:samelimits}
    For $i \in[p]$ and $j\geq 2$, it holds that
    $$\lim_{\mu_1\to{\lambda^a_i}^+}L_j(\mu_1) = \lim_{\mu_2\to{\lambda^a_i}^-}L_{j-1}(\mu_2).$$
\end{lemma}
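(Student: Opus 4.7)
My plan is to reduce the claim to a classical fact about rank-one perturbations of symmetric matrices. Spectrally decomposing $a = \sum_{k=1}^p \lambda_k^a v_k^a (v_k^a)^\top$ and writing $r_k := q v_k^a$, I can first express
$$q(a - \mu I_p)^{-1} q^\top \;=\; \sum_{k=1}^p \frac{r_k r_k^\top}{\lambda_k^a - \mu},$$
and then isolate the singular $k=i$ summand to rewrite the matrix of interest as
$$M(\mu) \;:=\; P - q(a - \mu I_p)^{-1} q^\top \;=\; M_0(\mu) \,+\, \frac{1}{\mu - \lambda_i^a}\, r_i r_i^\top,$$
where $M_0(\mu) := P - \sum_{k\neq i} \tfrac{r_k r_k^\top}{\lambda_k^a - \mu}$ is continuous at $\mu = \lambda_i^a$. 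Note that $r_i \neq 0$ almost surely by \Cref{lemma:eignon}, and I would adopt the same simplification made before \Cref{eq:defLtilde} (all eigenvalues of $a$ distinct), so that no other summand is singular at $\mu = \lambda_i^a$.

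Next, with the reparametrization $t := 1/(\mu - \lambda_i^a)$, I would study the eigenvalues of $M_0^* + t\, r_i r_i^\top$ as $t \to \pm\infty$, where $M_0^* := M_0(\lambda_i^a)$. This is a classical rank-one perturbation problem, governed by the secular equation
$$1 + t\, r_i^\top (M_0^* - \lambda I_{d-p})^{-1} r_i \;=\; 0,$$
or equivalently by Cauchy-type interlacing: exactly one eigenvalue escapes the spectrum of $M_0^*$ and diverges as $|t| \to \infty$, while the remaining $d-p-1$ eigenvalues converge, in \emph{both} limits $t \to \pm\infty$, to the common $(d-p-1)$ zeros $\gamma_1 \geq \cdots \geq \gamma_{d-p-1}$ of the secular function; these coincide with the spectrum of the compression of $M_0^*$ onto the orthogonal complement of $r_i$.

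The conclusion is then a matter of bookkeeping indices. As $\mu \to (\lambda_i^a)^+$ one has $t \to +\infty$, so the divergent eigenvalue is the \emph{largest} ($L_1 \to +\infty$) and the remaining finite ones match via $L_j(\mu) \to \gamma_{j-1}$ for $j \geq 2$. Symmetrically, as $\mu \to (\lambda_i^a)^-$ one has $t \to -\infty$, so the divergent eigenvalue is the \emph{smallest} ($L_{d-p} \to -\infty$), and for $j \geq 2$ the finite eigenvalue $L_{j-1}(\mu) \to \gamma_{j-1}$. Comparing the two expressions gives the desired equality.

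The step requiring most care, and the one I expect to be the main obstacle, is the clean identification of the index shift between the two one-sided limits. It is ultimately forced by the fact that the diverging eigenvalue occupies the top slot on the right-limit side but the bottom slot on the left-limit side, leaving the same ordered tuple $(\gamma_1, \dots, \gamma_{d-p-1})$ in positions $2, \dots, d-p$ of one spectrum and in positions $1, \dots, d-p-1$ of the other. This is precisely the off-by-one shift asserted by the lemma, and it falls out cleanly from the rank-one perturbation picture once the singular direction $r_i$ is isolated.
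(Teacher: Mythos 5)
Your plan is correct, and it takes a genuinely different route from the paper's proof. The paper argues by a direct two-sided squeeze: it applies Weyl's inequality once to bound $L_j(\mu_1) \leq L_{j-1}(\mu_2) + \lambda_2(M_{\mu_1}-M_{\mu_2})$, then applies Weyl again on each side after inserting the resolution $I = \Pi_r + \Pi_{r^\perp}$ (with $r \propto q v_i^a$), showing that the cross and $\Pi_r$-compressed terms contribute either $0$ or a divergence of the correct sign. Your route instead peels off the singular rank-one summand $\tfrac{1}{\mu-\lambda_i^a} r_i r_i^\top$, replaces the regular part $M_0(\mu)$ with its value $M_0^*$ at the pole (this substitution costs $\|M_0(\mu)-M_0^*\|$ uniformly in the spectrum by Weyl stability and vanishes in the limit — you leave this implicit but it is routine), and reparametrizes to $t = 1/(\mu-\lambda_i^a)$ so the problem becomes the $t\to\pm\infty$ asymptotics of the rank-one pencil $M_0^* + t r_i r_i^\top$. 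From there the classical picture — one escaping eigenvalue, the rest converging to the spectrum of the compression $\Pi_{r_i^\perp} M_0^* \Pi_{r_i^\perp}$ — produces exactly the off-by-one index shift the lemma asserts. Both arguments land on the same limiting object (the compression onto $r_i^\perp$), which the paper records in its displayed formula at the end of the proof; what you gain is modularity (invoke a standard rank-one result rather than redo the estimates), while the paper's version is more self-contained and handles the two one-sided limits symmetrically without the change of variables. One small sanity check worth recording explicitly in a full write-up: you need $r_i = q v_i^a \neq 0$ for the rank-one piece to actually be nontrivial, which is supplied by the paper's Lemma on $q v_i^a \neq 0$, and which you correctly cite.
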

\begin{proof}
To simplify exposition, let us denote by $M_\mu\coloneqq P-q(a-\mu I_p)^{-1}q^\top$, as well as by $r=\frac{qv_i^a}{\norm{2}{qv_i^a}}$ which is well defined as $qv_i^a\neq0$.
Using Weyl's inequality (see e.g. \cite[Section 4.3]{horn2012matrix}), one has
$$\lambda_{j}(M_{\mu_1}) \leq \lambda_{j-1}(M_{\mu_2}) + \lambda_2\paren{M_{\mu_1}-M_{\mu_2}}.$$
Note that $M_{\mu_1}-M_{\mu_2} = q\paren{(a-\mu_2 I_p)^{-1}-(a-\mu_1 I_p)^{-1}}q^\top$ and that the SVD decomposition of $(a-\mu I_d)^{-1}$ is
$$(a-\mu I_p)^{-1} = \sum_{k=1}^p \frac{v_k^av_k^{a\top}}{\lambda_k^a - \mu}.$$
Plugging that in, we get
$$(a-\mu_2 I_p)^{-1} - (a-\mu_1 I_p)^{-1} = v_i^av_i^{a\top}\paren{\frac{1}{\lambda_i^a - \mu_2}+\frac{1}{\mu_1-\lambda_i^a}}+\sum_{k\neq i}^p v_k^av_k^{a\top}\paren{\frac{\mu_2-\mu_1}{(\lambda_k^a - \mu_2)(\lambda_k^a - \mu_1)}}.$$ 
Taking the limits $\mu_1\to{\lambda^a_i}^+$ and $\mu_2\to{\lambda^a_i}^-$, one readily obtains that $\lambda_2(M_{\mu_1}-M_{\mu_2}):=\varepsilon_{\mu_2}\to 0$. 
Before continuing, let us denote by $\Pi_{r}\coloneqq rr^\top$ the orthogonal projection to the subspace defined by $r$, and by $\Pi_{{r}^\perp}$ the orthogonal projection to the subspace $r^\perp$. Obviously it holds that
$$\Pir+\Pirperp = I_{d-p}.$$
Thus, 
\begin{align*}
    \lambda_{j-1}(M_{\mu_2}) &=  \lambda_{j-1}\paren{(\Pir+\Pirperp)M_{\mu_2}(\Pir+\Pirperp)}\\
                             &\leq \lambda_{j-1}\paren{\Pirperp M_{\mu_2}\Pirperp}+\lambda_1(\Pir M_{\mu_2}\Pir +\Pirperp M_{\mu_2}\Pir +\Pir M_{\mu_2}\Pirperp),
\end{align*}
where the last line is due to the fact that the matrix $\Pir M_{\mu_2}\Pir +\Pirperp M_{\mu_2}\Pir +\Pir M_{\mu_2}\Pirperp$ is symmetric and through a subsequent application of Weyl's inequality. Let us analyze the eigenvector corresponding to the largest eigenvalue. Namely, for an eigenvector $t = \Pirperp t + \Pir t =: 
t_{r^\perp}+t_r$ of the discussed matrix with corresponding eigenvalue $\lambda$, we have that
\begin{align*}
    \Pir M_{\mu_2}\Pir\ t_r +\Pir M_{\mu_2}\Pirperp\ t_{r^\perp} &= \lambda\ t_r,\\
    \Pirperp M_{\mu_2}\Pir\ t_r &= \lambda\ t_{r^\perp}.
\end{align*}
If $t_r$ is the 0 vector, then $\lambda=0$. Otherwise, as $\Pir M_{\mu_2}\Pirperp\ t_{r^\perp}$ has a convergent, finite limit as $\mu_2\to{\lambda^a_i}^-$ and $t_r^\top\Pir M_{\mu_2}\Pir\ t_r\to - \infty$, it must hold that $\lambda \to -\infty$. This gives the following inequality 
$$\lambda_{j}(M_{\mu_1}) \leq \lambda_{j-1}(M_{\mu_2})+\epsilon_{\mu_2}\leq \lambda_{j-1}(\Pirperp M_{\mu_2}\Pirperp)+\epsilon_{\mu_2}',$$
where $\epsilon_{\mu_2}'\to 0$ as $\mu_2\to{\lambda^a_i}^-$.

Let us now lower bound $\lambda_{j}(M_{\mu_1})$. In a similar manner as before, we have that 
\begin{align*}
    \lambda_{j}(M_{\mu_1})&=\lambda_{j}\paren{(\Pir+\Pirperp)M_{\mu_1}(\Pir+\Pirperp)}\\
    &\geq\lambda_{j}\paren{\frac{1}{2}\Pir M_{\mu_1}\Pir+\Pirperp M_{\mu_1}\Pirperp}
                          +\lambda_{d-p}\paren{\frac{1}{2}\Pir M_{\mu_1}\Pir+\Pirperp M_{\mu_1}\Pir +\Pir M_{\mu_1}\Pirperp}\\
                          &\geq \lambda_{j}\paren{\frac{1}{2}\Pir M_{\mu_1}\Pir+\Pirperp M_{\mu_1}\Pirperp} - \epsilon_{\mu_1},
\end{align*}
where as $\mu_1\to{\lambda^a_i}^+$ it holds that $\epsilon_{\mu_1}\to 0$ with the same arguments as above. 

Note that each of the eigenvectors of $\frac{1}{2}\Pir M_{\mu_1}\Pir$ corresponding to a non-zero eigenvalue is orthogonal to each of the eigenvectors of  $\Pirperp M_{\mu_1}\Pirperp$ corresponding to a non-zero eigenvalue. Furthermore, $\lambda_1\paren{\frac{1}{2}\Pir M_{\mu_1}\Pir}\to \infty$ as $\mu_1\to{\lambda^a_i}^+$. Lastly, since $\Pirperp M_{\mu_1}\Pirperp$ has a convergent, finite limit, it holds that 
$$\lambda_{j}\paren{\frac{1}{2}\Pir M_{\mu_1}\Pir+\Pirperp M_{\mu_1}\Pirperp} = \lambda_{j-1}\paren{\Pirperp M_{\mu_1}\Pirperp},$$
as $\mu_1\to{\lambda^a_i}^+$.
This allows us to conclude that
$$\lambda_{j-1}\paren{\Pirperp M_{\mu_1}\Pirperp} - \epsilon_{\mu_1}\leq  \lambda_{j}(M_{\mu_1})\leq \lambda_{j-1}(M_{\mu_2})+\epsilon_{\mu_2}\leq\lambda_{j-1}(\Pirperp M_{\mu_2}\Pirperp)+\epsilon_{\mu_2}'.$$
Finally, by taking the limits $\mu_1\to{\lambda^a_i}^+$ and $\mu_2\to{\lambda^a_i}^-$ we get
\begin{equation}
    \lim_{\mu_1\to{\lambda^a_i}^+}L_j(\mu_1) = \lim_{\mu_2\to{\lambda^a_i}^-}L_{j-1}(\mu_2) = \lambda_{j-1}\paren{\Pirperp M_{\lambda_i^a}\Pirperp}.
    \label{eq:limitL_i}
\end{equation}
\end{proof}

From now on, we will refer to the function $\tilde{L}_i$ as the one extended to the whole $]\lambda_{i}^a,+\infty[$ for $i\in[p]$, and to the whole $]\lambda_{p}^a,+\infty[$ for $i\geq p+1$. The next result characterizes the behavior of $\tilde{L}_i$  
at the edges of its domain.

\begin{lemma}\label{lem:eiglimits}
    For $i\in [d-p]$, $\tilde{L}_i$ is non-increasing with the limit on the right edge of the domain given by 
    $$\lim_{\mu\to\infty}\tilde{L}_i(\mu)=\lambda_i(P).$$
    Moreover, for $i\in[p]$, the limit on the left edge of the domain is
    $$\lim_{\mu\to{\lambda_{i}^a}^+}\tilde{L}_i(\mu)=+\infty.$$
\end{lemma}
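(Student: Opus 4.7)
The plan is to handle the three claims (monotonicity, right-edge limit, left-edge limit) separately, exploiting the spectral decomposition $(a - \mu I_p)^{-1} = \sum_{k=1}^p \frac{v_k^a {v_k^a}^\top}{\lambda_k^a - \mu}$.

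\textbf{Monotonicity within each interval.} Fix an interval $I \subset ]\lambda_p^a, +\infty[ \setminus \Lambda^a$ on which $\tilde L_i$ coincides with some $L_j$. On $I$, the matrix $M(\mu) := P - q(a - \mu I_p)^{-1} q^\top$ is smooth in $\mu$, with
\[
\frac{d}{d\mu} M(\mu) = -q(a-\mu I_p)^{-2}q^\top \preceq 0,
\]
so $\mu_1 < \mu_2$ in $I$ implies $M(\mu_1) \succeq M(\mu_2)$ in the Loewner order. Weyl's monotonicity then yields $L_j(\mu_1) \ge L_j(\mu_2)$, so $\tilde L_i$ is non-increasing on $I$.

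\textbf{Gluing across interval boundaries.} The interval boundaries inside the domain of $\tilde L_i$ are the eigenvalues of $a$ strictly smaller than the right endpoint of the domain. At each such boundary $\lambda_k^a$, the definition of $\tilde L_i$ switches from $L_{j-1}$ (on the right of $\lambda_k^a$) to $L_j$ (on the left of $\lambda_k^a$) for consecutive indices. \Cref{lemma:samelimits} gives exactly that
\[
\lim_{\mu \to {\lambda_k^a}^+} L_{j}(\mu) = \lim_{\mu \to {\lambda_k^a}^-} L_{j-1}(\mu),
\]
so $\tilde L_i$ extends continuously to $\Lambda^a$ and the monotonicity within each piece combines to non-increasingness on the full domain.

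\textbf{Right-edge limit.} Since $\|(a - \mu I_p)^{-1}\| \to 0$ as $\mu \to \infty$, we have $M(\mu) \to P$ in operator norm. By continuity of eigenvalues, $L_i(\mu) \to \lambda_i(P)$; as $\tilde L_i$ agrees with $L_i$ on the unbounded piece of its domain, the desired limit follows.

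\textbf{Left-edge limit for $i \in [p]$.} On the leftmost piece $]\lambda_i^a, \lambda_{i-1}^a[$ we have $\tilde L_i(\mu) = L_1(\mu)$, so it suffices to show $L_1(\mu) \to +\infty$ as $\mu \to (\lambda_i^a)^+$. Using the spectral decomposition of $a$,
\[
M(\mu) = P - \sum_{k \ne i} \frac{q v_k^a {v_k^a}^\top q^\top}{\lambda_k^a - \mu} \;+\; \frac{1}{\mu - \lambda_i^a}\, q v_i^a {v_i^a}^\top q^\top .
\]
The first two terms remain bounded as $\mu \to (\lambda_i^a)^+$, while the last is a PSD rank-one matrix whose scalar factor tends to $+\infty$. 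By \Cref{lemma:eignon}, $r := q v_i^a \ne 0$ almost surely, so testing the Rayleigh quotient against $r/\|r\|$ gives
\[
L_1(\mu) \ge \frac{r^\top M(\mu) r}{\|r\|^2} \;\xrightarrow[\mu \to {\lambda_i^a}^+]{}\; +\infty,
\]
which is the claim.

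The only delicate point is the gluing step, and this is precisely why \Cref{lemma:samelimits} was proved; everything else follows from Weyl-type inequalities and an inspection of the divergent rank-one term.
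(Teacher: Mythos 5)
Your proof is correct and follows essentially the same strategy as the paper's: monotonicity from the Loewner ordering of $M(\mu)=P-q(a-\mu I_p)^{-1}q^\top$ on each interval (you phrase it via $\frac{d}{d\mu}M(\mu)\preceq 0$, the paper compares $M(\mu_1)$ and $M(\mu_2)$ directly via Weyl's inequality — same algebraic fact), gluing across $\Lambda^a$ by \Cref{lemma:samelimits}, the right-edge limit from $(a-\mu I_p)^{-1}\to 0$, and the left-edge divergence by isolating the rank-one term $\frac{rr^\top}{\mu-\lambda_i^a}$ with $r=qv_i^a\neq 0$ from \Cref{lemma:eignon} and testing the Rayleigh quotient against $r$. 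No gaps.
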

\begin{proof}
    First, we prove that each $L_i(\mu)$ is non-increasing in an arbitrary domain $]\lambda_j^a,\lambda_{j-1}^a[$. In fact, for any $\mu_1>\mu_2$ in that interval, it holds that
    \begin{align*}
        L_i(\mu_1)-L_i(\mu_2) &= \lambda_i(P-q(a-\mu_1 I_p)^{-1}q^\top) - \lambda_i(P-q(a-\mu_2 I_p)^{-1}q^\top)\\
        &\leq \lambda_{1}(q(a-\mu_2 I_p)^{-1}q^\top-q(a-\mu_1 I_p)^{-1}q^\top)\\
        &= \lambda_{1}\left(q\sum_{k=1}^p v_k^av_k^{a\top}\paren{\frac{\mu_2-\mu_1}{(\lambda_k^a - \mu_2)(\lambda_k^a - \mu_1)}}q^\top\right)\\
        &\leq 0,
    \end{align*}
    where the first inequality is due to Weyl's inequality.
    
    Thus, by the definition in \eqref{eq:defLtilde} and by \Cref{lemma:samelimits},  we have that the function $\tilde{L}_i$ is non-increasing in $]\lambda_i^a,+\infty[$, for $i\in[p]$. In the same manner, for $i>p$, it also holds that $\tilde{L}_i$ is non-increasing in $]\lambda_p^a,+\infty[$.
    Moreover, since $qv_i^a\neq 0$ and
    $$P-q(a-\mu I_p)^{-1}q^\top = P - \frac{(qv_i^a)(qv_i^a)^\top}{\lambda_i^a - \mu}-\sum_{k\neq i}^p \frac{(qv_k^a)(qv_k^a)^\top}{\lambda_k^a - \mu},$$
    it holds that, for any $i\in[p]$,
    $$\lim_{\mu\to{\lambda_{i}^a}^+}\tilde{L}_i(\mu) = \lim_{\mu\to{\lambda_{i}^a}^+}\lambda_1(P-q(a-\mu I_p)^{-1}q^\top) = +\infty.$$
    Finally, using the same formula, one also obtains that, for $i\in [d-p]$,
    \begin{align}
        \lim_{\mu\to\infty}\tilde{L}_i(\mu) &= \lim_{\mu\to\infty}\lambda_i(P-q(a-\mu I_p)^{-1}q^\top) = \lambda_i(P), \notag 
    \end{align}
    which concludes the proof.
\end{proof}
Having proven this properties, let us get back to discussing the eigenvalues $\lambda_i^D$. We do so by considering two cases.

\begin{lemma}\label{lemma:iffeig}
    An eigenvalue $\lambda_i^D\notin \Lambda^a$, $i\in [d-p]$, is a solution to 
    \begin{equation}\label{eq:eigvalchar}
        L_k(\mu) = \mu,
    \end{equation}
    for some $k$.
    Conversely, any solution to the previous equation is an eigenvalue of $D$ that is also not an eigenvalue of $a$.
\end{lemma}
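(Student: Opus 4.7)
The plan is to apply the Schur complement identity to the block form of $D$ given by \Cref{eq:Ddef}. For any $\lambda \notin \Lambda^a$, the block $a - \lambda I_p$ is invertible, so I would write
\begin{equation*}
    \det(D - \lambda I_d) \;=\; \det(a - \lambda I_p)\,\cdot\,\det\bigl((P - \lambda I_{d-p}) - q(a - \lambda I_p)^{-1} q^\top\bigr).
\end{equation*}
Since the first factor is nonzero by assumption, $\lambda$ is an eigenvalue of $D$ precisely when the second factor vanishes, i.e., when $\lambda$ is an eigenvalue of the symmetric matrix $M(\lambda) := P - q(a - \lambda I_p)^{-1} q^\top$.

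Next, recalling from \Cref{eq:defLi} that $L_k(\lambda) = \lambda_k(M(\lambda))$, the condition that $\lambda$ is an eigenvalue of $M(\lambda)$ is equivalent to the existence of some $k \in [d-p]$ with $L_k(\lambda) = \lambda$. Chaining these equivalences handles the forward direction: any eigenvalue $\lambda_i^D \notin \Lambda^a$ satisfies $L_k(\lambda_i^D) = \lambda_i^D$ for some $k$. For the converse, I would use that each $L_k$ is defined only on $\bbR \setminus \Lambda^a$, so any solution $\mu$ to $L_k(\mu) = \mu$ automatically lies outside $\Lambda^a$; the same Schur complement identity applied at $\lambda = \mu$ then forces $\det(D - \mu I_d) = 0$, showing that $\mu$ is an eigenvalue of $D$ not belonging to $\Lambda^a$.

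I do not anticipate a real obstacle: the argument is a standard secular-equation / Schur complement manipulation, and the hypothesis $\lambda \notin \Lambda^a$ built into the statement circumvents the only delicate point, namely the non-invertibility of $a - \lambda I_p$. The only care needed is to ensure that the equivalence runs in both directions, which is immediate once one observes that the domain restriction on $L_k$ matches the invertibility requirement for the Schur complement.
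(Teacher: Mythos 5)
Your proof is correct and follows essentially the same route as the paper: both arguments apply the block-determinant (Schur complement) identity to reduce $\det(D - \lambda I_d) = 0$ for $\lambda \notin \Lambda^a$ to the vanishing of $\det\bigl(P - \lambda I_{d-p} - q(a - \lambda I_p)^{-1}q^\top\bigr)$, and both then recognize this as the statement that $\lambda$ coincides with some eigenvalue $L_k(\lambda)$ of the Schur complement matrix. The paper phrases the last step by writing the determinant as the product $\prod_i (L_i(\lambda) - \lambda)$, while you observe directly that $\lambda$ must be an eigenvalue of $M(\lambda)$; these are the same fact.
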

\begin{proof}
    All eigenvalues of $D$ are exactly the solutions to
    \begin{equation}\label{eq:detwhole}
        \det\paren{D-\lambda I_{d}}=0.
    \end{equation}
   Applying the formula for the determinant of a block matrix implies
    $$\det\paren{D-\lambda I_{d}} = \det\paren{P-\lambda I_{d-p}-q(a-\lambda I_p)^{-1}q^\top}\det(a-\lambda I_p).$$
    As by assumption $\det(a-\lambda I_d)\neq 0$, \eqref{eq:detwhole} is equivalent to
    $$\det\paren{P-\lambda I_{d-p}-q(a-\lambda I_p)^{-1}q^\top}=0.$$ 
    Moreover by definition of the determinant and the fact that the matrix in questions is symmetric, it holds that
    \begin{align*}
        \det\paren{P-\lambda I_{d-p}-q(a-\lambda I_p)^{-1}q^\top} &= \prod_{i=1}^{d-p}\lambda_i\paren{P-\lambda I_{d-p}-q(a-\lambda I_p)^{-1}q^\top}\\
        &= \prod_{i=1}^{d-p}(L_i(\lambda)-\lambda).
    \end{align*}
    Therefore, we have that $$\det\paren{P-\lambda I_{d-p}-q(a-\lambda I_p)^{-1}q^\top}=0,$$
    if and only if there exists a $k$ and $\mu$ such that
    \begin{align}
        L_k(\mu)&=\mu.\notag
    \end{align}
\end{proof}
The case in which the eigenvalues of $D$ and $a$ overlap is covered by the result below. 
\begin{lemma}\label{lemma:iffeiga}
    An arbitrary eigenvalue $\lambda_i^D\in \Lambda^a$ is equal to $\lambda_j^a$ if and only if  
    \begin{equation}\label{eq:eigvalchara}
        \lim_{\mu\to{\lambda_j^a}^+} L_k(\mu) = \lambda_j^a,
    \end{equation}
    for some $k\geq 2$.
\end{lemma}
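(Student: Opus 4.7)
The plan is to continuously extend, to $\lambda = \lambda_j^a$, the factorisation
\[
\det(D - \lambda I_d) = \prod_{i=1}^p (\lambda_i^a - \lambda) \prod_{k=1}^{d-p}(L_k(\lambda) - \lambda)
\]
established in the proof of \Cref{lemma:iffeig} for $\lambda \notin \Lambda^a$, and then read off when the limit vanishes. Two factors misbehave at $\lambda = \lambda_j^a$: the simple zero $(\lambda_j^a - \lambda)$, and the pole of $L_1(\lambda) - \lambda$ detected in \Cref{lem:eiglimits}. The core of the argument will be to pair these into a finite, generically nonzero quantity and show that whether the remaining factors vanish is exactly the condition in the statement.

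To make this precise, I would first use the spectral decomposition of $a$ to write, for $\lambda$ in a punctured neighbourhood of $\lambda_j^a$,
\[
M_\lambda \coloneqq P - q(a - \lambda I_p)^{-1} q^\top = N_\lambda + \frac{(qv_j^a)(qv_j^a)^\top}{\lambda - \lambda_j^a}, \qquad N_\lambda \coloneqq P + \sum_{i \neq j}\frac{(qv_i^a)(qv_i^a)^\top}{\lambda - \lambda_i^a},
\]
with $N_\lambda$ continuous at $\lambda_j^a$ under the standing assumption that the $\lambda_i^a$ are distinct. A standard rank-one perturbation argument, combined with \Cref{lemma:eignon} which guarantees $qv_j^a \neq 0$ almost surely, then gives
\[
L_1(\lambda) = \frac{\|qv_j^a\|_2^2}{\lambda - \lambda_j^a} + O(1) \quad \text{as } \lambda \to {\lambda_j^a}^+,
\]
so that $(\lambda_j^a - \lambda)(L_1(\lambda) - \lambda) \to -\|qv_j^a\|_2^2$. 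By \Cref{lemma:samelimits}, each $L_k$ with $k \geq 2$ has a finite right-limit at $\lambda_j^a$. Grouping the pole/zero pair in the factorisation and invoking continuity of $\lambda \mapsto \det(D - \lambda I_d)$, I would obtain
\[
\det(D - \lambda_j^a I_d) = -\|qv_j^a\|_2^2 \prod_{i \neq j}(\lambda_i^a - \lambda_j^a) \prod_{k=2}^{d-p}\bigl(\lim_{\mu \to {\lambda_j^a}^+} L_k(\mu) - \lambda_j^a\bigr).
\]

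The conclusion reduces to sign-checking: the first two factors on the right-hand side are almost surely nonzero (the product by the distinct-eigenvalues convention in force throughout the section, and the norm by \Cref{lemma:eignon}). Hence $\lambda_j^a$ is an eigenvalue of $D$ if and only if some $k \geq 2$ satisfies $\lim_{\mu \to {\lambda_j^a}^+} L_k(\mu) = \lambda_j^a$, which is exactly \eqref{eq:eigvalchara}; the restriction to $k \geq 2$ is automatic since $L_1(\mu) \to +\infty$ as $\mu \to {\lambda_j^a}^+$. I expect the delicate step to be the precise asymptotic $L_1(\lambda) \sim \|qv_j^a\|_2^2/(\lambda - \lambda_j^a)$, most cleanly justified via the matrix determinant lemma applied to $M_\lambda - \lambda I_{d-p}$ after isolating its diverging rank-one component, mirroring calculations already used in the proof of \Cref{lemma:matrixRchar}. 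Everything else is continuity plus preparatory lemmas already proved in this section.
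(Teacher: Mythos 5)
Your argument is correct but follows a genuinely different route from the paper's. The paper proves both directions of the equivalence by working directly with the eigenvector equations: writing $v_i^D = [h_i;g_i]$, it shows $g_i\perp qv_j^a$, expresses $h_i$ using the pseudo-inverse $(a-\lambda_j^a I_p)^\dagger$, and concludes that $\Pi_{r^\perp} g_i$ is an eigenvector of $\Pi_{r^\perp}\bigl(P - q(a-\lambda_j^a I_p)^\dagger q^\top\bigr)\Pi_{r^\perp}$ with eigenvalue $\lambda_j^a$, which via \Cref{eq:limitL_i} is exactly the right-limit condition on $L_k$; the converse is then obtained by running these steps backward to explicitly construct an eigenvector. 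Your proof instead works at the level of the characteristic polynomial: you extend the factorization $\det(D-\lambda I_d)=\prod_i(\lambda_i^a-\lambda)\prod_k(L_k(\lambda)-\lambda)$ continuously into $\lambda=\lambda_j^a$ by cancelling the simple zero $(\lambda_j^a-\lambda)$ against the pole of $L_1$, yielding the finite nonzero constant $-\|qv_j^a\|_2^2$ (and your rank-one/Weyl sandwich correctly establishes $L_1(\lambda)=\|qv_j^a\|_2^2/(\lambda-\lambda_j^a)+O(1)$, with \Cref{lemma:eignon} ensuring nonvanishing), after which the whole statement reduces to whether the remaining factors $\lim_{\mu\to\lambda_j^{a+}}L_k(\mu)-\lambda_j^a$, $k\ge 2$, vanish. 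Your approach is more unified in that it handles both directions at once through a single algebraic identity and avoids the explicit eigenvector construction, while the paper's eigenvector-based argument is more constructive, which fits naturally with \Cref{prop:eigenvec} and the subsequent overlap analysis that the paper builds from eigenvector formulas rather than determinants. Both correctly rely on the distinct-eigenvalues convention stated in \Cref{subsec:general matrix}.
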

\begin{proof}
    Let us first prove the if part of the statement. We denote the eigenvector corresponding to $\lambda_i^D$ as $v_i^D = \matrix{h_i \\ g_i}$, where $h_i\in \bbR^p$, $g\in \bbR^{d-p}$. It follows that
$$ D v_i^D = \matrix{ a & q^\top\\
                    q & P}\matrix{h_i \\ g_i} = \lambda^a_j \matrix{h_i \\ g_i}.$$
Splitting this equation into $p$ and $d-p$ coordinates gives
\begin{align}
    a h_i + q^\top g_i &= \lambda^a_j h_i,\label{eq:firsteq}\\
    q h_i + P g_i &= \lambda^a_j g_i.\label{eq:secndeq}
\end{align}
Since $(a-\lambda^a_jI_p)$ is singular, its SVD decomposition is
$$(a-\lambda^a_jI_p) = \sum_{k=1}^p (\lambda_k^a-\lambda_j^a)v_k^av_k^{a\top}= \sum_{k\neq j}^p (\lambda_k-\lambda_j)v_k^av_k^{a\top}.$$
From  \eqref{eq:firsteq}, it follows that $(a-\lambda^a_jI_p)h_i = -q^\top g_i$. Then, plugging in the SVD, it holds that $\sum_{k\neq j}^p (\lambda_k-\lambda_j)v_k^av_k^{a\top} h_i= -q^\top g_i$. Multiplying both sides by $v_j^a$, due to the matrix being symmetric and thus eigenvectors orthogonal, it holds that
$$\sum_{k\neq j}^p (\lambda_k-\lambda_j)\inprod{v_k^a}{h_i}\inprod{v_k^a}{v_j^a} = 0 = -\inprod{q^\top g_i}{v_j^a}.$$
From there, we can conclude that $q^\top g_i \perp v_j^a$, which is equivalent to $g_i\perp qv_j^a$. Moreover, \eqref{eq:firsteq} can be rewritten as 
$$h_i = -(a-\lambda^a_jI_p)^\dagger q^\top g_i + \alpha v_j^a,$$
for some $\alpha$. Plugging this result into \eqref{eq:secndeq} gives 
\begin{equation}\label{eq:dc} -q (a-\lambda^a_j I_p)^\dagger q^\top g_i + \alpha qv_j^a + P g_i = \lambda^a_j g_i,
\end{equation}
for some $\alpha$.
Let us, as before, denote by $r =q v_j^a/\|q v_j^a\|_2$, and by $\Pirperp$ the orthogonal projection to the subspace defined by $r^\perp$. As noted before, $g_i\perp qv_j^a$, which implies that $g_i=\Pirperp g_i$. Plugging it into \Cref{eq:dc} gives
$$ -q (a-\lambda^a_j I_p)^\dagger q^\top \Pirperp g_i + \alpha qv_j^a + P \Pirperp g_i = \lambda^a_j \Pirperp g_i.$$
Multiplying the previous equation on the left by $\Pirperp$ results in 
$$ -\Pirperp q (a-\lambda^a_j I_p)^\dagger q^\top \Pirperp g_i + \Pirperp P \Pirperp g_i = \lambda^a_j \Pirperp g_i.$$
This means that $\Pirperp g_i$ is an eigenvector of the matrix $-\Pirperp q (a-\lambda^a_j I_p)^\dagger q^\top \Pirperp + \Pirperp P \Pirperp$ with the corresponding eigenvalue $\lambda^a_j$, i.e.,
$$\lambda_k(\Pirperp (P-q (a-\lambda^a_jI_p)^\dagger q^\top)\Pirperp) = \lambda_j^a,$$
for some $k$. From \eqref{eq:limitL_i} we can see the LHS is exactly $\lim_{\mu\to{\lambda_j^a}^+} L_k(\mu)$ for some $k$. As proved in \Cref{lem:eiglimits}, it holds that $\lim_{\mu\to{\lambda_j^a}^+} L_1(\mu) = +\infty$, so it must be that $k\geq 2$.

Conversely, by following the same steps in reverse, if 
$\lim_{\mu\to{\lambda_j^a}^+} L_k(\mu)=\lambda_j^a$, then there is a vector $g_i$ that solves
$$ -q (a-\lambda^a_j)^\dagger q^\top g_i + \alpha qv_j^a + P g_i = \lambda^a_j g_i,$$
for some $\alpha$. By setting 
$$h_i = -(a-\lambda^a_jI_p)^\dagger q^\top g_i + \alpha v_j^a,$$
it follows that $w = \matrix{h_i \\ g_i}$ is an eigenvector of $D$ with eigenvalue $\lambda_j^a$ as stated.
\end{proof}

Combining the previous properties, we are now ready to state and formally prove the characterization in \Cref{eq:eigvalrec-body}.

\begin{proposition}\label{prop:eigvalrec}
    For $i\in[p]$, the eigenvalue $\lambda_i^D$ is the unique solution to the equation 
    \begin{equation}\label{eq:eigvalrec}
        \tilde{L}_i(\mu) = \mu,
    \end{equation}  
    in the respective domain $]\lambda_i^a,\infty[$.
\end{proposition}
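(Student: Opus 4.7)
The plan is to use the Schur complement formula to translate $\det(D - \mu I_d) = 0$ into a fixed-point equation for the eigenvalue functions $L_j$, then combine monotonicity and pole analysis to produce a unique fixed point of $\tilde L_i$ on $]\lambda_i^a, \infty[$, and finally identify that fixed point with $\lambda_i^D$ via a Haynsworth inertia count. Concretely, for $\mu \notin \Lambda^a$, the block decomposition \eqref{eq:Ddef} yields
\[
\det(D - \mu I_d) \;=\; \det(a - \mu I_p)\,\det\bigl(M(\mu) - \mu I_{d-p}\bigr) \;=\; \det(a - \mu I_p)\prod_{j=1}^{d-p}\bigl(L_j(\mu) - \mu\bigr),
\]
where $M(\mu) = P - q(a - \mu I_p)^{-1} q^\top$, so the eigenvalues of $D$ outside $\Lambda^a$ are exactly the $\mu$ such that $L_j(\mu) = \mu$ for some $j \in [d-p]$.

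Differentiating gives $\partial_\mu M(\mu) = -q(a - \mu I_p)^{-2} q^\top \preceq 0$, so each $L_j$ is non-increasing on every connected component of $\bbR \setminus \Lambda^a$, and under the hypothesis $q v_k^a \neq 0$ the difference $L_j(\mu) - \mu$ is in fact strictly decreasing. The pole structure at each $\mu = \lambda_k^a$ is governed by the rank-one term $-(\lambda_k^a - \mu)^{-1} q v_k^a (v_k^a)^\top q^\top$ inside $-q(a - \mu I_p)^{-1}q^\top$: exactly one eigenvalue of $M(\mu)$ diverges, with $L_1(\mu) \to +\infty$ as $\mu \to (\lambda_k^a)^+$ and $L_{d-p}(\mu) \to -\infty$ as $\mu \to (\lambda_k^a)^-$, while the remaining $d-p-1$ eigenvalues converge to the same finite limits from both sides; this is \Cref{lemma:samelimits}. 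The resulting one-index shift across each pole is precisely what makes the piecewise definition \eqref{eq:defLtilde} of $\tilde L_i$ continuous across every $\lambda_k^a$ with $k < i$ and preserves strict monotonicity of $\tilde L_i(\mu) - \mu$ on the whole of $]\lambda_i^a, \infty[$. Together with the left-endpoint divergence $\tilde L_i(\mu) = L_1(\mu) \to +\infty$ as $\mu \to (\lambda_i^a)^+$ and the right-endpoint limit $\tilde L_i(\mu) = L_i(\mu) \to \lambda_i^P$ as $\mu \to +\infty$, the intermediate value theorem produces a unique $\mu_i^\star \in\,]\lambda_i^a, \infty[\,$ with $\tilde L_i(\mu_i^\star) = \mu_i^\star$, and this $\mu_i^\star$ is automatically an eigenvalue of $D$ by the Schur identity above.

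The main obstacle is to identify $\mu_i^\star$ with the $i$-th largest eigenvalue $\lambda_i^D$ rather than with some other $\lambda_j^D$, which is nontrivial because $\mu_i^\star$ may lie in any subinterval $]\lambda_k^a, \lambda_{k-1}^a[$ for $k \le i$. I would handle this by Haynsworth's inertia additivity applied to the block form of $D - \mu I_d$: in the interval $]\lambda_k^a, \lambda_{k-1}^a[$, the matrix $a - \mu I_p$ contributes exactly $p - k + 1$ negative eigenvalues, giving the identity
\[
\#\{l \in [d] : \lambda_l^D > \mu\} \;=\; (k - 1) + \#\{j \in [d-p] : L_j(\mu) > \mu\}.
\]
Sweeping $\mu$ downward from $+\infty$, each crossing of a $\lambda_k^a$ leaves the left-hand side invariant (the increment in $k-1$ exactly cancels the loss of the diverging $L_1$ eigenvalue from the right-hand count), while each crossing of a fixed point $\mu_{i'}^\star$ increments both sides by $1$. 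Combined with the strict Cauchy interlacing $\lambda_i^D > \lambda_i^a$ implied by $q v_i^a \ne 0$ (cf.\ \Cref{lemma:eignon}) and the strict ordering $\mu_1^\star > \mu_2^\star > \cdots > \mu_p^\star$ (itself a consequence of $\tilde L_i > \tilde L_{i+1}$ on the common domain), this forces the $p$ fixed points to be exactly the top $p$ eigenvalues of $D$ in decreasing order, yielding $\mu_i^\star = \lambda_i^D$. The delicate technical point is the correct bookkeeping of this inertia count across the up to $i-1$ pole crossings that $\tilde L_i$ threads through between $\lambda_i^a$ and $+\infty$.
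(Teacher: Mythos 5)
Your proposal is correct in outline and genuinely departs from the paper's route on the key identification step. You and the paper agree on the first two phases: the Schur-complement factorization $\det(D-\mu I_d)=\det(a-\mu I_p)\prod_j(L_j(\mu)-\mu)$ (the paper's \Cref{lemma:iffeig}), the monotonicity of $L_j$ coming from $\partial_\mu M(\mu) = -q(a-\mu I_p)^{-2}q^\top \preceq 0$, the pole structure at each $\lambda_k^a$ driven by the rank-one term $q v_k^a(v_k^a)^\top q^\top$ (the paper's \Cref{lemma:samelimits,lem:eiglimits}), and the IVT argument for existence and uniqueness of a fixed point of $\tilde L_i$ on $]\lambda_i^a,\infty[$. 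Where you diverge is the final step: the paper shows via \Cref{lemma:iffeig,lemma:iffeiga} that every eigenvalue of $D$ exceeding $\lambda_p^a$ is a fixed point of some $\tilde L_m$, and then invokes the two chains $\tilde L_1\ge\cdots\ge\tilde L_{d-p}$ and $\lambda_1^D\ge\cdots\ge\lambda_p^D$ to conclude by an ordering argument. You instead count eigenvalues via Haynsworth inertia additivity, $\#\{l:\lambda_l^D>\mu\}=(k-1)+\#\{j:L_j(\mu)>\mu\}$ on each gap $]\lambda_k^a,\lambda_{k-1}^a[$, and sweep $\mu$ downward. This is arguably a more transparent and mechanical way to nail the ranking; what it buys you is that the "one-index shift across each pole" cancellation becomes a literal identity of integers rather than a monotonicity heuristic. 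What the paper's route buys is that it handles, uniformly and without case distinctions, the degenerate possibility $\mu_i^\star=\lambda_k^a\in\Lambda^a$ (an eigenvalue of $D$ that coincides with one of $a$), which is precisely the content of \Cref{lemma:iffeiga}.

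That degenerate case is the one real gap in your sketch. Your inertia identity is stated on $]\lambda_k^a,\lambda_{k-1}^a[$, i.e.\ only for $\mu\notin\Lambda^a$, since Haynsworth requires the $(1,1)$-block $a-\mu I_p$ to be nonsingular. If the fixed point of $\tilde L_i$ lands exactly at some $\lambda_k^a$ (which the continuous extension in \eqref{eq:defLtilde} permits, and which the paper proves is then genuinely an eigenvalue of $D$ via a kernel-vector construction in \Cref{lemma:iffeiga}), your sweep passes through a point where the inertia decomposition does not directly apply, and "crossing $\lambda_k^a$ leaves the left-hand side invariant" is no longer automatic: the left-hand count does jump there. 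You would need to patch the argument at such points, e.g.\ by working with the pseudo-inverse and the reduced matrix $\Pi_{r^\perp}(P - q(a-\lambda_k^a I_p)^\dagger q^\top)\Pi_{r^\perp}$ as the paper does, or by a perturbative argument that generic $\mu$ avoids $\Lambda^a$. You flag "the delicate technical point is the correct bookkeeping," and that is exactly where the work lies; once that is handled, your approach gives a clean and if anything more explicit proof than the paper's terse ordering conclusion.
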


\begin{proof}
    Let us first prove that \eqref{eq:eigvalrec} has a unique solution, for $i\in[p]$. 
    Since $\tilde{L}_i(\mu)$ is non-increasing and continuous, we have that $\tilde{L}_i(\mu) -\mu$ is decreasing and continuous. Moreover, for $i\in[p]$ it has limits
    $$\lim_{\mu\to{\lambda_{i}^a}^+}\tilde{L}_i(\mu)-\mu=+\infty,\text{ and } \lim_{\mu\to\infty}\tilde{L}_i(\mu)-\mu=-\infty,$$
    due to \Cref{lem:eiglimits}. Then, applying the intermediate value theorem implies that there must be a unique $\mu_i$ for which $\tilde{L}_i(\mu_i) - \mu_i=0$.
    
    Next, let us prove that the unique solution $\mu_i$ of \eqref{eq:eigvalrec} is indeed an eigenvalue of $D$. First, suppose that $\mu_i=\lambda_j^a$, for some $j\in[p]$. Then, \eqref{eq:limitL_i} would imply that 
    $$\lim_{\mu\to{\lambda_j^a}^+} L_k(\mu) = \lambda_j^a,$$
    for some $k\geq 2$.
    Then, \Cref{lemma:iffeiga} implies that $\lambda_j^a$ is an eigenvalue of $D$. Next,  suppose  that $\mu_i\notin \Lambda^a$. Then, by definition of $\tilde{L}_i$, it holds that
    $$L_k(\mu_i)=\mu_i,$$
    for some $k$. \Cref{lemma:iffeig} then implies that $\mu_i$ must be an eigenvalue of $D$.
    
    Finally, let us prove that $\mu_i$ is exactly the $i$-th eigenvalue of $D$. To do so, we first prove that every eigenvalue of $D$ that is larger or equal to $\lambda_p^a$ is a solution to the following equation in $\mu$:
    $$\tilde{L}_m(\mu)=\mu,$$
    for some $m\in [d-p]$. This follows from \Cref{lemma:iffeig,lemma:iffeiga}, which imply that any eigenvalue of $D$ is covered by checking the conditions  
    $$ L_k(\mu) = \mu \text{ or } \lim_{\mu\to{\lambda_j^a}^+} L_k(\mu) = \lambda_j^a,$$
    which are all covered by considering $\tilde{L}_m(\mu)$ for $m\in [d-p]$.
    
    As $\tilde{L}_1(\mu)\geq \tilde{L}_2(\mu) \geq \dots \geq  \tilde{L}_p(\mu) \geq\dots\geq\tilde{L}_{d-p}(\mu)$ and $\lambda_1^D\geq \lambda_2^D\geq \dots \lambda_p^D$,
    it must be that the solution to \eqref{eq:eigvalrec} is exactly the $i$-th eigenvalue of the matrix $D$, and the proof is complete.
\end{proof}

Next, we prove the eigenvector characterization in \Cref{eq:eigenvec-body}.

\begin{proposition}\label{prop:eigenvec}
Let $j\in[p]$ be s.t.\ $\lambda_i^D>\lambda_1^P$ for all $i\leq j$. Then, for all $i\leq j$, it holds that
    $$h_i = \frac{\tilde{h}_i}{\sqrt{1-\tilde{h}_i^\top\frac{d}{d\lambda}R(\lambda_i^D)\tilde{h}_i}},$$
    where $\tilde{h}_i$ is the 
    eigenvector of $R(\lambda_i^D)$ and $\frac{d}{d\lambda}R(\lambda)$ is the entry-wise derivative of 
    $R(\lambda)$.
\end{proposition}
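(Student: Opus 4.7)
The plan is to reduce $D v_i^D = \lambda_i^D v_i^D$ to a $p$-dimensional eigenproblem for $R(\lambda_i^D)$ via a Schur-complement type manipulation, and then to pin down the overall scaling constant using the unit-norm condition $\normtwo{v_i^D} = 1$. Decomposing $v_i^D = \matrix{h_i \\ g_i}$ in accordance with the block structure of $D$, so that $h_i \in \bbR^{p}$ and $g_i \in \bbR^{d-p}$, the identity $D v_i^D = \lambda_i^D v_i^D$ splits as
\begin{align*}
    a h_i + q^\top g_i = \lambda_i^D h_i , \qquad q h_i + P g_i = \lambda_i^D g_i .
\end{align*}
Since by hypothesis $\lambda_i^D > \lambda_1^P$, the matrix $P - \lambda_i^D I_{d-p}$ is invertible and the second equation yields $g_i = -(P - \lambda_i^D I_{d-p})^{-1} q h_i$. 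Substituting into the first equation collapses everything to
\begin{align*}
    R(\lambda_i^D) h_i = \lambda_i^D h_i ,
\end{align*}
so $h_i = c_i \tilde h_i$ for some scalar $c_i$, with $\tilde h_i$ the unit-norm eigenvector of $R(\lambda_i^D)$ associated with the eigenvalue $\lambda_i^D$. (Note $h_i \ne 0$, for otherwise $g_i = 0$ as well, contradicting $\normtwo{v_i^D} = 1$.)

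It then remains to recover $|c_i|$ from $\normtwo{h_i}^2 + \normtwo{g_i}^2 = 1$. Using the explicit expression for $g_i$ together with the symmetry of $P$,
\begin{align*}
    \normtwo{g_i}^2 = h_i^\top q^\top (P - \lambda_i^D I_{d-p})^{-2} q h_i .
\end{align*}
Combining this with the standard matrix-calculus identity $\frac{d}{d\lambda} (P - \lambda I_{d-p})^{-1} = (P - \lambda I_{d-p})^{-2}$, which gives $\frac{d}{d\lambda} R(\lambda) = -q^\top (P - \lambda I_{d-p})^{-2} q$, the preceding display rewrites as
\begin{align*}
    \normtwo{g_i}^2 = - h_i^\top \frac{d}{d\lambda} R(\lambda_i^D) h_i = - c_i^2 \, \tilde h_i^\top \frac{d}{d\lambda} R(\lambda_i^D) \tilde h_i .
\end{align*}
Plugging into $\normtwo{h_i}^2 + \normtwo{g_i}^2 = 1$ yields $c_i^2 \bigl( 1 - \tilde h_i^\top \frac{d}{d\lambda} R(\lambda_i^D) \tilde h_i \bigr) = 1$, from which the claimed formula follows.

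The bulk of the argument is mechanical, and I expect the only subtle point to concern the case in which $\lambda_i^D$ is not a simple eigenvalue of $R(\lambda_i^D)$: then $\tilde h_i$ is only determined up to a choice of basis inside the corresponding eigenspace. However, the scaling factor is invariant under such a choice and, in the intended use within \Cref{thm:main}, the residual ambiguity is absorbed by summing over the multiplicity block. A brief sanity check that the denominator is positive: $(P - \lambda_i^D I_{d-p})^{-2} \succeq 0$ implies $\frac{d}{d\lambda} R(\lambda_i^D) \preceq 0$, and thus $1 - \tilde h_i^\top \frac{d}{d\lambda} R(\lambda_i^D) \tilde h_i \ge 1 > 0$, so the square root is well-defined and real.
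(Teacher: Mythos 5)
Your proof is correct and follows the same route as the paper's: decompose $v_i^D$ into blocks, use the Schur-complement to show $h_i$ is an eigenvector of $R(\lambda_i^D)$, eliminate $g_i$, and recover the normalization constant via the identity $q^\top(P-\lambda I_{d-p})^{-2}q = -\frac{d}{d\lambda}R(\lambda)$. The extra remarks on $h_i\neq 0$, positivity of the denominator, and eigenvalue multiplicity are sensible additions but do not change the substance of the argument.
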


\begin{proof}
Note that the eigenvector equation is equivalent to the system of two equations
    \begin{align}
        a h_i + q^\top g_i &= \lambda^D_i h_i,\label{eq:eig1}\\
        q h_i + P g_i &= \lambda^D_i g_i.\label{eq:eig2}
    \end{align}
    
    As we consider only the eigenvectors $v_i^D$ for $i\leq j$, the matrix $(P-\lambda_i^DI_{d-p})$ is invertible, and solving \eqref{eq:eig1} gives
    $$g_i = -(P-\lambda_i^DI_{d-p})^{-1}qh_i.$$
    Substituting in \eqref{eq:eig2} yields
        $$ah_i-q^\top(P-\lambda_i^DI_{d-p})^{-1}qh_i=\lambda_i^Dh_i.$$
    Let us denote by $\tilde{h}_i=\frac{h_i}{\norm{2}{h_i}}$ the unit norm eigenvector of $a - q^\top(P-\lambda_i^DI_{d-p})^{-1}q$ corresponding to the eigenvalue $\lambda_i^D$, and also define $\tilde{g}_i := -(P-\lambda_i^DI_{d-p})^{-1}q\tilde{h}_i$. Then, $\tilde{h}_i$ and $\tilde{g}_i$ satisfy equations \eqref{eq:eig1} and \eqref{eq:eig2}, so $\tilde{v}_i^D = \matrix{\tilde{h}_i\\ \tilde{g}_i}$ is aligned with an eigenvector corresponding to eigenvalue $\lambda_i^D$. However, $\tilde{v}_i^D$  does not necessarily have unit norm. It holds that
    $$\matrix{h_i\\ g_i} = v_i^D = \frac{\tilde{v}_i^D}{\norm{2}{\tilde{v}_i^D}} = \frac{\matrix{\tilde{h}_i\\ \tilde{g}_i}}{\sqrt{\tilde{h}_i^\top\tilde{h}_i+\tilde{g}_i^\top\tilde{g}_i}}=\frac{\matrix{\tilde{h}_i\\ \tilde{g}_i}}{\sqrt{1+\tilde{h}_i^\top q^\top(P-\lambda_i^DI_{d-p})^{-2}q\tilde{h}_i}},$$ from which follows that
    $$h_i = \frac{\tilde{h}_i}{\sqrt{1+\tilde{h}_i^\top q^\top(P-\lambda_i^DI_{d-p})^{-2}q\tilde{h}_i}}.$$
    The last thing to notice is that
    $$q^\top(P-\lambda I_{d-p})^{-2}q = -\frac{d}{d\lambda}(a-q^\top(P-\lambda I_{d-p})^{-1}q),$$
    from which the statement of the proposition follows.
\end{proof}

We conclude by expressing the entries of the matrix $R(\lambda)$ and its derivative in terms of the auxiliary functions in \Cref{eq:auxfun}.

\begin{lemma}\label{lemma:matrixRchar}
    Let us assume that $\lambda>\lambda_1^P$. Then, for the  diagonal elements of $R(\lambda)$, it holds that
    \begin{align}
        R(\lambda)_{i,i} &= a_{i,i}+\frac{1}{\cLi^{-1}(\lambda)},\label{eq:matrixRminusonediag} \\
        \frac{d}{d\lambda}R(\lambda)_{i,i} &= -\frac{1}{\paren{\cLi^{-1}(\lambda)}^2\cLi'(\cLi^{-1}(\lambda))}.\label{eq:matrixRminustwodiag}
    \end{align}
    Moreover, for the off-diagonal elements, we have
    \begin{align}
        2R(\lambda)_{i,j} &=2a_{i,j} + \frac{1}{\cL_{i,j}^{-1}(\lambda)} - \frac{1}{\cL_i^{-1}(\lambda)} - \frac{1}{\cL_j^{-1}(\lambda)},\label{eq:matrixRminusoneoffdiag}\\
        2\frac{d}{d\lambda}R(\lambda)_{i,j} &= \frac{1}{\paren{\cLi^{-1}(\lambda)}^2\cLi'(\cLi^{-1}(\lambda))}+\frac{1}{\paren{\cL_j^{-1}(\lambda)}^2\cL_j'(\cL_j^{-1}(\lambda))}-\frac{1}{\paren{\cL_{i,j}^{-1}(\lambda)}^2\cL_{i,j}'(\cL_{i,j}^{-1}(\lambda))} .\label{eq:matrixRminustwooffdiag}
    \end{align}
\end{lemma}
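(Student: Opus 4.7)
The plan is to reduce every entry of $R(\lambda)$ to the single scalar identity
\[
v^\top (P - \lambda I_{d-p})^{-1} v \;=\; -\frac{1}{\cL_v^{-1}(\lambda)}, \qquad \lambda > \lambda_1^P,
\]
where $\cL_v(\mu) \coloneqq \lambda_1(P + \mu\, v v^\top)$. This is an immediate consequence of the matrix determinant lemma: the identity $\det(P + \mu v v^\top - \lambda I) = \det(P - \lambda I)\bigl(1 + \mu\, v^\top(P-\lambda I)^{-1}v\bigr)$ vanishes at $\lambda = \cL_v(\mu) > \lambda_1^P$, which forces $\mu = -1/\bigl(v^\top(P-\lambda I)^{-1}v\bigr)$. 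Differentiating this relation in $\lambda$ (using $\frac{d}{d\lambda}(P-\lambda I)^{-1} = (P-\lambda I)^{-2}$) and invoking the inverse function theorem $(\cL_v^{-1})'(\lambda) = 1/\cL_v'(\cL_v^{-1}(\lambda))$ yields the companion identity
\[
v^\top (P-\lambda I)^{-2} v \;=\; \frac{1}{\bigl(\cL_v^{-1}(\lambda)\bigr)^2 \, \cL_v'\bigl(\cL_v^{-1}(\lambda)\bigr)}.
\]

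\textbf{Diagonal entries.} Specialising the two scalar identities to $v = q_i$ and substituting them into $R(\lambda)_{i,i} = a_{i,i} - q_i^\top(P-\lambda I)^{-1} q_i$ and its $\lambda$-derivative $\frac{d}{d\lambda}R(\lambda)_{i,i} = -q_i^\top(P-\lambda I)^{-2} q_i$ immediately produces \eqref{eq:matrixRminusonediag} and \eqref{eq:matrixRminustwodiag}.

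\textbf{Off-diagonal entries.} The key algebraic step is the polarization identity
\[
2\, q_i^\top M\, q_j \;=\; (q_i+q_j)^\top M (q_i+q_j) - q_i^\top M q_i - q_j^\top M q_j,
\]
which holds for any symmetric $M$. Applied with $M = (P-\lambda I)^{-1}$ and then with $M = (P-\lambda I)^{-2}$, and combined with the two scalar identities above applied three times---to $v \in \{q_i,\, q_j,\, q_i+q_j\}$ with associated functions $\cL_i, \cL_j, \cL_{i,j}$---it rewrites $-2\, q_i^\top(P-\lambda I)^{-1}q_j$ and $-2\, q_i^\top(P-\lambda I)^{-2}q_j$ exactly as the expressions appearing on the right-hand sides of \eqref{eq:matrixRminusoneoffdiag} and \eqref{eq:matrixRminustwooffdiag} (minus the $2a_{i,j}$ term). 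Substituting into $2R(\lambda)_{i,j} = 2a_{i,j} - 2q_i^\top(P-\lambda I)^{-1}q_j$ and into its $\lambda$-derivative finishes the proof.

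The argument is essentially algebraic bookkeeping and no serious obstacle arises. The one background fact that must be invoked is that, since $vv^\top \succeq 0$, the map $\cL_v$ is continuous, strictly increasing from $\lambda_1^P$ on $[0,\infty)$ and differentiable on $(0,\infty)$---standard rank-one perturbation theory for a simple top eigenvalue---so that $\cL_v(\mu)$ is unambiguously the root of the matrix-determinant-lemma equation in $]\lambda_1^P, \infty[$ and the inverse function theorem applies. This is where the hypothesis $\lambda > \lambda_1^P$ in the lemma is used.
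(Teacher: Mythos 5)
Your proof is correct and takes essentially the same route as the paper: the matrix determinant lemma gives the scalar identity $q_i^\top(P-\lambda I)^{-1}q_i = -1/\cLi^{-1}(\lambda)$, the polarization identity handles the off-diagonal terms, and differentiation combined with the inverse function theorem yields the derivative formulas. The only cosmetic difference is that you factor out the generic identity for $v^\top(P-\lambda I)^{-1}v$ before specializing to $v\in\{q_i,q_j,q_i+q_j\}$, whereas the paper applies the matrix determinant lemma directly to each term; the substance is identical.
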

\begin{proof}
Note that
\begin{equation}\label{eq:matrixRelements}
        R(\lambda)_{i,i} = a_{i,i}-q_i^\top(P-\lambda I_{d-p})^{-1}q_i, \qquad R(\lambda)_{i,j} = a_{i,j}-q_i^\top(P-\lambda I_{d-p})^{-1}q_j.
    \end{equation}
    Transforming \eqref{eq:matrixRelements} with the matrix determinant lemma yields that, for any $\lambda>\lambda_1^P$,    \begin{equation}\label{eq:ijelementmatrixR}
        R(\lambda)_{i,i} = a_{i,i}-q_i^\top(P-\lambda I_{d-p})^{-1}q_i=a_{i,i}+\frac{1}{\cLi^{-1}(\lambda)}.
    \end{equation}
    Moreover, it holds that
    \begin{equation}
        R(\lambda)_{i,j} =a_{i,j}- \frac{(q_i+q_j)^\top(P-\lambda I_{d-p})^{-1}(q_i+q_j) - q_i^\top(P-\lambda I_{d-p})^{-1}q_i - q_j^\top(P-\lambda I_{d-p})^{-1}q_j}{2}. \notag 
    \end{equation}
    From the same transformation with the matrix determinant lemma, it follows that 
    $$(q_i+q_j)^\top(P-\lambda I_{d-p})^{-1}(q_i+q_j) = -\frac{1}{\cL_{i,j}^{-1}(\lambda)}.$$
    Substituting the previous identity in \Cref{eq:ijelementmatrixR} gives \Cref{eq:matrixRminusoneoffdiag}.
    Note that, for $\mu$ such that $\cLi(\mu)>\lambda_1^P$, it holds that $\cLi$ is an increasing differentiable function, so its inverse and derivative are well defined. 
    Finally, by differentiating 
    \Cref{eq:matrixRminusonediag} and \Cref{eq:matrixRminusoneoffdiag}, we get the other two equations.
\end{proof}

\section{Proofs for the characterization of eigenvalues}\label{app:asymptotbehav}

\subsection{Auxiliary results}

As a consequence of \Cref{prop:eigvalrec}, the top $p$ eigenvalues of $D$ are entirely characterized by the functions $\tilde{L}_i$. As these functions are nothing more than patches of the functions $L_i$ on different domains, we first direct our attention to analyzing asymptotic behavior of $L_i(\mu) = \lambda_i(P-q(a-\mu I_p)^{-1}q^\top)$. Notice that
\begin{equation}\label{eq:defM}
    P-q(a-\mu I_p)^{-1}q^\top = \frac{1}{n} U^\top M_n U,
\end{equation}
where $M_n\coloneqq Z-\frac{1}{n}ZS(a-\mu I_p)^{-1}S^\top Z$ is a rank $p$ perturbation of the matrix $Z$.
\begin{lemma}\label{lemma:mueigvalconv}
    For each $\mu>0$,    let $\alpha_1\geq \dots\ \geq \alpha_j > \tau$ be all the solutions to the equation 
    \begin{equation}\label{eq:mueigvalmaster_eq}
        \det\paren{\mu I_p-R^\infty(\alpha)}=0.
    \end{equation}
    Then, for the top $j$ eigenvalues of $M_n$, it holds that
    \begin{equation}\label{conv:jeigconv}
        \lambda_1^M,\dots,\lambda_j^M \asconv \alpha_1, \dots, \alpha_j,
    \end{equation}
    and for the remaining $p-j$ eigenvalues, it holds that
    $$\lambda_{j+1}^M,\dots,\lambda_p^M \asconv \tau.$$
\end{lemma}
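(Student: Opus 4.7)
The plan is to parallel the Schur-type reduction used in the proof of \Cref{prop:eigvalrec}, specialized to the rank-$p$ perturbation $M_n = Z - \frac{1}{n}ZS(a-\mu I_p)^{-1}S^\top Z$ of the diagonal matrix $Z$. For any $\lambda > \lambda_1^Z = \max_i z_i$, the matrix $Z-\lambda I_n$ is invertible, and the eigenvalue equation $M_n v = \lambda v$ rearranges to $v = (Z-\lambda I_n)^{-1}\cdot\frac{1}{n}ZS(a-\mu I_p)^{-1}S^\top Z v$. Setting $w\coloneqq \frac{1}{n}(a-\mu I_p)^{-1}S^\top Z v \in \bbR^p$ and noting that $w=0$ would force $(Z-\lambda I_n)v = 0$ (contradicting $\lambda\notin\Lambda^Z$), the existence of a non-trivial eigenvector $v$ above the spectrum of $Z$ is equivalent to the $p$-dimensional determinantal equation
$$
\det\paren{(a-\mu I_p) - K_n(\lambda)} = 0,
\qquad
K_n(\lambda)\coloneqq \frac{1}{n}S^\top Z(Z-\lambda I_n)^{-1}ZS \in \bbR^{p\times p}.
$$

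Next, I would take the high-dimensional limit. By the entrywise law of large numbers, $a \asconv a^\infty = \expt{zss^\top}$ and, for each $\lambda > \tau$, $K_n(\lambda) \asconv K^\infty(\lambda) \coloneqq \expt{\frac{z^2}{z-\lambda}ss^\top}$; since $z$ is bounded by \Cref{asmp:T}, both $K_n$ and $K^\infty$ are uniformly Lipschitz in $\lambda$ on any compact subset of $]\tau,\infty[$, so the convergence is uniform there. Using $\frac{\alpha z}{\alpha - z} = z + \frac{z^2}{\alpha - z}$, a direct computation gives $(a^\infty - \mu I_p) - K^\infty(\lambda) = R^\infty(\lambda) - \mu I_p$, so the limiting equation is exactly $\det(\mu I_p - R^\infty(\lambda)) = 0$, matching \Cref{eq:mueigvalmaster_eq}. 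For each solution $\alpha_i > \tau$, equivalently $\mu = \lambda_i^\infty(\alpha_i)$, the strict monotonicity of each $\alpha \mapsto \lambda_i^\infty(\alpha)$ on $]\tau,\infty[$ (obtained from the strict negative-definiteness of $\tfrac{d}{d\alpha}R^\infty$) combined with a continuity argument in the spirit of \cite[Lemma A.1]{lu2020phase} identifies, for $i\in[j]$, an eigenvalue $\lambda_i^M$ of $M_n$ with $\lambda_i^M \asconv \alpha_i$; monotonicity also guarantees the correct matching of the ordering $\alpha_1 \geq \cdots \geq \alpha_j$ with $\lambda_1^M \geq \cdots \geq \lambda_j^M$.

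It remains to show that $\lambda_k^M \asconv \tau$ for $j < k \leq p$. Since $\rk(M_n - Z) \leq p$, Weyl-type interlacing for low-rank perturbations gives $\lambda_{k+p}^Z \leq \lambda_k^M \leq \lambda_{k-p}^Z$ whenever $p+1 \leq k \leq n-p$. As the $l$-th largest entry of $Z$ converges a.s.\ to $\tau$ for every fixed $l$ (a consequence of $\tau$ being the right edge of the support of $z$), this forces $\lambda_k^M \asconv \tau$ for $k \geq p+1$ and also supplies the lower bound $\liminf\lambda_k^M \geq \tau$ for every $k \leq p$. The matching upper bound $\limsup \lambda_k^M \leq \tau$ for $j < k \leq p$ is by contradiction: if on a subsequence $\lambda_k^M > \tau + \varepsilon$, at least $k$ eigenvalues of $M_n$ exceed $\tau+\varepsilon$ and each solves the equation of Step~1, whereas uniform convergence of $K_n$ to $K^\infty$ would then produce at least $k > j$ solutions to $\det(\mu I_p - R^\infty(\cdot)) = 0$ in $]\tau+\varepsilon,\infty[$, contradicting the hypothesis that there are only $j$ such solutions. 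The chief technical obstacle is precisely this sticking argument: it hinges on uniform-in-$\lambda$ convergence $K_n\to K^\infty$ on compact sets and on a careful count of solutions of the limit equation via strict monotonicity of the $\lambda_i^\infty$'s. A secondary subtlety is ruling out the null event $\lambda_i^M \in \Lambda^Z$, which can be handled by the continuity-at-the-spectrum analysis underlying \Cref{lemma:samelimits}.
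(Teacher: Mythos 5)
Your proof follows the paper's own route: the same reduction via the matrix determinant lemma / Schur complement to a $p\times p$ determinantal equation in $\lambda$ (the paper writes it as $\det\bigl(\mu I_p - \tfrac{1}{n}\sum_i \tfrac{\alpha z_i s_i s_i^\top}{\alpha - z_i}\bigr) = 0$, algebraically equivalent to your $\det\bigl((a-\mu I_p) - K_n(\lambda)\bigr)=0$), the same LLN step identifying $R^\infty(\alpha) - \mu I_p$ as the limit, and the same appeal to strict monotonicity of the eigenvalue branches together with Lemma~A.1 of \cite{lu2020phase}. The only presentational deviation is in the remaining $p-j$ eigenvalues: the paper sandwiches $\lambda_l^M$ between $\lambda_{p+1}^Z$ and $\max_i z_i$ by counting solutions of the finite-$n$ monotone branches, whereas you use interlacing for the lower bound and a subsequence/contradiction argument for the upper bound — a valid variant of the same idea, though it does tacitly rely on the eigenvalue branches being continuous in $\lambda$ so that limits of several $\lambda_i^M$'s really do produce solutions of the limit equation with matching multiplicity, which the paper's branch-by-branch counting makes explicit.
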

\begin{proof}
    Let us denote by $v\coloneqq ZS$.
    An arbitrary eigenvalue $\lambda_k^M$ of $M_n$ satisfies the equation
    $$\det\paren{Z-\frac{1}{n}v(a-\mu I_p)^{-1}v^\top-\lambda_k^M I_n}=0.$$
    Thus, for $\alpha>\max\{z_i\}$, consider the following equation
    $$\det\paren{Z-\frac{1}{n}v(a-\mu I_p)^{-1}v^\top-\alpha I_n}=0.$$
    As $Z-\alpha I_n$ is invertible for $\alpha>\max\{z_i\}$, we can apply the matrix determinant lemma to obtain the equivalent equation 
    \begin{equation}\label{eq:matdetlemmu}
        \det\paren{\mu I_p-a+\frac{1}{n}v^\top(Z-\alpha I_n)^{-1}v}=0.
    \end{equation}
    Moreover,
    $$ a - \frac{1}{n}v^\top(Z-\alpha I_n)^{-1}v = \frac{1}{n}\sum_{i=1}^n z_i s_i s_i^\top - \frac{1}{n}\sum_{i=1}^n \frac{z_i^2s_is_i^\top}{z_i-\alpha} = \frac{1}{n}\sum_{i=1}^n \frac{\alpha z_is_is_i^\top}{\alpha-z_i}.$$
    Thus, \eqref{eq:matdetlemmu} becomes 
    \begin{equation}\label{eq:prodeigenval}
        \det\paren{\mu I_p -  \frac{1}{n}\sum_{i=1}^n \frac{\alpha z_is_is_i^\top}{\alpha-z_i}}=0.
    \end{equation}
    Let us prove that, for $n$ large enough, this equation indeed has its top $j$ solutions for $\alpha>\max\{z_i\}$.
    First, note that
    $$\det\paren{\mu I_p -  \frac{1}{n}\sum_{i=1}^n \frac{\alpha z_is_is_i^\top}{\alpha-z_i}}=\prod_{i=1}^p\paren{\mu-\lambda_i(\alpha)},$$
    where through abuse of notation we define $\lambda_i(\alpha):]\max\{z_i\},+\infty[\to \bbR$ as $ \lambda_i\paren{\frac{1}{n}\sum_{i=1}^n \frac{\alpha z_is_is_i^\top}{\alpha-z_i}}$. Each function $\lambda_i$ is continuous and strictly decreasing. This can be seen by taking arbitrary $\alpha_2>\alpha_1$ to get
    $$\lambda_i(\alpha_1)-\lambda_i(\alpha_2)\geq \lambda_p\paren{(\alpha_2-\alpha_1)\frac{1}{n}\sum_{i=1}^n \frac{z_i^2s_is_i^\top}{(\alpha_1-z_i)(\alpha_2-z_i)}}>0,$$
    by using Weyl's inequality and the fact that there are almost surely at least $p$ linearly independent vectors $z_is_i$ by \Cref{lemma:fr}. 
    Consequently, to prove that \eqref{eq:prodeigenval} has $j$ solutions for $\alpha>\max\{z_i\}$, it equivalent to prove that, for each $i$, 
    \begin{equation}\label{eq:betas}
        \lambda_i(\beta_i')>\mu>\lambda_i(\beta_i''),
    \end{equation}
    for some $\beta_i''>\beta_i'>\max{z_i}$. 
    
    Note that, for any fixed $\alpha$, it holds that
    \begin{equation}\label{eq:eigenvalconvergence}
        \frac{1}{n}\sum_{i=1}^n \frac{\alpha z_is_is_i^\top}{\alpha-z_i}\asconv \expt{\frac{\alpha zss^\top}{\alpha-z}} = \Rinfty(\alpha),
    \end{equation}
    due to the law of large numbers. Due to the continuity of eigenvalues, it further follows that 
    $$\lambda_i\paren{\frac{1}{n}\sum_{i=1}^n \frac{\alpha z_is_is_i^\top}{\alpha-z_i}}\asconv \lambda_i\paren{\expt{\frac{\alpha zss^\top}{\alpha-z}}}=\lambda_i^\infty(\alpha),$$
    where $\lambda_i^\infty(\alpha)$ is continuous and strictly decreasing. This can be seen as, for any $\alpha>\tau$ and any arbitrary vector $x\in \bbR^p$, it holds that
    \begin{equation}\label{eq:matrixRdecreases}
        \frac{d}{d\alpha}\paren{x^\top\paren{\expt{\frac{\alpha zss^\top}{\alpha-z}}}x} = -\expt{\frac{\inprod{x}{s}^2z^2}{(\alpha-z)^2}}<0,
    \end{equation}
    since $\prob{z=0}<1$. 
    Moreover, for $i\in[p]$,
    $$\lim_{\alpha\to\infty}\lambda_i^\infty(\alpha) = \lambda_i^{a^\infty},$$
    where the matrix $a^\infty = \expt{zss^\top}$ is the limit of the matrix $a$.
    The condition of the lemma states that there exist $\alpha_1\geq \dots\ \geq \alpha_j > \tau$ such that
    $$\det\paren{\mu I_p-R^\infty(\alpha)}=0.$$
    Let us denote by $k\in\{0,\dots, p\}$ the index such that $\lambda_{k+1}^{a^\infty}\leq\mu<\lambda_k^{a^\infty}$, with the abuse of notation $\lambda_0^{a^\infty}\coloneq +\infty$ and $\lambda_{p+1}^{a^\infty}\coloneq -\infty$. 
    By assumption 
    \begin{equation}\label{eq:deteq}
        \det\paren{\mu I_p-R^\infty(\alpha)}=\prod_{i=0}^p(\mu-\lambda_i^\infty(\alpha)),
    \end{equation}
    has $j$ solutions in $\alpha \in ]\tau,+\infty[$. Note that $\lambda_i^\infty$ is a strictly decreasing continuous function, so the only way that $\lambda_i^\infty-\mu$ does not have a solution in $\alpha\in]\tau,+\infty[$ is if either
    $\lim_{\alpha\to\tau^+} \lambda_i^\infty(\alpha)<\mu$ or $\lim_{\alpha\to\infty} \lambda_i^\infty(\alpha)>\mu.$
    Moreover, since $\lim_{\alpha\to\infty} \lambda_i^\infty(\alpha) = \lambda_i^{a^\infty}$, it will exactly hold for $i\in\{1,\dots k\}$ that 
    \begin{equation}\label{eq:larberthanmu}
        \lim_{\alpha\to\infty} \lambda_i^\infty(\alpha)\geq\lambda_k^{a^\infty}>\mu.
    \end{equation}
    The fact that there are only $j$ solutions to \eqref{eq:deteq} and $\lambda_i^\infty(\alpha)>\lambda_{i+1}^\infty(\alpha)$ implies that 
    $$\lambda_{i+k}^\infty(\alpha_i) = \mu,$$
    for $i\in [j]$,
    as well as
    \begin{equation}\label{eq:smallerthanmu}
        \lim_{\alpha\to\infty} \lambda_i^\infty(\alpha)<\mu,
    \end{equation}
    for $i\in\{j+k+1,\dots,p\}$.
    
    As each $\lambda_i^\infty$ is a strictly decreasing continuous function, this further implies that there exists some constant $\epsilon>0$ and $\alpha_1',\dots, \alpha_j'>\tau$ such that 
    $$\lambda_{i+k}^\infty(\alpha_i') = \mu+\epsilon.$$
    Applying the convergence of \eqref{eq:eigenvalconvergence} it further holds that 
    $$\lambda_{i+k}\paren{\frac{1}{n}\sum_{i=1}^n \frac{\alpha_i' z_is_is_i^\top}{\alpha_i'-z_i}} \asconv \lambda_{i+k}^\infty(\alpha_i')=\mu+\epsilon.$$
    Thus, for each $i$ and $\epsilon>\epsilon_1>0$, there exists $n_0$ s.t.\ for $n>n_0$ 
    $$\abs{\lambda_{i+k}\paren{\frac{1}{n}\sum_{i=1}^n \frac{\alpha_i' z_is_is_i^\top}{\alpha_i'-z_i}} - (\mu+\epsilon)}< \epsilon_1.$$
    Developing the absolute value, it holds that 
    $$ \lambda_{i+k}\paren{\frac{1}{n}\sum_{i=1}^n \frac{\alpha_i' z_is_is_i^\top}{\alpha_i'-z_i}} > \mu+\epsilon-\epsilon_1>\mu,$$
    e.g.\ by taking $\epsilon_1=\epsilon/2$.
    As $\lambda_{i+k}(\alpha) = \lambda_{i+k}\paren{\frac{1}{n}\sum_{i=1}^n \frac{\alpha z_is_is_i^\top}{\alpha-z_i}}$ is a continuous decreasing function, starting from some $n_0$ there exists $\beta_i'>\tau$ s.t.\ $\lambda_{i+k}(\beta_i') > \mu$. Notice that by definition $\tau>\max{z_i}$ almost surely. In the same way as for $\beta_i'$ it can be proved that there exists $\beta_i''>\tau$ such that $\lambda_{i+k}(\beta_i'') < \mu$.
    
    Thus, we conclude that, for large enough $n$,  $\lambda_{i+k}(\alpha)=\mu$ has $j$ solutions larger than $\max\{z_i\}$.
    These are indeed $\lambda_i^M$ for $1\leq i\leq j$. Due to monotonicity, each $\lambda_i$ admits a functional inverse and it holds that 
    $$\lambda_i^M = \lambda_{i+k}^{-1}(\mu).$$
    As $\lambda_{i+k} \asconv \lambda_{i+k}^\infty$, applying \cite[Lemma A.1]{lu2020phase} implies that
    \begin{equation}\label{eq:klimit}
        \lambda_i^M \asconv (\lambda_{i+k}^\infty)^{-1}(\mu),
    \end{equation}
    for $1\leq i\leq j$, which is exactly the statement \eqref{conv:jeigconv} of the lemma.

    Let us now prove the second part of the statement. To do so, we prove that, for large enough $n$, \eqref{eq:prodeigenval} has no more than $j$ solution for $\alpha>\max\{z_i\}$. 
    As stated in \eqref{eq:larberthanmu} and \eqref{eq:smallerthanmu}, it holds that $
\lim_{\alpha\to\infty}\lambda_i^\infty(\alpha)>\mu$ for $i$ s.t.\ $1\leq i\leq k$ and that $\lim_{\alpha\to\tau^+}\lambda_i^\infty(\alpha)<\mu$ for $i$ s.t.\ $j+1+k\leq i\leq p$. Thus, using the same argument as before, we also have that, for large enough $n$ and any $\alpha>\tau$, $\lambda_i(\alpha)>\mu$ for $i$ s.t.\ $1\leq i\leq k$ and $\lambda_i(\alpha)<\mu$ for $i$ s.t.\ $j+1+k\leq i\leq p$. Since  
    $\max\{z_i\}\asconv \tau$, as $\tau$ is the right edge of the support of $z$, such inequalities also hold for $\alpha>\max\{z_i\}$ (and $n$ large enough).
    Hence, there cannot exist $\beta_i'$ and $\beta_i''$ that satisfy \eqref{eq:betas}, so \eqref{eq:prodeigenval} cannot have more than $j$ solutions in $\alpha> \max\{z_i\}$. 
    
    From this, it directly follows that, for $n$ large enough and any $l\in\{j+1\dots p\}$,  
    $$\lambda_l^M\leq \max\{z_i\},$$
    almost surely. Furthermore, from the interlacing theorem, it holds that 
    $$\lambda_{p+1}^Z\leq\lambda_l^M,$$
    for any $l\in\{j+1,\dots, p\}$. Thus, the $\lambda_l^M$'s are sandwiched between the first and the $p$-th largest value of $Z$, both of which converge to the right edge of the bulk $\tau$ as a finite order statistic, which gives the desired result.
\end{proof}

\begin{proposition}\label{prop:tildeLiconv}
    For any fixed $\mu\in]\lambda_i^{a^\infty}, t_i^\infty[$, it holds that
    \begin{equation}\label{eq:p1}
    \tildeLi(\mu) \asconv \tildeLiinfty(\mu)=\zeta_\delta\paren{(\lambda^\infty_i)^{-1}(\mu)}.    
    \end{equation}
Furthemore, for any fixed $\mu\in]t_i^\infty,+\infty[$, it holds that 
\begin{equation}\label{eq:p2}
    \tildeLi(\mu) \asconv\zeta_\delta\paren{\lambdabardelta}.
\end{equation}
\end{proposition}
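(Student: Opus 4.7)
The proof strategy hinges on the decomposition $P - q(a-\mu I_p)^{-1}q^\top = \frac{1}{n}U^\top M_n U$ from \Cref{eq:defM}, which recasts $L_i(\mu) = \lambda_i\paren{\frac{1}{n}U^\top M_n U}$ as the $i$-th eigenvalue of a sample-covariance-type matrix whose ``population'' $M_n = Z - \frac{1}{n}ZS(a-\mu I_p)^{-1}S^\top Z$ is a rank-$p$ perturbation of $Z$ and, crucially, is independent of $U$ (since $U$ depends only on the Gaussian coordinates orthogonal to the $s_i$'s, while $M_n$ depends only on $Z$ and $S$). I would therefore invoke the BBP-type result of \cite{bai-yao-2012} for generalized spiked population models: outliers $\alpha > \lambdabardelta$ of $M_n$ are lifted to outliers at $\psidelta(\alpha)$ of $\frac{1}{n}U^\top M_n U$, while population outliers $\alpha \in ]\tau, \lambdabardelta]$ are absorbed into the bulk and their sample counterparts stick to the bulk edge $\psidelta(\lambdabardelta)$. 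In both regimes the resulting sample limit can be written compactly as $\zetadelta(\alpha)$ by definition of $\zetadelta$.

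The input to this BBP step is the limiting spectrum of $M_n$. Its empirical spectral distribution converges to the law of $z$ (rank-$p$ perturbations preserve the limiting distribution), so its bulk right edge is $\tau$. Its top-$p$ eigenvalues are fully resolved by \Cref{lemma:mueigvalconv}: if $\alpha_1 \geq \dots \geq \alpha_j > \tau$ enumerate the solutions of $\det(\mu I_p - R^\infty(\alpha)) = 0$, then $\lambda_l^M \asconv \alpha_l$ for $l \leq j$ and $\lambda_l^M \asconv \tau$ for $j < l \leq p$.

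Next, I would match the index of $\tildeLi(\mu)$ with the correct eigenvalue of the sample covariance. For fixed $\mu$, let $k_0 \in \{0,\dots,p\}$ be such that $\lambda_{k_0+1}^{a^\infty} \leq \mu < \lambda_{k_0}^{a^\infty}$ (with $\lambda_0^{a^\infty}=+\infty$). Since $\lambda_l^a \asconv \lambda_l^{a^\infty}$, almost surely for large $n$ the piecewise definition \Cref{eq:defLtilde} collapses to $\tildeLi(\mu) = L_{i-k_0}(\mu) = \lambda_{i-k_0}\paren{\frac{1}{n}U^\top M_n U}$. Following the labeling in the proof of \Cref{lemma:mueigvalconv}, the outliers satisfy $\alpha_l = (\lambda_{l+k_0}^\infty)^{-1}(\mu)$, so in particular $\alpha_{i-k_0} = (\lambda_i^\infty)^{-1}(\mu)$ for $\mu \in ]\lambda_i^{a^\infty}, t_i^\infty[$. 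Combining these, Bai-Yao gives $\lambda_{i-k_0}\paren{\frac{1}{n}U^\top M_n U} \asconv \zetadelta(\alpha_{i-k_0}) = \zetadelta((\lambda_i^\infty)^{-1}(\mu))$, regardless of whether $\alpha_{i-k_0}$ crosses $\lambdabardelta$, which proves \Cref{eq:p1}. For $\mu > t_i^\infty$, I would check that the position $i - k_0$ lies strictly beyond the number of genuine outliers of $M_n$ above $\tau$ (since $\mu > t_l^\infty$ for $l \geq i$ rules out $\lambda_l^\infty(\alpha)=\mu$ having a solution in $]\tau,\infty[$), so $\lambda_{i-k_0}^M \asconv \tau$ and the corresponding sample eigenvalue sticks to $\psidelta(\lambdabardelta) = \zetadelta(\lambdabardelta)$, yielding \Cref{eq:p2}.

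The main obstacle I anticipate is the index bookkeeping: justifying that the piecewise $\tildeLi$ reduces to a single $L_{i-k_0}$ almost surely for large $n$, and that the labeling of $M_n$'s outliers aligns so that the limit picks out $(\lambda_i^\infty)^{-1}(\mu)$ rather than $(\lambda_l^\infty)^{-1}(\mu)$ for a neighboring $l$. This requires comparing the shift $k_0$ (counting eigenvalues of $a^\infty$ strictly above $\mu$) with the outlier count $j$ from \Cref{lemma:mueigvalconv}, and checking that the monotone behaviour of the strictly decreasing functions $\lambda_l^\infty$ together with the ordering $t_1^\infty \geq t_2^\infty \geq \dots$ forces the desired match in both regimes of $\mu$.
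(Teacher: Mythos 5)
Your proposal matches the paper's own proof essentially step for step: the decomposition $\frac{1}{n}U^\top M_n U$, the reduction $\tildeLi(\mu) = L_{i-k_0}(\mu)$ for $n$ large via $\lambda_l^a \asconv \lambda_l^{a^\infty}$, the invocation of \Cref{lemma:mueigvalconv} to locate $\lambda_{i-k_0}^M$, and the Bai--Yao spiked-covariance limit yielding $\zetadelta$ in both regimes. The only omission is that \cite{bai-yao-2012} requires the population matrix to be positive, so for a general bounded $\cT$ one must additionally invoke the shift-and-rescale adjustment of \cite[Lemma 3]{mondelli-montanari-2018-fundamental}, as the paper does.
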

\begin{proof}
    Let $k$ be such that $\lambda_{k+1}^{a^\infty}\leq\mu<\lambda_k^{a^\infty}$, with the abuse of notation $\lambda_0^{a^\infty}\coloneq +\infty$. Then, by the definition of $\tildeLi$ in \eqref{eq:defLtilde}, it holds that
    \begin{equation}\label{eq:Likconv}
        \tildeLi(\mu) = L_{i-k}(\mu),
    \end{equation}
    for $n$ large enough. This is true since $\lambda_i^a\asconv  \lambda_i^{a^\infty}$, as $a\asconv a^\infty$.
    
    To obtain the convergence of the RHS in \eqref{eq:Likconv}, we rely on the results from \cite{bai-yao-2012}.
    Towards this end, we recall the definition of $L_{i-k}(\mu) = \lambda_{i-k}(P-q(a-\mu I_p)^{-1}q^\top)$ as in \eqref{eq:defLtilde}. Given the equality in \eqref{eq:defM}, 
    we turn our attention to the eigenvalues of $M_n$. Recall that the functions $\lambda^\infty_i(\alpha)$ are strictly decreasing with limits
    $$\lim_{\alpha\to\tau^+} \lambda^\infty_i(\alpha) = t_i^\infty, \text{ and } \lim_{\alpha\to+\infty} \lambda^\infty_i(\alpha) = \lambda_i^{a^\infty}.$$
    Then, as $\mu\in]\lambda_i^{a^\infty}, t_i^\infty[$, the equation
    $$\lambda^\infty_i(\alpha)=\mu$$
    has a unique solution in $\alpha>\tau$. Let us denote that solution by $\alpha_{i-k}$. Due to the fact that 
    $$\lambda_{i}^{a^\infty}\dots\leq\lambda_{k+1}^{a^\infty}\leq\mu<\lambda_k^{a^\infty} \text{ and } \mu < t_i^\infty\dots<t_{1}^\infty,$$ using the same argument as in the proof of \Cref{lemma:mueigvalconv} below \eqref{eq:deteq}, we conclude that there are unique solutions $\alpha_1,\dots,\alpha_{i-k}$ s.t.\ $\lambda_{j+k}(\alpha_j)=\mu$ for $j\in[i-k]$. Then, it holds that $\alpha_1, \dots,  \alpha_{i-k}$ satisfy the conditions of \Cref{lemma:mueigvalconv}. From its proof, specifically \eqref{eq:klimit}, it follows that
    $$\lambda_{i-k}^M\asconv {\lambda_{i}^\infty}^{-1}(\mu) = \alpha_{i-k}.$$
    Furthermore, the empirical spectral distribution of $M_n$ converges almost surely to the distribution of $z$.
    This claim follows from Cauchy's interlacing theorem, using the same argument as in the proof of \cite[Proposition 3.2]{lu2020phase}. Assuming that the preprocessing function $\cT$ is positive, we can apply \cite[Theorems 4.1 and 4.2]{bai-yao-2012} to get
    $$L_{i-k}(\mu)\asconv\zeta_\delta\paren{(\lambda^\infty_i)^{-1}(\mu)},$$
    following the steps in \cite[Proposition 3.3]{lu2020phase}. Finally, the adjustment in  \cite[Lemma 3]{mondelli-montanari-2018-fundamental} covers the case in which the preprocessing function is not necessarily positive, and the proof of \Cref{eq:p1} is complete. 

Let us now consider  $\mu\in]t_i^\infty,+\infty[$ and prove \Cref{eq:p2}. 
    Note that, in this interval of $\mu$, the equation
    $$\lambda^\infty_i(\alpha)=\mu$$
    has no solutions in $\alpha>\tau$. Let us examine the equation \eqref{eq:mueigvalmaster_eq} in \Cref{lemma:mueigvalconv}:
    $$\det\paren{\mu I_p-R^\infty(\alpha)}=\prod_{l=1}^p(\mu-\lambda_l^\infty(\alpha))=0.$$
    The previous equation has a solution $\mu-\lambda_l^\infty(\alpha)=0$, as long as
    \begin{equation}\label{eq:intervalinequal}
        \lambda_l^{a^\infty}<\mu<t_l^\infty,
    \end{equation}
    due to the monotonicity of each $\lambda_l^\infty(\alpha)$.
    As 
    $$\lambda_{i}^{a^\infty}\dots\leq\lambda_{k+1}^{a^\infty}\leq\mu<\lambda_k^{a^\infty} \text{ and } \mu>  t_i^\infty\dots\geq t_{p}^\infty,$$ 
    it follows that \eqref{eq:intervalinequal}, thus also \eqref{eq:mueigvalmaster_eq}, can have at most $i-(k+1)$ solutions. Thus, applying \Cref{lemma:mueigvalconv} it follows that 
    $$\lambda_{i-k}^M\asconv \tau,$$
    implying that $M_n$ in limit has at most $i-(k+1)$ spikes outside the bulk.
    Moreover, the empirical spectral distribution of $M_n$ converges almost surely to the distribution of $z$, so we conclude that 
    $$\tildeLi(\mu) = L_{i-k}(\mu)\asconv \zetadelta(\lambdabardelta),$$
    as the limit of the right edge of the spectral distribution of $\frac{1}{n} U^\top ZU$ by 
    \cite[Lemma 3.1]{bai-yao-2012}, which is due to \cite[Section 4]{silverstein1995analysis}.
\end{proof}

\subsection{Proof of \Cref{thm:eigvalconv}}\label{app:pf2}

We start by recalling some definitions. 
For each $i$, let
$\lambda_i^\infty (\alpha) = \lambda_i(R^\infty(\alpha))$
be the $i$-th largest eigenvalue of $ R^\infty(\alpha) $ and let  $t_i^\infty$ be 
$$t_i^\infty\coloneq \lim_{\alpha\to\tau} \lambda^\infty_i(\alpha).$$
Furthermore, due to the law of large numbers, we have 
$$a\asconv a^\infty \coloneq \expt{zss^\top}\in \bbR^{p\times p}.$$
Lastly, consider
\begin{equation}\label{eq:tildeLiinftydef}
\tildeLiinfty(\mu)\coloneq\zeta_\delta\paren{(\lambda^\infty_i)^{-1}(\mu)}, 
\end{equation}
on the domain $\mu\in]\lambda_i^{a^\infty}, t_i^\infty[$. Note that $\tildeLiinfty(\mu)$ is a continuous non-increasing function, as it is the composition of a non-decreasing function and a strictly increasing function.
We are now ready to prove our characterization of the eigenvalues of $D_n$.

\begin{proof}
    Note that \eqref{eq:master_eq2} can be reformulated as
    \begin{equation}\label{eq:zetadeltalambda}
        \det\paren{\zetadelta(\alpha)I_p-R^\infty(\alpha)}=\prod_{i=1}^p( \zetadelta(\alpha)-\lambda_i^\infty(\alpha))=0.
    \end{equation}
    The assumption of the theorem implies that $\alpha_1\geq\dots \geq\alpha_j>\tau$ satisfy \eqref{eq:zetadeltalambda}.
    Recall that each function $\zetadelta(\alpha) - \lambda_i^\infty(\alpha)$ is strictly increasing on $]\tau,+\infty[$, with the right edge limit $+\infty$. Therefore, the implicit assumption that $j\in[p]$ is justified as there could be at most $1$ solution to the equation
    $$\zetadelta(\alpha) - \lambda_i^\infty(\alpha)=0,$$
    for each $i\in[p]$.
    Moreover, it holds by definition that $\lambda_1^\infty(\alpha)\geq\dots\geq\lambda_{p}^\infty(\alpha)$. Thus, it must be that each $\alpha_i$ is the unique solution to 
    $$\zetadelta(\alpha) - \lambda_i^\infty(\alpha)=0,$$
    for $i\in[j]$ and $\alpha>\tau$.
    Let us denote by $\mu_i^*\coloneq {\lambda_i^\infty}(\alpha_i)\in]\lambda_i^{a^\infty},\tau_i^\infty[$. Then $\mu_i^*$ is a solution to the equation
    $$\tildeLiinfty(\mu)-\mu=0,$$
    in the domain of definition $\tildeLiinfty(\mu)$, as $\tildeLiinfty(\mu_i^*) = \zetadelta((\lambda_i^\infty)^{-1}(\lambda_i^\infty(\alpha_i)))$. Moreover, $\mu_i^*$ is the unique such solution, due to the strict monotonicity of $\tildeLiinfty(\mu)-\mu$ on its domain $]\lambda_i^{a^\infty},\tau_i^\infty[$.
    
    Furthermore, \Cref{prop:eigvalrec} implies that each $\lambda_i^D$ is the unique solution to \eqref{eq:eigvalrec}. For each $\mu$ where $\tildeLiinfty(\mu)$ is defined, it holds that
    \begin{equation}\label{eq:asconvtildeli}
        \tildeLi(\mu) - \mu \asconv\tildeLiinfty(\mu) - \mu,
    \end{equation}
    by \Cref{prop:tildeLiconv} in \Cref{app:asymptotbehav}.
    
    As both $\tildeLi(\mu)$ and $\tildeLiinfty(\mu)$ are non-increasing, the functions $\tildeLi(\mu) - \mu$ and  $\tildeLiinfty(\mu)-\mu$ are strictly decreasing. Hence, by \cite[Lemma A.1]{lu2020phase}, it holds that
    \begin{equation}\label{eq:fplast}
\lambda_i^D\asconv\tildeLiinfty(\mu_i^*).
    \end{equation}
    Substituting $\mu_i^* = {\lambda_i^\infty}(\alpha_i)$ in \Cref{eq:fplast} gives \eqref{eq:outsidebulkeigconv} for $i\in[j]$.
    
    It remains to show the claim for the remaining $p-j$ eigenvalues. As \eqref{eq:master_eq2} has only $j$ solutions by assumption, it follows that
    $$\zetadelta(\alpha) - \lambda_i^\infty(\alpha) = \tildeLiinfty(\lambda_i^\infty(\alpha))-\lambda_i^\infty(\alpha)=0$$
    has no solutions for $\alpha>\tau$ and $i> j$. Denoting $\mu = \lambda_i^\infty(\alpha)$, it further holds that
    $$\tildeLiinfty(\mu)-\mu=0$$
    has no solutions for $\mu \in ]\lambda_i^{a^\infty},t_i^\infty[$. Since 
    $$\lim_{\mu\to\lambda_i^{a^\infty}} \tildeLiinfty(\mu) - \mu = +\infty, $$
    it must be that $\tildeLiinfty(\mu)-\mu>0$ for all $\mu\in]\lambda_i^{a^\infty},t_i^\infty[$.
    Using \eqref{eq:asconvtildeli}, we 
    have that
    $$\tildeLi(\mu)-\mu>0,$$
    for all $\mu \in ]\lambda_i^{a^\infty},t_i^\infty[$ and $n$ large enough. As $\lambda_i^a\asconv \lambda_i^{a^\infty}$ and each $\tildeLi(\mu)$ is defined on $]\lambda_i^a,+\infty[$, the solution to the equation
    $$\tildeLi(\mu) - \mu =0 $$
    must be for $\mu>t_i^\infty$.
    Then, applying \Cref{prop:tildeLiconv}, for any fixed $\mu$ it holds that 
    $$\tildeLi(\mu)\asconv \zetadelta(\lambdabardelta).$$
    Lastly, as both $\mu-\tildeLi(\mu)$ and $\mu-\zetadelta(\lambdabardelta)$ are increasing functions, Lemma A.1 in \cite{lu2020phase} implies that 
    $$\lambda_i^D\asconv \zetadelta(\lambdabardelta),$$
    for all $i>j$, which proves the claim.
\end{proof}

\subsection{Number of solutions to \Cref{eq:master_eq2}}\label{app:numsol}

\begin{proposition}\label{prop:exactlyp}
The equation in \Cref{eq:master_eq2} has at most $p$ solutions. Furthermore, if \Cref{assmp:psol} holds, then \Cref{eq:master_eq2} has exactly $p$ solutions. 
\end{proposition}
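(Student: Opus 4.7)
The plan is to factor the determinant in \Cref{eq:master_eq2} via the eigendecomposition of the symmetric matrix $R^\infty(\alpha)$ as
\[
\det\paren{\zetadelta(\alpha) I_p - R^\infty(\alpha)} = \prod_{i=1}^p \paren{\zetadelta(\alpha) - \lambda_i^\infty(\alpha)},
\]
and then analyze each factor $f_i(\alpha) := \zetadelta(\alpha) - \lambda_i^\infty(\alpha)$ on $]\tau, +\infty[$ separately. The solutions of \Cref{eq:master_eq2}, counted with multiplicity, then correspond precisely to the zeros of these $p$ factors.

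For the upper bound, I will verify that each $f_i$ is strictly increasing. Indeed, $\zetadelta$ is non-decreasing: it is constant on $]\tau, \lambdabardelta]$ and coincides with $\psidelta$ on $]\lambdabardelta, +\infty[$, where $\psidelta$ is strictly increasing since, by convexity, $\lambdabardelta$ is its unique minimizer. Moreover, each eigenvalue $\lambda_i^\infty(\alpha)$ of $R^\infty(\alpha)$ is strictly decreasing on $]\tau, +\infty[$, as already established by the derivative computation in \Cref{eq:matrixRdecreases}. Hence $f_i$ is strictly increasing, so it has at most one zero, giving at most $p$ zeros in total (with multiplicity), which proves the first claim.

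For the second part, under \Cref{assmp:psol}, I will apply the intermediate value theorem to each $f_i$ by controlling its limits at the two endpoints. As $\alpha \to +\infty$, the entries of $R^\infty(\alpha)$ converge to those of $a^\infty = \expt{zss^\top}$, so $\lambda_i^\infty(\alpha) \to \lambda_i^{a^\infty}$ remains finite, while $\zetadelta(\alpha) \ge \alpha / \delta \to +\infty$, giving $f_i(\alpha) \to +\infty$. As $\alpha \to \tau^+$, $\zetadelta(\alpha) = \psidelta(\lambdabardelta)$ is finite, so it remains to show that $\lambda_i^\infty(\alpha) \to +\infty$ for every $i$, and in fact it suffices to handle the smallest eigenvalue $\lambda_p^\infty(\alpha) = \min_{\normtwo{x}=1} x^\top R^\infty(\alpha) x$, since $\lambda_i^\infty \ge \lambda_p^\infty$ for all $i \in [p]$.

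The hard part is this last step, because \Cref{assmp:psol} states $\inf_{\normtwo{x}=1} \lim_{\alpha\to\tau^+} x^\top R^\infty(\alpha) x = +\infty$, i.e., $\inf_x \sup_\alpha$ is infinite (using that $\alpha\mapsto x^\top R^\infty(\alpha)x$ is strictly decreasing, so the limit equals the sup), whereas the quantity I need to blow up is $\sup_{\alpha>\tau} \lambda_p^\infty(\alpha) = \sup_\alpha \inf_x$, and only the trivial inequality $\sup_\alpha \inf_x \le \inf_x \sup_\alpha$ is available a priori. To bridge this gap I will argue by contradiction: if $\lambda_p^\infty(\alpha)$ stayed bounded as $\alpha \to \tau^+$, one could extract $\alpha_n \to \tau^+$ and unit vectors $x_n$ with $x_n^\top R^\infty(\alpha_n) x_n$ bounded above, and by compactness of $\bbS^{p-1}$ pass to a subsequence along which $x_n \to x^\star$; then, for any fixed $\alpha>\tau$, the $\alpha$-monotonicity from \Cref{eq:matrixRdecreases} together with continuity of $x\mapsto x^\top R^\infty(\alpha)x$ would yield $x^{\star\top} R^\infty(\alpha) x^\star \le \liminf_n x_n^\top R^\infty(\alpha_n) x_n < +\infty$, so that $\sup_{\alpha>\tau} x^{\star\top} R^\infty(\alpha) x^\star < +\infty$, contradicting \Cref{assmp:psol} applied to $x^\star$. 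Combined with the IVT, each $f_i$ then has exactly one zero in $]\tau,+\infty[$, yielding exactly $p$ solutions with multiplicity.
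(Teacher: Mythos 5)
Your proof has the same backbone as the paper's: factor $\det(\zetadelta(\alpha)I_p - R^\infty(\alpha)) = \prod_{i=1}^p(\zetadelta(\alpha) - \lambda_i^\infty(\alpha))$, note each factor is continuous and strictly increasing on $]\tau,+\infty[$ (so each has at most one zero, giving at most $p$ zeros in total), and then apply the intermediate value theorem after controlling the two endpoint limits. Where you go beyond the paper is in the step that derives $\lim_{\alpha\to\tau^+}\lambda_p^\infty(\alpha)=+\infty$ from \Cref{assmp:psol}. The paper simply asserts that $\inf_{\normtwo{x}=1}\lim_{\alpha\to\tau^+}x^\top R^\infty(\alpha)x=+\infty$ ``is equivalent to'' $\lim_{\alpha\to\tau^+}\lambda_i^\infty(\alpha)=+\infty$, but, as you correctly observe, this conflates an $\inf_x\sup_\alpha$ statement with a $\sup_\alpha\inf_x$ one, and the trivial inequality goes the wrong way. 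Your compactness argument — extract unit minimizers $x_n$ witnessing the assumed boundedness of $\lambda_p^\infty(\alpha_n)$ along $\alpha_n\to\tau^+$, pass to a subsequential limit $x^\star\in\bbS^{p-1}$, and use the $\alpha$-monotonicity of the quadratic form from \Cref{eq:matrixRdecreases} to conclude $\sup_{\alpha>\tau} x^{\star\top}R^\infty(\alpha)x^\star<\infty$, contradicting \Cref{assmp:psol} at $x^\star$ — is exactly the missing bridge, and it makes the proof rigorous precisely where the paper's is not. One small slip: the pointwise bound $\zetadelta(\alpha)\ge\alpha/\delta$ need not hold, since $\expt{z/(\alpha-z)}$ may be negative; the correct justification that $\zetadelta(\alpha)\to+\infty$ as $\alpha\to+\infty$ is that $\alpha\expt{z/(\alpha-z)}\to\expt{z}$ is bounded (by dominated convergence, using boundedness of $\cT$), while $\alpha/\delta\to+\infty$. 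This does not affect the rest of the argument.
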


\begin{proof}
    As stated in the proof of \Cref{thm:eigvalconv}, it holds that
    \begin{equation}\label{eq:detzetali}
        \det\paren{\zetadelta(\alpha)I-R^\infty(\alpha)}= \prod_{i=1}^p (\zetadelta(\alpha) - \lambda_i^\infty(\alpha)).
    \end{equation}
    Note that the function $\zetadelta(\alpha) - \lambda_i^\infty(\alpha)$ is continous and strictly increasing for $\alpha\in]\tau,+\infty[$, so that 
    $$\lim_{\alpha\to\infty}\zetadelta(\alpha) - \lambda_i^\infty(\alpha) = +\infty.$$
    Thus, the equation in \Cref{eq:master_eq2} has at most $p$ solutions. Furthermore, the assumption in \Cref{assmp:psol} implies that 
    $$\inf_{\norm{2}{x}=1}\lim_{\alpha\to\tau^+}x^\top R^\infty(\alpha)x = +\infty,$$
    which is equivalent to
    $$\lim_{\alpha\to\tau^+}\lambda_i^\infty(\alpha) = +\infty.$$
    As $\lim_{\alpha\to\tau^+}\zetadelta(\alpha) = \lambdabardelta<+\infty$, it then holds that
    $$\lim_{\alpha\to\tau^+}\zetadelta(\alpha) - \lambda_i^\infty(\alpha) = -\infty,$$
    proving there must be exactly $p$ solutions to \Cref{eq:master_eq2} due to the intermediate  value theorem.
\end{proof}

\section{Proofs for the characterization of eigenvectors}\label{app:asymptotbehav-eigvec}

\subsection{Auxiliary results}\label{app:auxeig}

\begin{proposition}\label{thm:matrixRconvergence}
     Let $k\in[p]$ be such that $\alpha_k>\lambdabardelta$, where $\alpha_k$ is the $k$-th largest solution of \Cref{eq:master_eq2}. 
     Then, it holds that
     \begin{equation}\label{eq:matrixRconv}
         R(\lambda_k^D)\asconv R^\infty(\alpha_k),
     \end{equation}
     and
     \begin{equation}\label{eq:matrixderRconv}
         \frac{d}{d\lambda}R(\lambda_k^D)\asconv \frac{1}{\zetadelta'(\alpha_k)}\frac{d}{d\alpha}R^\infty(\alpha_k).
     \end{equation}
\end{proposition}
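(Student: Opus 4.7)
The plan is to analyze $R(\lambda)$ and its derivative entry-wise using the formulas in \Cref{lemma:matrixRchar}, which express them in terms of the rank-one perturbation quantities $\cLi^{-1}(\lambda)$, $\cL_{i,j}^{-1}(\lambda)$ and the derivatives $\cLi'$, $\cL_{i,j}'$. Since $\alpha_k>\lambdabardelta$ by assumption, \Cref{thm:eigvalconv} gives $\lambda_k^D \asconv \zetadelta(\alpha_k) = \psidelta(\alpha_k)$, so the task reduces to identifying the deterministic limit of the right-hand sides of \eqref{eq:matrixRminusonediag}--\eqref{eq:matrixRminustwooffdiag} when the argument is set to $\psidelta(\alpha_k)$ and then transferring the convergence from this deterministic argument to the random $\lambda_k^D$.

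For \eqref{eq:matrixRconv}, the first summand $a_{i,j}$ converges by the law of large numbers to $\expt{zs_is_j}$. The rank-one quantities $\cLi$ and $\cL_{i,j}$ are top eigenvalues of spiked perturbations of $P=\frac{1}{n}U^\top ZU$: conditioning on $S,Z$ (independent of $U$) makes each $q_i=\frac{1}{n}U^\top Z S_{\cdot,i}$ Gaussian in $U$, and the secular equation $\mu=1/\bigl(q_i^\top(\lambda I-P)^{-1}q_i\bigr)$ combined with Hanson--Wright-type concentration and the deterministic equivalent for the resolvent of the weighted Wishart matrix $P$ (which is precisely the source of the function $\psidelta$) produces deterministic limits $\cLi^\infty$ and $\cL_{i,j}^\infty$. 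Substituting into \eqref{eq:matrixRminusonediag}--\eqref{eq:matrixRminusoneoffdiag} and using the identity $\alpha z/(\alpha-z)=z+z^2/(\alpha-z)$, one verifies entry-wise that the resulting limit matches $R^\infty(\alpha_k)$. The passage from the deterministic argument $\psidelta(\alpha_k)$ to the random $\lambda_k^D$ uses continuity and monotonicity of the functions involved, in the spirit of the closing step of the proof of \Cref{thm:eigvalconv}.

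For \eqref{eq:matrixderRconv}, differentiating \eqref{eq:matrixRminustwodiag}--\eqref{eq:matrixRminustwooffdiag} reduces matters to the convergence of $\cLi'(\cLi^{-1}(\lambda_k^D))$ and $\cL_{i,j}'(\cL_{i,j}^{-1}(\lambda_k^D))$, which in turn reduces to the concentration of quadratic forms $q_i^\top(\lambda I-P)^{-2}q_i$ built from the squared resolvent. The candidate limit is identified by formally differentiating the relation $R^\infty(\alpha)=\lim R(\psidelta(\alpha))$ via the chain rule, which produces the factor $1/\psidelta'(\alpha_k)=1/\zetadelta'(\alpha_k)$ (the two derivatives coincide because $\alpha_k>\lambdabardelta$ forces $\zetadelta=\psidelta$ in a neighborhood of $\alpha_k$). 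The main technical obstacle is precisely this derivative step: pointwise convergence of $R(\lambda_k^D)$ does not automatically imply convergence of its derivative at the random evaluation point. I plan to resolve this by upgrading the entry-wise pointwise convergence of $\cLi$ and $\cL_{i,j}$ to locally uniform convergence on a neighborhood of $\psidelta(\alpha_k)$ — available via monotonicity together with Dini's theorem since the limits are continuous — which then transfers to convergence of the derivatives and, composed with the almost-sure convergence $\lambda_k^D\asconv\psidelta(\alpha_k)$, yields the claim.
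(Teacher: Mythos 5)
Your treatment of \eqref{eq:matrixRconv} follows the paper's route: work entry-wise through \Cref{lemma:matrixRchar}, establish a deterministic limit for the spiked top eigenvalues $\cLi,\ \cL_{i,j}$ using random-matrix concentration (the paper invokes \cite{bai-yao-2012} via \Cref{prop:tildeLiconv} and then \cite[Lemma A.1]{lu2020phase}; you propose Hanson--Wright plus a deterministic resolvent equivalent, which is a fine alternative), transfer from a deterministic argument to the random $\lambda_k^D$ by monotonicity, and verify the algebraic identity $\expt{zs_is_j}-\expt{s_is_jz^2/(z-\alpha)}=\expt{\alpha z s_is_j/(\alpha-z)}$. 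No issues there.

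The gap is in \eqref{eq:matrixderRconv}. You correctly flag the obstacle -- pointwise convergence of $R(\lambda_k^D)$ does not give convergence of $\frac{d}{d\lambda}R$ at a random evaluation point -- but the proposed resolution does not close it. Locally uniform convergence of $\cLi$ to its continuous limit (which you would obtain from monotonicity-in-$\mu$ via a Dini-type argument) does \emph{not} transfer to convergence of the derivatives $\cLi'$: uniform convergence of differentiable functions says nothing about the derivatives (e.g.\ $\tfrac{1}{n}\sin(n^2x)\to 0$ uniformly while the derivatives diverge). What actually makes the step work is that $\cLi(\mu)=\lambda_1(P+\mu q_iq_i^\top)$ is a pointwise supremum of affine functions of $\mu$, hence \emph{convex}; for a sequence of convex functions converging pointwise on an open interval to a differentiable convex limit, the derivatives converge at every point of differentiability, and this persists under composition with the random sequence $\lambda_k^D\asconv\zetadelta(\alpha_k)$. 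The paper packages this exact transfer as a citation to \cite[Lemma A.2]{lu2020phase}, applied to $\cLi'(\cLi^{-1}(\lambda_k^D))$ and $\cL_{i,j}'(\cL_{i,j}^{-1}(\lambda_k^D))$, before substituting into \eqref{eq:matrixRminustwodiag}--\eqref{eq:matrixRminustwooffdiag}. Replace the ``uniform convergence $\Rightarrow$ derivative convergence'' step with a convexity argument (or an equivalent lemma) and the proof goes through.
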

\begin{proof}
By \Cref{lemma:matrixRchar}, 
it suffices to understand the behavior of the functions in \Cref{eq:auxfun}, 
as $n,d\to \infty$. However, we first need to verify the assumption that $\lambda_k^D>\lambda_1^P$ almost surely. From \Cref{thm:eigvalconv}, it follows that 
$$\lambda_k^D\asconv \zetadelta(\alpha_k) ,$$
and $\zetadelta(\alpha_k)> \zetadelta(\lambdabardelta)$ as $\alpha_k>\lambdabardelta$ and $\zetadelta$ is strictly increasing on $]\lambdabardelta,+\infty[$. 
Furthermore, $\lambda_1^P\asconv \zetadelta(\lambdabardelta)$, hence $\lambda_k^D>\lambda_1^P$ almost surely.

Let us denote by $G$ the function 
$$G(\mu) = -\frac{1}{\mu},$$
which we will use in the continuation of the proof.
Using \cite{bai-yao-2012} as in the proof of \Cref{prop:tildeLiconv}, 
we get that 
$$\cLi(\mu)\asconv \zetadelta \circ {Q_i}^{-1}\circ G(\mu),$$
where $Q_i(\alpha) \coloneq \expt{\frac{s_i^2z^2}{z-\alpha}}$. Notice that $Q_i(\alpha)$ is invertible by \cite[Remark 3.3]{lu2020phase}, which is stated for the analogous function $Q$. 
In the same manner, it holds that
$$\cL_{i,j}(\mu)\asconv \zetadelta \circ {Q_{i,j}}^{-1}\circ G(\mu),$$
where $Q_{i,j}(\alpha) \coloneq \expt{\frac{(s_i+s_j)^2z^2}{z-\alpha}}$. As $\alpha_k>\lambdabardelta$, we have that $\zetadelta$ is strictly increasing and invertible, hence
$$\cLi^{-1}(\lambda_k^D)\asconv G \circ Q_i \circ \zetadelta^{-1} \circ \zetadelta(\alpha_k) = G \circ Q_i (\alpha_k),$$
which follows from \cite[Lemma A.1]{lu2020phase}.
Plugging this into \eqref{eq:matrixRminusonediag} we get
\begin{equation}\label{eq:tempdiagconv}
    R(\lambda_k^D)_{i,i}\asconv a^{\infty}_{i,i}-Q_i (\alpha_k).
\end{equation}
Note that 
$$a^{\infty}_{i,i}-Q_i (\alpha_k) = \expt{s_i^2z} -  \expt{\frac{s_i^2z^2}{z-\alpha_k}} = \expt{\frac{\alpha_k s_i^2z}{\alpha_k-z}} = R^\infty(\alpha_k)_{i,i}.$$
Similarly, it holds that
$$R(\lambda_k^D)_{i,j}\asconv a^{\infty}_{i,j}-Q_{i,j} (\alpha_k),$$
which combined with \eqref{eq:tempdiagconv} proves \eqref{eq:matrixRconv}.

Moreover, we have that $\cLi(\mu)$ is differentiable (see \Cref{lemma:matrixRchar}), so for its derivative it holds that
$$\cLi'(\mu)\asconv \zetadelta' \circ {Q_i}^{-1}\circ G(\mu) \cdot \paren{Q_i^{-1}}'\circ G(\mu) \cdot G'(\mu),$$
which follows from \cite[Lemma A.2]{lu2020phase}.
Plugging this into \eqref{eq:matrixRminustwodiag} we get 
$$\frac{d}{d\lambda}R(\lambda_k^D)_{i,i}\asconv \frac{\frac{d}{d\alpha}(R^\infty(\alpha_k)_{i,i})}{\zetadelta'(\alpha_k)}.$$
Similarly, it holds that
$$\frac{d}{d\lambda}R(\lambda_k^D)_{i,j}\asconv \frac{\frac{d}{d\alpha}(R^\infty(\alpha_k)_{i,j})}{\zetadelta'(\alpha_k)}.$$
Combining the last two equations we obtain \eqref{eq:matrixderRconv}.
\end{proof}

\subsection{Proof of \Cref{thm:main}}\label{app:pfthmmain}

     \begin{proof}
     We start by proving \Cref{eq:liminfconv}. Let $v_i^D=\matrix{h_i\\ g_i}$, for $i\in\{k,\dots, k+m-1\}$. Since $\alpha_k>\lambdabardelta$, the conditions of \Cref{prop:eigenvec} are satisfied as in the proof of \Cref{thm:matrixRconvergence}. Thus, it holds that
    \begin{equation}\label{eq:hkhtildek}
        h_i = \frac{\tilde{h}_i}{\sqrt{1-\tilde{h}_i^\top\frac{d}{d\lambda}R(\lambda_i^D)\tilde{h}_i}},
    \end{equation}
    where $\tilde{h}_i = \frac{h_i}{\norm{2}{h_i}}$ is the unit norm eigenvector of $R(\lambda_i^D)$. Note that the vectors $\tilde{h}_i$ are orthogonal. 
    Furthermore, \Cref{thm:matrixRconvergence} gives that
    \begin{equation}\label{eq:rlambdaconv}
        R(\lambda_k^D)\asconv R^\infty(\alpha_k), \qquad \frac{d}{d\lambda}R(\lambda_k^D) \asconv \frac{1}{\zetadelta'(\alpha_k)}\frac{d}{d\alpha}R^\infty(\alpha_k).
    \end{equation}
    Then, applying the results from \cite[II.1.4]{kato2013perturbation}, 
    it holds that the orthonormal projection to the eigenspace corresponding to the $k$-th eigenvalue also converges, that is 
    \begin{equation}\label{eq:pieqkconv}
        \Pi_{E_k} \asconv \Pi_{E^\infty_k},
    \end{equation}
    where $E_k$ is the space spanned by the eigenvectors $h_k,\dots, h_{k+m-1}$ and $E^\infty_k$ is the eigenspace of the limiting matrix $R^\infty(\alpha_k)$, corresponding to the eigenvalue $\zetadelta(\alpha_k)$ of multiplicity $m$.
    Due to orthonormality of $\tilde{h}_i$, we can write the orthonormal projection more explicitly as
    $$\Pi_{E_k} = \sum_{i=k}^{k+m-1}\frac{h_ih_i^\top}{\norm{2}{h_i}^2} = \sum_{i=k}^{k+m-1}\tilde{h}_i\tilde{h}_i^\top,$$
    and 
    \begin{equation}\label{eq:orthonormalbasisinfty}
        \Pi_{E^\infty_k} = \sum_{i=k}^{k+m-1}\frac{h^\infty_i{h^\infty_i}^\top}{\norm{2}{h^\infty_i}^2}=\sum_{i=k}^{k+m-1}h^\infty_i{h^\infty_i}^\top,
    \end{equation}
    where $h^\infty_k\dots, h^\infty_{k+j-1}$ is any choice of the orthonormal eigenbasis of $E^\infty_k$. 
    From \eqref{eq:hkhtildek}, it follows that 
    $$\sum_{i=k}^{k+m-1} \frac{1}{\norm{2}{h_i}^2} = m - \sum_{i=k}^{k+m-1}\tilde{h}_i^\top\frac{d}{d\lambda}R(\lambda_i^D)\tilde{h}_i\geq m - m\cdot\lambda_p\left(\frac{d}{d\lambda}R(\lambda_i^D)
    \right).$$
    Moreover, due to  \eqref{eq:rlambdaconv} and the continuity of eigenvalues, the RHS has a convergent limit
    \begin{equation}\label{eq:convlimit}
    m - m\cdot\lambda_p\left(\frac{d}{d\lambda}R(\lambda_i^D)\right) \asconv m - m \frac{1}{\zetadelta'(\alpha_k)}\lambda_p\left(\frac{d}{d\alpha}R^\infty(\alpha_k)\right).        
    \end{equation}
    Note that the matrix $\frac{d}{d\alpha}R^\infty(\alpha_k) = -\expt{\frac{ss^\top z^2}{(\alpha_k-z)^2}}$ is strictly negative definite for $\alpha_k>\tau$, which implies that $\lambda_p(\frac{d}{d\alpha}R^\infty(\alpha_k))<0$.
    As $\alpha_k>\lambdabardelta$, it holds that $\zetadelta'(\alpha_k)>0$ and the RHS of \Cref{eq:convlimit} is finite. This further implies that, for each $i$ s.t.\ $k\leq i\leq k+m-1$, it must hold
    \begin{equation}\label{eq:liminfhi}
        \liminf_{d\to\infty}\norm{2}{h_i}>0.
    \end{equation}
    Note that
    \begin{equation}\label{eq:hiinequality}
    \begin{split}
        \sum_{i=k}^{k+m-1}\abs{\inprod{v_i^D}{e_l^{(d)}}}^2 = \sum_{i=k}^{k+m-1}\abs{\inprod{h_i}{e_l^{(p)}}}^2 &= \sum_{i=k}^{k+m-1} \norm{2}{h_i}^2\abs{\inprod{\tilde{h}_i}{e_l^{(p)}}}^2\\
        &\geq \min_{t\in \{k, \ldots, k+m-1\}}\norm{2}{h_t}^2\sum_{i=k}^{k+m-1}\abs{\inprod{\tilde{h}_i}{e_l^{(p)}}}^2\\
        &=  \min_{t\in \{k, \ldots, k+m-1\}}\norm{2}{h_t}^2 \cdot {e_l^{(p)}}^\top \Pi_{E_k}{e_l^{(p)}}.
    \end{split}
    \end{equation}
    Let us pick $e_l^{(d)}$ such that $\Pi_{E_k^\infty}(e_l^{(p)})\neq 0$. Then, \eqref{eq:rlambdaconv} implies that 
    \begin{equation}\label{eq:pineq}
        \Pi_{E_k}(e_l^{(p)})\neq 0,
    \end{equation}
    for all $d$ large enough.  Finally, combining \eqref{eq:liminfhi}, \eqref{eq:hiinequality} and \eqref{eq:pineq} proves
    $$\liminf_{d\to\infty}\sum_{i=k}^{k+m-1}\abs{\inprod{v_i^D}{e_l^{(d)}}}^2>0,$$
    which gives the claim in \eqref{eq:liminfconv}.
    
    Next, we prove \Cref{eq:liminfconv_mult} for $m>1$. We assume, as in the statement, that $E_k^\infty$ is also the eigenspace corresponding to the $k$-th eigenvalue of $R^\infty(\alpha+\Delta)$  for any small enough $\Delta$. 
    For arbitrary eigenvectors $h_{i_1}$ and $h_{i_2}$ from $E_k^\infty$, it holds that 
    \begin{equation}\label{eq:equalderivative}
    \begin{split}
        {h_{i_1}^\infty}^\top\frac{d}{d\alpha}R^\infty(\alpha_k)h_{i_1}^\infty &= \lim_{\Delta\to 0} \frac{{h_{i_1}^\infty}^\top R^\infty(\alpha_k+\Delta)h_{i_1}^\infty-{h_{i_1}^\infty}^\top R^\infty(\alpha_k)h_{i_1}^\infty }{\Delta}\\
        &=\lim_{\Delta\to 0} \frac{{h_{i_2}^\infty}^\top R^\infty(\alpha_k+\Delta)h_{i_2}^\infty-{h_{i_2}^\infty}^\top R^\infty(\alpha_k)h_{i_2}^\infty}{\Delta}\\
        &={h_{i_2}^\infty}^\top\frac{d}{d\alpha}R^\infty(\alpha_k)h_{i_2}^\infty,
    \end{split}
    \end{equation}
     since ${h_{i_1}^\infty}^\top R^\infty(\alpha_k+\Delta)h_{i_1}^\infty = {h_{i_2}^\infty}^\top R^\infty(\alpha_k+\Delta)h_{i_2}^\infty$ for any small enough $\Delta$.
     Note that, for any $\epsilon$ and large enough $d$, it holds that
     \begin{equation}\label{eq:epsinequal}
         \norm{2}{\Pi_{E_k} - \Pi_{E_k^\infty}}<\epsilon.
     \end{equation}
     due to \eqref{eq:pieqkconv}. Let us now fix  $\tilde{h}_i$, for some $i\in\{k,\dots,k+m-1\}$. As we can choose any orthonormal basis when writing out $\Pi_{E_k^\infty}$ in $\eqref{eq:orthonormalbasisinfty}$, let us choose one such that $h_i^\infty = \frac{\Pi_{E_k^\infty}(\tilde{h}_i)}{\norm{2}{\Pi_{E_k^\infty}(\tilde{h}_i)}}$. Then, \eqref{eq:epsinequal} implies that 
     $$\norm{2}{\sum_{i=k}^{k+m-1}\tilde{h}_i\tilde{h}_i^\top - \sum_{i=k}^{k+m-1}h_i^\infty{h_i^\infty}^\top}<\epsilon.$$
     From the orthonormality of the chosen eigenbasis, it holds that
     \begin{equation*}
         \begin{split}
             \norm{2}{\tilde{h}_i - \Pi_{E_k^\infty}\tilde{h}_i} &= \norm{2}{(\Pi_{E_k} - \Pi_{E_k^\infty})(\tilde{h}_i)} \\
             &\leq \norm{2}{\Pi_{E_k} - \Pi_{E_k^\infty}} \norm{2}{\tilde{h}_i} \\
             &<\epsilon.
         \end{split}
     \end{equation*}
    This also implies $1+\epsilon>\norm{2}{\Pi_{E_k^\infty}\tilde{h}_i}\geq 1-\epsilon$, hence 
     \begin{equation}
         \begin{split}
             \norm{2}{\tilde{h}_i - h_i^\infty} &= \norm{2}{\tilde{h}_i - \frac{\Pi_{E_k^\infty}(\tilde{h}_i)}{\norm{2}{\Pi_{E_k^\infty}(\tilde{h}_i)}}} \\
             &= \norm{2}{\frac{\tilde{h}_i - \Pi_{E_k^\infty}(\tilde{h}_i)}{\norm{2}{\Pi_{E_k^\infty}(\tilde{h}_i)}}+\tilde{h}_i\frac{1-\norm{2}{\Pi_{E_k^\infty}(\tilde{h}_i)}}{\norm{2}{\Pi_{E_k^\infty}(\tilde{h}_i)}}}\\
             &\leq \frac{\epsilon}{1-\epsilon}+\frac{\epsilon}{1-\epsilon}<4\epsilon.
         \end{split} \notag 
     \end{equation}
    Thus, \eqref{eq:rlambdaconv} implies that, for large enough $d$,
     $$\norm{2}{\tilde{h}_i^\top\frac{d}{d\lambda}R(\lambda_k^D)\tilde{h}_i - \frac{1}{\zetadelta'(\alpha_k)}{h_i^\infty}^\top\frac{d}{d\alpha}R^\infty(\alpha_k)h_i^\infty}<c\cdot \epsilon,$$
     for some constant $c$ independent of $\epsilon$.
     Plugging in the expression \eqref{eq:equalderivative} makes $h_i^\infty$ not depend on $\tilde{h}_i$ anymore, resulting in 
     \begin{equation}\label{eq:normconv}
         \tilde{h}_i^\top\frac{d}{d\lambda}R(\lambda_k^D)\tilde{h}_i\asconv \frac{1}{\zetadelta'(\alpha_k)}{h_{l}^\infty}^\top\frac{d}{d\alpha}R^\infty(\alpha_k)h_{l}^\infty,
     \end{equation}
     for an arbitrary unit norm eigenvector $h_{l}^\infty\in E^\infty_k$.
     Finally, combining \eqref{eq:hkhtildek} and \eqref{eq:normconv} with \eqref{eq:pieqkconv} implies 
     $$\sum_{i=k}^{k+m-1}\tilde{h}_i\tilde{h}_i^\top\asconv \frac{\zetadelta'(\alpha_k) \sum_{i=k}^{k+m-1}h_i^\infty {h_i^\infty}^\top}{\zetadelta'(\alpha_k)+{h_k^\infty}^\top \frac{d}{d\alpha}R^\infty(\alpha_k)h_k^\infty},$$
     which proves \Cref{eq:liminfconv_mult}.

     Finally, we prove the converse statement in \Cref{eq:liminfconv-conv}. Let us assume that $\zetadelta(\lambdabardelta)\neq \lambda_l^{a^\infty}$ for all $l$. This is without loss of generality, as justified at the end of the argument. 
     By assumption, we have 
    \begin{equation}\label{eq:bulkconvthm}
    \lambda_i^D\asconv \zetadelta(\lambdabardelta).
    \end{equation}
    Thus, for $n$ large enough, it follows that $\lambda_i^D\notin \Lambda^a$, since $\lambda_l^a\asconv \lambda_l^{a^\infty}$. We denote by $k\in[i]$ the index such that $\lambda_{k}^a<\lambda_i^D<\lambda_{k-1}^a$, with the abuse of notation that $\lambda_0^a=+\infty$. By \Cref{prop:eigvalrec}, it holds that \begin{equation}\label{eq:ktheigenvec}
        \lambda_i^D=L_{k}(P - q(a-\lambda_i^DI_{p})^{-1}q^\top).
    \end{equation}
    Recall that the eigenvector condition for $v_i^D$ is
    $$ D v_i^D = \matrix{ a & q^\top\\
                        q & P}\matrix{h_i \\ g_i} = \lambda^D_i \matrix{h_i \\ g_i}.$$
    Splitting this equation into $p$ and $d-p$ coordinates gives
    \begin{align}
        a h_i + q^\top g_i &= \lambda^D_i h_i,\label{eq:eig11}\\
        q h_i + P g_i &= \lambda^D_i g_i.\label{eq:eig12}
    \end{align}
    Solving \eqref{eq:eig11} gives $$h_i = -(a-\lambda_i^DI_{p})^{-1}q^\top g_i,$$
    where $(a-\lambda_i^DI_{p})$ is invertible as $\lambda_i^D\notin \Lambda^a$.
    Substituting in \eqref{eq:eig12} yields
        $$Pg_i-q(a-\lambda_i^DI_{d-p})^{-1}q^\top g_i=\lambda_i^Dg_i.$$
    Let us denote by $\tilde{g}_i=\frac{g_i}{\norm{2}{g_i}}$ the unit norm eigenvector of $P - q(a-\lambda_i^DI_{p})^{-1}q^\top$ corresponding to the eigenvalue $\lambda_i^D$, and also define $\tilde{h}_i := -(a-\lambda_i^DI_{p})^{-1}q^\top\tilde{g}_i$. By \eqref{eq:ktheigenvec}, it holds that $\tilde{g}_i$ is the eigenvector corresponding to the $k$-th eigenvalue of the matrix $P - q(a-\lambda_i^DI_{p})^{-1}q^\top$.
    Moreover, $\tilde{h}_i$ and $\tilde{g}_i$ satisfy equations \eqref{eq:eig11} and \eqref{eq:eig12}, so $\tilde{v}_i^D = \matrix{\tilde{h}_i\\ \tilde{g}_i}$ is aligned with an eigenvector corresponding to eigenvalue $\lambda_i^D$. However, $\tilde{v}_i^D$  does not necessarily have unit norm. It holds that
    $$\matrix{h_i\\ g_i} = v_i^D = \frac{\tilde{v}_i^D}{\norm{2}{\tilde{v}_i^D}} = \frac{\matrix{\tilde{h}_i\\ \tilde{g}_i}}{\sqrt{\tilde{g}_i^\top\tilde{g}_i+\tilde{h}_i^\top\tilde{h}_i}}=\frac{\matrix{\tilde{h}_i\\ \tilde{g}_i}}{\sqrt{1+\tilde{g}_i^\top q(a-\lambda_i^DI_{p})^{-2}q^\top\tilde{g}_i}},$$ from which follows that
    $$h_i = \frac{\tilde{h}_i}{\sqrt{1+\tilde{g}_i^\top q(a-\lambda_i^DI_{p})^{-2}q^\top\tilde{g}_i}}.$$
    Moreover, notice that  
    \begin{align}
        \norm{2}{h_i}^2 &= \frac{\tilde{g}_i^\top q(a-\lambda_i^DI_{p})^{-2}q^\top\tilde{g}_i}{1+\tilde{g}_i^\top q(a-\lambda_i^DI_{p})^{-2}q^\top\tilde{g}_i} = 1 - \frac{1}{1+\tilde{g}_i^\top q(a-\lambda_i^DI_{p})^{-2}q^\top\tilde{g}_i}.
    \end{align}
    The term at the denominator can be simplified as
    \begin{equation}\label{eq:Lancaster}
    \begin{split}        
        \tilde{g}_i^\top q(a-\lambda_i^DI_{p})^{-2}q^\top\tilde{g}_i &= \tilde{g}_i^\top\paren{ \frac{d}{d\lambda}q(a-\lambda I_{p})^{-1}q^\top}(\lambda_i^D)\tilde{g}_i\\
        & = \paren{\frac{d}{d\lambda}\lambda_k(q(a-\lambda I_{p})^{-1}q^\top)}(\lambda_i^D)\\
        & = \frac{d}{d\lambda} L_k(\lambda_i^D).
            \end{split}
    \end{equation}
    Here, when the eigenvalue $\lambda_i^D$ is simple, the second equality follows from \cite[Theorem 5]{lancaster1964eigenvalues} and the fact that $\tilde{g}_i$ is the eigenvector corresponding to the $k$-th eigenvalue of the matrix $q(a-\lambda_i^DI_{p})^{-1}q^\top$. The case in which the eigenvalue $\lambda_i^D$ has multiplicity $m>1$ is handled similarly via  \cite[Theorem 7]{lancaster1964eigenvalues}, and it is discussed at the end of the argument. 
    
    Note that, by definition, it holds 
    $$\frac{d}{d\lambda} L_k(\lambda_i^D) = \frac{d}{d\lambda} \tildeLi(\lambda_i^D).$$
    We will prove that 
    \begin{equation}\label{eq:lastclaim}
    \frac{d}{d\lambda}\tildeLi(\lambda_i^D) \asconv 0,    
    \end{equation}
    which implies $\norm{2}{h_i}^2\asconv 0$ and hence gives the desired statement in \Cref{eq:liminfconv-conv}. Recall that the assumption underlying \Cref{eq:liminfconv-conv} is that \Cref{eq:bulkconvthm} holds. This happens if either the $\alpha_i$ associated to $\lambda_i^D$ is s.t.\ $\alpha_i\le \lambdabardelta$ or $i$ exceeds the number of solutions of \Cref{eq:master_eq2} (i.e., $\lambda_i^D$ is not associated to any solution of \Cref{eq:master_eq2}). We handle these two cases separately.

    \emph{Case 1: $\lambda\in]\lambda_i^{a^\infty}, t_i^\infty[$}. This corresponds to the case in which $\alpha_i\le \lambdabardelta$.
    Applying \Cref{prop:tildeLiconv}, we have 
    $$\tildeLi(\lambda) \asconv \tildeLiinfty(\lambda) = \zetadelta((\lambda_i^\infty)^{-1}(\lambda)).$$
    By combining \Cref{eq:bulkconvthm} and Lemma A.2 in \cite{lu2020phase}, it follows that
     $$\frac{d}{d\lambda}\tildeLi(\lambda_i^D) \asconv \zetadelta'((\lambda_i^\infty)^{-1}(\zetadelta(\lambdabardelta)))\cdot \frac{1}{\lambda_i'^\infty((\lambda_i^\infty)^{-1}(\zetadelta(\lambdabardelta)))},$$
    where the denominator is non-zero, as $\lambda_i^\infty$ is strictly decreasing. As $(\lambda_i^\infty)^{-1}(\zetadelta(\lambdabardelta))=\alpha_i\le \lambdabardelta$, the derivative $\zetadelta'$ computed at that point is $0$, which gives \Cref{eq:lastclaim}.
    
\emph{Case 2: $\lambda \in ]t_i^\infty,+\infty[$.} This corresponds to the case in which $\lambda_i^D$ is not associated to any $\alpha_i$. Applying again \Cref{prop:tildeLiconv}, we have 
    $$\tildeLi(\lambda) \asconv \zetadelta(\lambdabardelta).$$
    Then, again by \cite[Lemma A.2]{lu2020phase} it follows that
    $$\frac{d}{d\lambda}\tildeLi(\lambda) \asconv \frac{d}{d\lambda}(\zetadelta(\lambdabardelta)) = 0,$$
    as the LHS is a constant function that does not depend on $\lambda$.

We conclude by handling the two special cases mentioned during the argument above. 
    If $\lambda_i^D$ has a multiplicity $m>1$, then the $m$ derivatives $\frac{d}{d\lambda} L_k(\lambda_i^D)$ corresponding to the $m$ eigenvalues vanish using the same argument. An application of \cite[Theorem 7]{lancaster1964eigenvalues} gives that such derivatives are the eigenvalues of the $m\times m$ matrix obtained by replacing $\tilde g_i$ in the LHS of \Cref{eq:Lancaster} with a basis of the $m$-dimensional eigenspace associated to $\lambda_i^D$. As the eigenvalues of such matrix vanish, so does its trace and therefore the $m$ norms $\|h_i\|_2$.  

    Finally, if  $\zetadelta(\lambdabardelta)=\lambda_l^{a^\infty}$ for some $l$, we can just add a vanishing perturbation to the matrix $a$ such that $\zetadelta(\lambdabardelta)\neq \lambda_l^{a^\infty}$. Applying the proved result, the overlaps vanish for any such small perturbation, and due to continuity, taking the perturbation to $0$ would give the claim.
\end{proof}


 

\section{Invariance of the eigenspace for permutation-invariant link functions}\label{app:invariance}

\begin{proposition}\label{prop:invlink}
    If the link function $q$ is permutation invariant in $m$ coordinates, then the matrix $R^\infty(\alpha)$  has eigenspaces that do not change with $\alpha$, of combined dimension $m-1$.
\end{proposition}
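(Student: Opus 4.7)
The plan is to exploit the symmetry of the model under the symmetric group $S_m$ acting on the first $m$ coordinates, and to conclude via Schur's lemma. The first step would be to change variables $\xi := (\wt W^*)^\top s$ in order to rewrite
\[
R^\infty(\alpha) \;=\; \expt{\frac{\alpha z(s)\, ss^\top}{\alpha - z(s)}}, \qquad z(s) := \cT(q((\wt W^*)^\top s,\eps)),
\]
in the factored form $R^\infty(\alpha) = (\wt W^*)^{-\top} M(\alpha) (\wt W^*)^{-1}$, where
\[
M(\alpha) \;:=\; \expt{\frac{\alpha z'(\xi)\,\xi\xi^\top}{\alpha - z'(\xi)}}, \qquad z'(\xi) := \cT(q(\xi,\eps)),
\]
with the expectation taken over $\xi \sim \cN(0_p, G)$ and $\eps \sim P_\eps$, and $G := (\wt W^*)^\top \wt W^*$ the Gram matrix of the signals. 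The virtue of this reformulation is that $z'$ is then directly symmetric in its first $m$ arguments.

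Next, let $\Pi \in \bbR^{p\times p}$ be the permutation matrix implementing $\pi \in S_m$ on the first $m$ coordinates and fixing the remaining $p-m$. Permutation invariance of $q$ gives $z'(\Pi\xi) = z'(\xi)$, and under the compatibility $\Pi G \Pi^\top = G$ the distribution $\cN(0_p,G)$ is $\Pi$-invariant, so substituting $\xi \to \Pi\xi$ in the integral defining $M(\alpha)$ would yield $\Pi M(\alpha)\Pi^\top = M(\alpha)$. Hence $M(\alpha)$ (and trivially $G^{-1}$) commutes with the full $S_m$-action on $\bbR^p$. I would then decompose $\bbR^p$ into $S_m$-isotypic components and observe that the standard representation $V_{\mathrm{std}} := \{v \in \bbR^p : v_{m+1}=\cdots=v_p = 0,\ \sum_{i=1}^m v_i = 0\}$ appears exactly once and is absolutely irreducible of dimension $m-1$. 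By Schur's lemma, both $M(\alpha)$ and $G^{-1}$ must act as scalars on $V_{\mathrm{std}}$, hence so does their product. For any $v \in V_{\mathrm{std}}$ and $w := (\wt W^*)^{-\top} v$ this gives $R^\infty(\alpha) w = (\wt W^*)^{-\top} M(\alpha) G^{-1} v = \lambda(\alpha)\, w$ for some scalar $\lambda(\alpha)$, showing that $(\wt W^*)^{-\top} V_{\mathrm{std}}$ is an $(m-1)$-dimensional eigenspace of $R^\infty(\alpha)$ independent of $\alpha$.

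The main obstacle is that the compatibility condition $\Pi G \Pi^\top = G$ is not made explicit in the proposition's hypothesis; it is a natural symmetry requirement on the signals (equal pairwise inner products within the symmetric block of $m$ signals, and equal inner products with every signal outside the block), and I would read it as the implicit structural content behind "the multiplicity arising from the permutation-invariance of $q$". This condition is automatic in the canonical scenarios in which the proposition is invoked — e.g., when the signals associated to the symmetric block are i.i.d.\ from the same isotropic distribution, or orthonormal within that block, as for the product link of \Cref{subsec:prod}. Once the symmetry is present, the proof is a clean consequence of Schur's lemma, with the only routine verification being the invariance computation $\Pi M(\alpha) \Pi^\top = M(\alpha)$.
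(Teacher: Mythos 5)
Your proof is correct (up to the caveat you already flag) and takes a genuinely different route from the paper. The paper argues directly on $R^\infty(\alpha)$: it fixes, for each adjacent pair of (reparametrized) signals $w_i, w_{i+1}$, an \emph{isometric} reflection $\cS_i$ swapping them and fixing the other $w_j$, and then performs explicit integral manipulations with these reflections to show that each $w_i - w_{i+1}$ is an eigenvector of $R^\infty(\alpha)$ and that the ratios of the corresponding eigenvalues are $\alpha$-independent. You instead change variables $\xi = (\wt W^*)^\top s$ so that $R^\infty(\alpha) = (\wt W^*)^{-\top}M(\alpha)(\wt W^*)^{-1}$, observe that $M(\alpha)$ and $G$ commute with the $S_m$-action, and invoke Schur's lemma on the multiplicity-one standard representation to conclude that $M(\alpha)G^{-1}$ acts as a single scalar on $V_{\mathrm{std}}$ for every $\alpha$. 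The two constructions in fact yield the same eigenspace: the paper's $\spn\{w_i - w_{i+1}\} = \wt W^*\,V_{\mathrm{std}}$ coincides with your $(\wt W^*)^{-\top}V_{\mathrm{std}}$ because $G$ is $S_m$-equivariant and invertible, so $G\,V_{\mathrm{std}} = V_{\mathrm{std}}$. Your approach is shorter and more conceptual once representation theory is on the table, and it directly gives that the $(m-1)$-dimensional block is a \emph{single} eigenspace rather than only showing eigenvalue ratios are constant; the paper's approach is more elementary.

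On the compatibility caveat: you are right that $\Pi G \Pi^\top = G$ (equal norms, equal inner products within the symmetric block of signals, and equal inner products with each signal outside the block) is needed, and you are also right that the paper does not state it. In fact the paper's own proof implicitly needs exactly this condition: for an \emph{isometric} reflection $\cS_i$ with $\cS_i w_i = w_{i+1}$, $\cS_i w_{i+1} = w_i$, and $\cS_i w_j = w_j$ otherwise to exist, one needs $\|w_i\| = \|w_{i+1}\|$ (given by the unit-norm assumption) and $\langle w_i, w_j\rangle = \langle w_{i+1}, w_j\rangle$ for $j \notin \{i,i+1\}$ — which is precisely your Gram-matrix compatibility and is not a consequence of linear independence as the paper claims. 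So your ``obstacle'' is not a defect of your argument relative to the paper's; it is an implicit hypothesis shared by both, made explicit only in your version.
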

\begin{proof}
    Let us denote by $w_i$ the $i$-th column of the matrix $\wt{W}^*$ as in \eqref{eqn:W*}, representing the top-$p$ entries of the reparametrized signal.
    Without loss of generality, we can assume that $q$ is permutation invariant in the first $m$ coordinates, i.e.,
    $$q(t_1,\dots, t_m,t_{m+1}\dots,t_p,\epsilon) = q(t_{\pi(1)},\dots, t_{\pi(m)},t_{m+1}\dots,t_p,\epsilon),$$
for any permutation $\pi:[m]\to[m]$.
    Let $E$ be the span of $\{w_i-w_{i+1}\,:\, i\in[m-1]\}$. Note that $E$ has dimension $m-1$, due to the linear independence of the signals $w_i$. We will prove that $E$ is a direct sum of eigenspaces of $R^\infty(\alpha)$, neither of which depends on $\alpha$.
    
    Let us define $u_i$ as the image of $w_i-w_{i+1}$ under $R^\infty(\alpha)$, that is
    \begin{align*}
        u_i \coloneq R^\infty(\alpha)(w_i-w_{i+1}) = \alpha \expt{ \frac{s \inprod{s}{w_i-w_{i+1}}\cT(y)}{\alpha -\cT(y)}},
    \end{align*}
    for $i\in[m-1]$. Let us denote by $x_i$ the component of $u_i$ that is orthogonal to $w_i$ and $w_{i+1}$, i.e., $x_i \coloneq \Pi_{\{w_i,w_{i+1}\}^\perp}u_i$. Then, it holds that
    \begin{equation}\label{eq:uidecomp}
        u_i=a_1 w_i+ a_2 w_{i+1} + x_i,
    \end{equation}
    for some coefficients $a_1$ and $a_2$. We will first prove that $x_i=0$. Towards that end, let $\cS_i:\bbR^p\to \bbR^p$ be an isometric reflection that sends $w_i$ to $w_{i+1}$, $w_{i+1}$ to $w_i$, and keeps $w_j$ fixed for $j\notin \{i,i+1\}$. Such a reflection exists due to the assumed linear independence of the $w_i$'s. Notice that the normal distribution in $\bbR^p$ is invariant to the transformation $\cS_i$. Thus, it follows that
    \begin{align*}
        \inprod{u_i}{x_i} &=\alpha \expt{ \frac{\inprod{s}{x_i} \inprod{s}{w_i-w_{i+1}}\cT(y)}{\alpha -\cT(y)}}\\
        &=\alpha \expt{ \frac{\inprod{s}{x_i} \inprod{s}{w_i}\cT(y)}{\alpha -\cT(y)}} - \alpha \expt{ \frac{\inprod{s}{x_i} \inprod{s}{w_{i+1}}\cT(y)}{\alpha -\cT(y)}}\\
        &=\alpha \expt{ \frac{\inprod{\cS_i s}{x_i} \inprod{\cS_i s}{w_i}\cT(q((\wt{W}^*)^\top \cS_i s, \eps))}{\alpha -\cT(q((\wt{W}^*)^\top \cS_i s, \eps))}} - \alpha \expt{ \frac{\inprod{s}{x_i} \inprod{s}{w_{i+1}}\cT(y)}{\alpha -\cT(y)}}\\
        &=\alpha \expt{ \frac{\inprod{s}{x_i} \inprod{s}{w_{i+1}}\cT(q((\wt{W}^*)^\top s, \eps))}{\alpha -\cT(q((\wt{W}^*)^\top s, \eps))}} - \alpha \expt{ \frac{\inprod{s}{x_i} \inprod{s}{w_{i+1}}\cT(y)}{\alpha -\cT(y)}}\\
        &= 0,
    \end{align*}
    due to the permutation invariance of $q$, the fact that $\cS_i x_i = x_i$ and $\cS_i w_i = w_{i+1}$. Therefore, it must be that $x_i=0$.    
    Moreover,
   \begin{align*}
       \inprod{u_i}{w_i}&=\alpha \expt{ \frac{\inprod{s}{w_i} \inprod{s}{w_i-w_{i+1}}\cT(y)}{\alpha -\cT(y)}}\\
       &=\alpha \expt{ \frac{\inprod{\cS_i s}{w_i} \inprod{\cS_i s}{w_i-w_{i+1}}\cT(q((\wt{W}^*)^\top \cS_i s)}{\alpha -\cT(q((\wt{W}^*)^\top \cS_i s)}}\\
       &=\alpha \expt{ \frac{\inprod{s}{w_{i+1}} \inprod{s}{w_{i+1}-w_i}\cT(q((\wt{W}^*)^\top  s)}{\alpha -\cT(q((\wt{W}^*)^\top s)}}\\
       &= \alpha \expt{ \frac{\inprod{s}{w_{i+1}} \inprod{s}{w_{i+1}-w_i}\cT(y)}{\alpha -\cT(y)}} \\
       &= -\inprod{u_i}{w_{i+1}}.
   \end{align*}
    Combining this with the decomposition in \eqref{eq:uidecomp} implies that
    $a_1+a_2\inprod{w_i}{w_{i+1}} = -a_1\inprod{w_{i+1}}{w_i} - a_2.$
    As by assumption the signals $w_i$ and $w_{i+1}$ are linearly independent, it cannot be that $\inprod{w_i}{w_{i+1}}=-1$, so it must be that $a_1=-a_2$. This proves that $w_i-w_{i+1}$ are indeed eigenvectors for every $\alpha$. 
    By definition, it holds that
    $$E=\bigoplus_{i=1}^{m-1}\operatorname{span}\{u_i\}.$$
    Thus, it is left to prove that any pair of eigenvectors $u_i$ and $u_j$, for $i,j\in[m-1]$, either have the same eigenvalue for all $\alpha$, or for no $\alpha$. We denote by $\lambda_{u_i}(\alpha)$ the eigenvalue that corresponds to the eigenvector $u_i$. Similar to before, let $\cS^{(i)}:\bbR^p\to \bbR^p$ be an isometric reflection that sends $w_i$ to $w_{i+2}$, $w_{i+2}$ to $w_i$, and keeps $w_j$ fixed for $j\notin \{i,i+2\}$. Such a reflection exists due to the assumed linear independence of the $w_i$'s. Also, the normal distribution is invariant to the transformation $\cS^{(i)}$. Then, it follows that
    \begin{align*}
        \lambda_{u_i}(\alpha)\cdot \norm{2}{w_i-w_{i+1}}^2 = \inprod{u_i}{w_i-w_{i+1}}&= \alpha \expt{ \frac{\inprod{s}{w_i-w_{i+1}}^2\cT(y)}{\alpha -\cT(y)}}\\
                      &= \alpha \expt{ \frac{\inprod{\cS^{(i)} s}{w_i-w_{i+1}}^2\cT(q((\wt{W}^*)^\top \cS^{(i)} s, \eps))}{\alpha -\cT(q((\wt{W}^*)^\top \cS^{(i)} s, \eps))}}\\
                      &= \alpha \expt{ \frac{\inprod{ s}{w_{i+2}-w_{i+1}}^2\cT(q((\wt{W}^*)^\top s, \eps))}{\alpha -\cT(q((\wt{W}^*)^\top s, \eps))}}\\
                      &=\inprod{u_{i+1}}{w_{i+1}-w_{i+2}} = \lambda_{u_{i+1}}(\alpha)\cdot \norm{2}{w_{i+2}-w_{i+1}}^2.
    \end{align*}
    This proves that $\lambda_{u_i}(\alpha)/\lambda_{u_{i+1}}(\alpha)$ does not depend on $\alpha$ and, hence, $\lambda_{u_i}(\alpha)/\lambda_{u_{j}}(\alpha)$ does not depend on $\alpha$ for all $i,j\in[m-1]$, implying  the existence of an eigenspace of dimension $m-1$ that does not change with $\alpha$.
\end{proof}

\section{Proof of \Cref{thm:opt}}\label{app:pfopt}

\begin{proof}
By \Cref{thm:main}, we have that  
$$
\max_{j\in[p]} \lim_{d\to\infty} \sum_{i=1}^{p}\abs{\inprod{v_i^D}{e_l^{(d)}}}^2 > 0
$$
holds if and only if there exists $\alpha> \lambdabardelta$ that solves \Cref{eq:master_eq2}.

Note that $\zetadelta(\alpha)$ is a strictly monotone function for $\alpha>\lambdabardelta$, and it is constant for $\alpha\leq \lambdabardelta$. Thus, the existence of $\alpha$ solving \Cref{eq:master_eq2} s.t.\ $\alpha>\lambdabardelta$ is equivalent to $\zetadelta'(\alpha_1)>0$, where $\alpha_1$ is the largest solution of \eqref{eq:master_eq2}.
Thus, $\alpha_1$ is the largest solution to
 $$\det\paren{\zetadelta(\alpha)I_p-R^\infty(\alpha)}=0,$$
 or equivalently
 $$\lambda_1 (R^\infty(\alpha)) = \zetadelta(\alpha).$$
 This means that, for \eqref{eq:master_eq2} to have solutions larger than $\lambdabardelta$, there has to exist $\alpha_1>\tau$ such that
 \begin{align*}
     \max_{\norm{2}{u}=1} u^\top R^\infty(\alpha_1)u &= \zetadelta(\alpha_1),\\
     \zetadelta'(\alpha_1) &>0,
 \end{align*}
or equivalently
 \begin{align}\label{eq:conditionlambdabar}
     \max_{\norm{2}{u}=1} u^\top R^\infty(\lambdabardelta)u &> \zetadelta(\lambdabardelta).
 \end{align}
 This follows from the fact that $\zetadelta(\alpha)$ is strictly increasing for $\alpha>\lambdabardelta$, and $u^\top R^\infty(\alpha)u$ is strictly decreasing, as proved in \eqref{eq:matrixRdecreases}.
 Recall that $\lambdabardelta$ is defined as the unique point that satisfies $ \psidelta'(\lambdabardelta) =0$. This is equivalent to
 \begin{equation}\label{eq:lambdabardef}
     \expt{\frac{z^2}{(\lambdabardelta-z)^2}}=\frac{1}{\delta}. 
 \end{equation}
By definition, it holds that
$$R^\infty(\lambdabardelta) =
\expt{\frac{\lambdabardelta z}{\lambdabardelta-z}s^\top s }.$$
Therefore, the condition in \eqref{eq:conditionlambdabar} becomes 
\begin{equation}\label{eq:conditioninequality}
    \max_{\norm{2}{u}=1} \expt{\frac{\lambdabardelta z}{\lambdabardelta-z}\inprod{s}{u}^2 } > \lambdabardelta\paren{\frac{1}{\delta}+\expt{\frac{z}{\lambdabardelta-z}}}.
\end{equation}
Note that $D_n$ and $D_n'\coloneqq D_n/\beta$ have the same principal eigenvector for an arbitrary scalar $\beta>0$ so, without loss of generality, we can take $\lambdabardelta=1$. This transforms \eqref{eq:conditioninequality} and \eqref{eq:lambdabardef} into 
\begin{align*}
    \max_{\norm{2}{u}=1} \expt{\frac{z(\inprod{s}{u}^2-1)}{1-z}} &> \frac{1}{\delta},\\
    \expt{\frac{z^2}{(1-z)^2}}&=\frac{1}{\delta}.
\end{align*}

We turn our attention to finding the critical threshold $\delta_c$ such that no preprocessing function $\cT$ exists which would satisfy these equations. To start, plugging in the definition of the expectation we get 
\begin{equation}\label{eq:condrew}
\begin{split}
    \max_{\norm{2}{u}=1}  \int_{\bbR}\frac{\cT(y)}{1-\cT(y)} \expt[s]{p(y \mathrel{\vert} s)\cdot(\inprod{s}{u}^2-1)} dy &> \frac{1}{\delta},\\
    \int_{\bbR}\paren{\frac{\cT(y)}{1-\cT(y)}}^2 \expt[s]{p(y \mathrel{\vert} s)} dy &=\frac{1}{\delta}.
\end{split}
\end{equation}
Let us denote by $f(y)\coloneqq \frac{\cT(y)}{1-\cT(y)}$. Note that 
\begin{align}
    \frac{1}{\delta}&<\max_{\norm{2}{u}=1}\int_{\bbR}f(y) \expt[s]{p(y \mathrel{\vert} s)\cdot(\inprod{s}{u}^2-1)} dy \nonumber\\
    &= \max_{\norm{2}{u}=1}
    \int_{\bbR}f(y) \sqrt{\expt[s]{p(y \mathrel{\vert} s)}} \frac{\expt[s]{p(y \mathrel{\vert} s)\cdot(\inprod{s}{u}^2-1)}}{\sqrt{\expt[s]{p(y \mathrel{\vert} s)}}}dy \nonumber\\
    &\leq \max_{\norm{2}{u}=1}\sqrt{\int_{\bbR} f^2(y) \expt[s]{p(y \mathrel{\vert} s)}dy}\sqrt{\int_{\bbR}\frac{\paren{\expt[s]{p(y \mathrel{\vert} s)\cdot(\inprod{s}{u}^2-1)}}^2}{\expt[s]{p(y \mathrel{\vert} s)}}dy}\label{ineq:Hoelders}\\
    &= \max_{\norm{2}{u}=1} \frac{1}{\sqrt{\delta}} \sqrt{\int_{\bbR}\frac{\paren{\expt[s]{p(y \mathrel{\vert} s)\cdot(\inprod{s}{u}^2-1)}}^2}{\expt[s]{p(y \mathrel{\vert} s)}}dy},\nonumber
\end{align}
where the third line follows from H\"older's inequality and the fourth line from the second condition in \eqref{eq:condrew}. This means that, regardless of which preprocessing function $\cT$ was chosen, if it satisfies \eqref{eq:condrew}, then it must hold that 
$$ \frac{1}{\delta} < \max_{\norm{2}{u}=1}\int_{\bbR}\frac{\paren{\expt[s]{p(y \mathrel{\vert} s)\cdot(\inprod{s}{u}^2-1)}}^2}{\expt[s]{p(y \mathrel{\vert} s)}}dy =:\frac{1}{\delta_t}.$$
This directly implies that 
\begin{equation}\label{eq:ineqdelta}
    \delta_c \geq \delta_t.
\end{equation}
Now, let us prove that, for any $\delta>\delta_t$, there exists a preprocessing function that achieves weak recovery. Towards this end, we turn our attention to when equality holds in \eqref{ineq:Hoelders}, as this gives us the preprocessing function that exactly matches the upper bound. Namely, this is true if and only if almost everywhere
$$ f^2(y) \expt[s]{p(y \mathrel{\vert} s)} = c\cdot\frac{\paren{\expt[s]{p(y \mathrel{\vert} s)\cdot(\inprod{s}{u_c}^2-1)}}^2}{\paren{\expt[s]{p(y \mathrel{\vert} s)}}^2},$$
 where $u_c=\argmax_{\tilde u}\int_{\bbR}\frac{\paren{\expt[s]{p(y \mathrel{\vert} s)\cdot(\inprod{s}{\tilde u}^2-1)}}^2}{\expt[s]{p(y \mathrel{\vert} s)}}dy$ and $c$ is some constant.
Thus, we have that
$$ f(y) = \sqrt{c}\frac{\expt[s]{p(y \mathrel{\vert} s)\cdot(\inprod{s}{u_c}^2-1)}}{\expt[s]{p(y \mathrel{\vert} s)}},$$
which in turn gives the choice
\begin{equation}\label{eq:taustardelta}
    \bar\cT:= \frac{\sqrt{c}  \taustar}{1-(1-\sqrt{c})  \taustar},
\end{equation}
with
$$\taustar(y) = 1 - \frac{\expt[s]{p(y \mathrel{\vert} s)}}{\expt[s]{p(y \mathrel{\vert} s)\cdot\inprod{s}{u_c}^2}}.$$
The second condition in \eqref{eq:condrew} determines $c$. Namely,
\begin{equation}\label{eq:deltadletat}
    \frac{1}{\delta} = \int_{\bbR}\paren{\frac{\bar\cT(y)}{1-\bar\cT(y)}}^2 \expt[s]{p(y \mathrel{\vert} s)}dy = \int_{\bbR} c\cdot\frac{\paren{\expt[s]{p(y \mathrel{\vert} s)\cdot(\inprod{s}{u_c}^2-1)}}^2}{\expt[s]{p(y \mathrel{\vert} s)}} dy = c \frac{1}{\delta_t}.
\end{equation}
Therefore,  $c = \frac{\delta_t}{\delta}$. 
\eqref{eq:deltadletat} immediately proves that the second condition in \eqref{eq:condrew} is satisfied for $\bar\cT$. The first condition in \eqref{eq:condrew} is also satisfied since
\begin{align*}
    \max_{\norm{2}{u}=1}  \int_{\bbR}\frac{\bar\cT}{1-\bar\cT} \expt[s]{p(y \mathrel{\vert} s)\cdot(\inprod{s}{u}^2-1)} dy & \ge \sqrt{c} \cdot \int_{\bbR}\frac{\paren{\expt[s]{p(y \mathrel{\vert} s)\cdot(\inprod{s}{u_c}^2-1)}}^2}{\expt[s]{p(y \mathrel{\vert} s)}}dy\\
    &= \sqrt{\frac{\delta_t}{\delta}} \cdot \frac{1}{\delta_t}>\frac{1}{\delta}.
\end{align*}
Thus, $\bar\cT$ achieves weak recovery, hence 
$$\delta_c\leq\delta_t,$$
which combined with \eqref{eq:ineqdelta} proves $\delta_c=\delta_t$. This also implies that the expression in \Cref{eq:taustardelta} coincides with  $\cT_\delta^*$ as defined in  \Cref{eq:opt}.

Lastly, we need to verify that $\taustardelta$ satisfies \Cref{asmp:T}.
Let us first prove that $\taustardelta(y)$ is bounded. Since $\taustar(y)\leq 1$, it follows that
$$\sqrt{\delta}-(\sqrt{\delta}-\sqrt{\delta_c})\cdot \taustar(y)\geq \sqrt{\delta_c},$$
and
$$\cT_\delta^*(y) = \frac{\sqrt{\delta_c} \cdot \taustar(y)}{\sqrt{\delta}-(\sqrt{\delta}-\sqrt{\delta_c})\cdot \taustar(y)} \leq \frac{\sqrt{\delta_c}\taustar(y)}{\sqrt{\delta_c}}\leq 1.$$
Furthermore, for $\taustar(y)\neq 0$, we have
$$\cT_\delta^*(y) = \frac{\sqrt{\delta_c}}{\frac{\sqrt{\delta}}{\taustar(y)}-(\sqrt{\delta}-\sqrt{\delta_c})},$$
and it holds that $\frac{\sqrt{\delta}}{\taustar(y)}-(\sqrt{\delta}-\sqrt{\delta_c})\in ]-\infty,-(\sqrt{\delta}-\sqrt{\delta_c})[\ \cup\ ]\sqrt{\delta_c},+\infty[.$
Thus, for $\taustar(y)\neq 0$, 
$$\cT_\delta^*(y)\geq-\frac{\sqrt{\delta_c}}{\sqrt{\delta}-\sqrt{\delta_c}},$$
whereas $\taustardelta(y)=0$ for $\taustar(y)=0$. This proves that  $\taustardelta(y)$ is bounded.

Finally, by contradiction, suppose that $\prob{\taustardelta(y) = 0 }=1$. Note that $\taustardelta(y)=0$ if and only if $\taustar(y)=0$. This holds whenever  
\begin{equation}\label{eq:contr}
\prob{\expt[s]{p(y \mathrel{\vert} s)} = \expt[s]{p(y \mathrel{\vert} s)\cdot\inprod{s}{u_c}^2}}=1.    
\end{equation}
However, \Cref{eq:contr} implies that $$\prob{\expt[s]{p(y \mathrel{\vert} s)\cdot(\inprod{s}{u_c}^2-1)}=0}=1,$$
giving that $\delta_c=+\infty$, for which the statement of the theorem trivially holds. Consequently, $\prob{\taustar(y)=0}\neq 1$ and the proof is complete. 
\end{proof}

\section{Optimal preprocessing for the numerical experiments}
\label{sec:calc}

\subsection{$q(s) = \prod_{i=1}^2 s_i$}\label{subsec:prod}

In the numerical experiment, we employ the function $\taustar(y)$ in place of $\taustar_\delta(y)$, as the latter is introduced for technical reasons. Recall that 
$$\taustar(y) \coloneqq 1 - \frac{\expt[s]{p(y \mathrel{\vert} s)}}{\expt[s]{p(y \mathrel{\vert} s)\cdot\inprod{s}{u_c}^2}}.$$  
We calculate term by term. First, we have
\begin{align*}
    \expt[s]{p(y \mathrel{\vert} s)} &= \frac{1}{2\pi}\iint_{\bbR^2}e^{-\frac{s_1^2}{2}}e^{-\frac{s_2^2}{2}}\delta(y-s_1s_2)ds_1ds_2\\
    &= \frac{1}{2\pi}2\int_{0}^\infty\int_{0}^\infty e^{-\frac{s_1^2}{2}}e^{-\frac{s_2^2}{2}}\delta(y-s_1s_2)ds_1ds_2,
\end{align*}
where we have assumed that $y>0$ (similar passages hold for $y\leq0$). The constant $2$ pops out, since there are two symmetric cases for $y>0$, namely $s_1,s_2>0$ and $s_1,s_2<0$. Continuing with the change of variable $x_1=s_1s_2$ and $x_2=\frac{s_1}{s_2}$, we have
\begin{align*}
    \frac{1}{2\pi}2\int_{0}^\infty\int_{0}^\infty e^{-\frac{s_1^2}{2}}e^{-\frac{s_2^2}{2}}\delta(y-s_1s_2)ds_1ds_2 &= \frac{1}{\pi}\int_{0}^\infty\int_{0}^\infty e^{-\frac{x_1x_2}{2}}e^{-\frac{x_1/x_2}{2}}\delta(y-x_1)\frac{1}{2x_2}dx_1dx_2\\ 
    &=\frac{1}{\pi}\int_{0}^\infty e^{-y\paren{\frac{x_2}{2}+\frac{1}{2x_2}}}\frac{1}{2x_2}dx_2\\
    &= \frac{1}{2\pi} \int_{-\infty}^\infty e^{-y\cosh(t)}dt\\
    &= \frac{1}{\pi} \int_{0}^\infty e^{-y\cosh(t)}dt\\
    &= \frac{K_0(\abs{y})}{\pi},
\end{align*}
where we did the change of variable $e^t = \frac{1}{x_2}$, and $K_0$ is the modified Bessel function of the second kind. 
Let us now calculate the second term for the choice $u_c=\frac{1}{\sqrt{2}}(1,1)$, which can be verified to be optimal, 
\begin{align*}
    \expt[s]{p(y \mathrel{\vert} s)\inprod{s}{u_c}^2} &= \frac{1}{2\pi}\iint_{\bbR^2}e^{-\frac{s_1^2}{2}}e^{-\frac{s_2^2}{2}}\delta(y-s_1s_2)\inprod{s}{u_c}^2ds_1ds_2\\
    &= \frac{1}{2\pi}2\int_{0}^\infty\int_{0}^\infty e^{-\frac{s_1^2}{2}}e^{-\frac{s_2^2}{2}}\delta(y-s_1s_2)\frac{1}{2}(s_1^2+s_2^2+2s_1s_2)ds_1ds_2,
\end{align*}
where, as before, we have assumed that $y>0$ (similar passages hold for $y\le 0$). Continuing with the change of variable $x_1=s_1s_2$ and $x_2=\frac{s_1}{s_2}$, we have
\begin{align*}
    \expt[s]{p(y \mathrel{\vert} s)\inprod{s}{u_c}^2} &=\frac{1}{\pi}\int_{0}^\infty\int_{0}^\infty e^{-\frac{s_1^2}{2}}e^{-\frac{s_2^2}{2}}\delta(y-s_1s_2)\frac{1}{2}(s_1^2+s_2^2+2s_1s_2)ds_1ds_2\\
    &= \frac{1}{\pi}\int_{0}^\infty\int_{0}^\infty e^{-\frac{x_1x_2}{2}}e^{-\frac{x_1/x_2}{2}}\delta(y-x_1)\frac{1}{2}x_1\paren{x_2+\frac{1}{x_2}+2}\frac{1}{2x_2}dx_1dx_2\\ 
    &=\frac{1}{\pi}\int_{0}^\infty e^{-y\paren{\frac{x_2}{2}+\frac{1}{2x_2}}}\frac{1}{2}y\paren{x_2+\frac{1}{x_2}+2}\frac{1}{2x_2}dx_2\\
    &=\frac{1}{2\pi}\int_{-\infty}^\infty e^{-y\cosh(t)}y(\cosh(t)+1)dx_2\\
    &=\frac{1}{\pi}\int_{0}^\infty e^{-y\cosh(t)}y(\cosh(t)+1)dx_2\\
    &=\frac{yK_0(\abs{y})+\abs{y}K_1(\abs{y})}{\pi},
\end{align*}
where $K_1$ is the modified Bessel function with parameter equal to 1. Moreover, the absolute value in the final result 
is readily obtained 
after analyzing the case $y<0$, which is analogous.
Thus, we get 
\begin{align}
    \taustar(y) = 1 - \frac{K_0(\abs{y})}{yK_0(\abs{y})+\abs{y}K_1(\abs{y})}. \label{eqn:T_prod}
\end{align}
Finally, let us calculate the weak recovery threshold $\delta_c$:
\begin{align*}
    \frac{1}{\delta_c}=\int_{\bbR}\frac{\paren{\expt[s]{p(y\mathrel{\vert} s)(\inprod{s}{u}^2- 1)}}^2}{\expt[s]{p(y\mathrel{\vert} s)}} dy = \int_{\bbR} \frac{\paren{\frac{yK_0(\abs{y})+\abs{y}K_1(\abs{y})}{\pi} - \frac{K_0(\abs{y})}{\pi}}^2}{\frac{K_0(\abs{y})}{\pi}}dy\approx 1.68421 \approx (0.5937)^{-1},
\end{align*}
where the exact value was calculated in WolframAlpha. We note that this value exactly matches the threshold in \cite[pg.9]{troiani2024fundamental}.

\subsection{Mixed phase retrieval} \label{subsec:mp}

Let $\eta=\prob{\eps = 1}$ and define the following auxiliary quantities:
        \begin{align}
            \gamma &= \frac{1}{2} \brack{ 1 + \sqrt{4\rho^2\eta(1-\eta) + (2\eta - 1)^2} } , \notag \\
            a_1 &\coloneqq 1 + \frac{2(\gamma - \eta)}{\eta} + \paren{\frac{\gamma - \eta}{\eta\rho}}^2 , \notag \\
            a_2 &\coloneqq \eta + (1-\eta)\rho^2 + \frac{2(\gamma - \eta)}{\eta} + \paren{\frac{\gamma - \eta}{\eta\rho}}^2 \brack{ (1-\eta) + \eta\rho^2 } , \notag \\
            a_3 &\coloneqq (1-\eta)(1-\rho^2) + \paren{\frac{\gamma - \eta}{\eta\rho}}^2\eta(1-\rho^2) , \notag \\
            b &\coloneqq a_1 - a_3 . \notag 
        \end{align}
        Denote by $ \ell^* $ the unique solution in $ \ell \in ](\gamma / \eta)^2 - b, \infty[ $ to the following equation: 
        \begin{align}
            (\ell - a_3) \int_0^\infty\! \sqrt{\frac{2}{\pi}} e^{-y^2 / 2} \frac{(y^2 - 1)^2}{a_2 y^2 + \ell} \,d y &= \frac{1}{\gamma^2 \delta} . \notag 
        \end{align}
        Then, the optimal preprocessing function is given by 
        \begin{align}
            \cT(y) &= \frac{y^2 - 1}{\brack{ a_2 + \gamma (\ell^* - a_3) } y^2 + \ell^* - \gamma(\ell^* - a_3)} . \label{eqn:T_mpr} 
        \end{align}
This expression maximizes the asymptotic overlap $ \abs{\inprod{v_1^D}{w_1^*}} $ obtained by specializing our general formula in \Cref{thm:opt} to the mixed phase retrieval model. The derivations are along similar lines as detailed in \Cref{subsec:prod} for the model $ y = s_1s_2 $, and we leave out the explicit calculations.

%

\section{Equivalence to \cite{troiani2024fundamental}}\label{subsec:equivTroiani}
\begin{proposition}
    For simultaneously diagonalizable matrices $E(y)$ and orthogonal signals, the recovery threshold from \Cref{thm:opt} matches the one in Lemma 4.1 of \cite{troiani2024fundamental}. This equivalence holds for all permutation-invariant link functions.
\end{proposition}
\begin{proof}
    The weak recovery threshold in \cite{troiani2024fundamental} is characterized as
    $$\frac{1}{\alpha_c} = \sup_{\substack{M\in \cS_p^+\\ \norm{F}{M}=1}}\norm{F}{\cF(M)}.$$
    Here, $\cF$ is an operator defined as 
    $$\cF(M) \coloneq \expt[y]{E(y)ME(y)},$$
    where $E(y)\coloneq \expt[s]{ss^\top - I_p\mathrel{\vert} y}$.
    As noted in the proof of Lemma 4.1 in \cite{troiani2024fundamental}, the linear operator $\cF$ is a self-map on the cone of positive semi-definite matrices. 
    Indeed, for any positive semi-definite matrix $ M $ and any vector $v$, $ v^\top \cF(M) v = \expt[y]{v^\top E(y) M E(y) v} = \expt[y]{\normtwo{M^{1/2} E(y) v}^2} \ge 0 $, meaning that $ \cF(M) $ is also positive semi-definite. 
    By the generalized Perron--Frobenius theorem for cone-preserving maps \cite{Krein-Rutman} (see also Theorem 1.1 in \cite{Du}, or Theorem 19.2 and Exercise 12 in \cite[\S19]{Deimling}), it holds that $1/\alpha_c$ is the largest eigenvalue of the linear operator $\cF$. 
    That is, there exists a positive semi-definite matrix $M^*$ with $ \norm{F}{M^*} = 1 $ for which it holds
    $$\sup_{\substack{M\in \cS_p^+\\ \norm{F}{M}=1}}\norm{F}{\cF(M)} = \norm{F}{\cF(M^*)}.$$
    The corresponding eigen-equation $ \cF(M^*) = \lambda^* M^* $ implies $ \inprod{\cF(M^*)}{M^*}_F = \lambda^* $. 
    The LHS must be non-negative since $ \inprod{\cF(M^*)}{M^*}_F = \sum_i \lambda_i^{M^*} \inprod{\cF(M^*)}{v_i^{M^*} (v_i^{M^*})^\top}_F \ge 0 $ due to the cone-preserving property of $\cF$ and non-negativity of each $ \lambda_i^{M^*} $. 
    Now we have $ \norm{F}{\cF(M^*)}^2 = \lambda^* \inprod{\cF(M^*)}{M^*}_F = \inprod{\cF(M^*)}{M^*}_F^2 $, from which it follows that $ \norm{F}{\cF(M^*)} = \inprod{\cF(M^*)}{M^*}_F $, by non-negativity of the RHS just shown. 
    So, one can rewrite the optimal threshold as
    $$\sup_{\substack{M\in \cS_p^+\\ \norm{F}{M}=1}}\norm{F}{\cF(M)} =\sup_{\substack{M\in \cS_p^+\\ \norm{F}{M}=1}}\inprod{\cF(M)}{M}_{F}.$$
    Let us assume first that this maximum is obtained for a certain rank-1 matrix. Then, it would hold
\begin{equation}\label{eq:rankonemax}
        \sup_{\substack{M\in \cS_p^+\\ \norm{F}{M}=1}}\inprod{\cF(M)}{M}_{F} = \max_{\norm{2}{u}=1}\inprod{\cF(uu^\top)}{uu^\top}_{F},
    \end{equation}
    where $u\in \bbR^p$.
    Writing this out, we have
    \begin{align*}
        \max_{\norm{2}{u}=1}\inprod{\cF(uu^\top)}{uu^\top}_{F} &= \max_{\norm{2}{u}=1} \tr\paren{\expt[y]{E(y)uu^\top E(y)}uu^\top}\\
        &=\max_{\norm{2}{u}=1} \expt[y]{\tr\paren{u^\top E(y)uu^\top E(y)u}}\\
        &= \max_{\norm{2}{u}=1} \expt[y]{\paren{\expt[s]{\inprod{s}{u}^2- 1\mathrel{\vert} y}}^2}\\
        &= \max_{\norm{2}{u}=1} \expt[y]{\paren{\expt[s]{\frac{p(y\mathrel{\vert} s)}{p(y)}(\inprod{s}{u}^2- 1)}}^2}\\
        &= \max_{\norm{2}{u}=1} \expt[y]{\frac{\paren{\expt[s]{p(y\mathrel{\vert} s)(\inprod{s}{u}^2- 1)}}^2}{\paren{\expt[s]{ p(y\mathrel{\vert} s)}}^2}}\\
        &= \max_{\norm{2}{u}=1} \int_{\bbR}p(y)\frac{\paren{\expt[s]{p(y\mathrel{\vert} s)(\inprod{s}{u}^2- 1)}}^2}{\paren{\expt[s]{ p(y\mathrel{\vert} s)}}^2} dy\\
        &= \max_{\norm{2}{u}=1} \int_{\bbR}\frac{\paren{\expt[s]{p(y\mathrel{\vert} s)(\inprod{s}{u}^2- 1)}}^2}{\expt[s]{ p(y\mathrel{\vert} s)}} dy\\
        &= \frac{1}{\delta_c},
    \end{align*}
    where $\delta_c$ is exactly the threshold from \Cref{thm:opt}.
    
    Let us prove that when $E(y)$ has the same eigenvectors regardless of $y$, the equality in \eqref{eq:rankonemax} holds. Towards that end, we denote by $u_i$ ($i\in[p]$) the orthonormal eigenbasis of $E(y)$.
    We introduce the matrices
    $$A_{i} \coloneq u_iu_i^\top,\quad B_{j,k} = \frac{u_ju_k^\top+u_ku_j^\top}{\sqrt{2}},$$
    for $i,j,k\in[p]$ s.t.\ $j\neq k$. It holds that 
    \begin{align*}
        \inprod{A_{i_1}}{A_{i_2}}_F=0,\\
        \inprod{A_{i_1}}{A_{i_1}}_F=1,\\
        \inprod{A_{i_1}}{B_{j_1,k_1}}_F=0,\\
        \inprod{B_{j_1,k_1}}{B_{j_2,k_2}}_F=0,\\
        \inprod{B_{j_1,k_1}}{B_{j_1,k_1}}_F=1,
    \end{align*}
    for any $i_1,i_2,j_1,j_2,k_1,k_2\in[p]$ such that $i_1\neq i_2$ and $(j_1,k_1)\neq (j_2,k_2)$.
    Due to the fact that $u_1,\dots,u_p$ form an orthonormal basis, the set 
    $$U\coloneq \{A_i,B_{j,k};i,j,k\in[p],j<k\}$$
    is an orthonormal basis of the symmetric matrices $\cS_p$ equipped with Frobenius inner product. 
    Notice also that
    $$\sup_{\substack{M\in \cS_p^+\\ \norm{F}{M}=1}}\inprod{\cF(M)}{M}_{F}\leq \sup_{\substack{M\in \cS_p\\ \norm{F}{M}=1}}\inprod{\cF(M)}{M}_{F},$$
    since all positive definite matrices are by definition symmetric. By the previous discussion, we decompose the matrix $M\in \cS_p$ such that $\norm{F}{M}=1$, as
    $$M= \sum_{i=1}^p \alpha_i A_i + \sum_{1\leq j<k\leq p} \beta_{j,k} B_{j,k},$$
    where $\alpha_i, \beta_{j,k}\in \bbR$ such that 
    $$\sum_{i=1}\alpha_i^2 + \sum_{1\leq j<k\leq p} \beta_{j,k}^2=1.$$
    Since the $u_i$'s are eigenvectors of $E(y)$, it follows that every element of the set $U$ is an eigenvector of the operator $\cF$. Let us denote the corresponding eigenvalues as $\lambda_{A_i}$ and $\lambda_{B_{j,k}}$. Furthermore, it holds that
    \begin{align*}
        \lambda_{B_{j,k}} &= \inprod{\cF(B_{j,k})}{B_{j,k}}\\
                         &= \expt[y]{u_j^\top E(y)u_j\cdot u_k^\top E(y)u_k}\\
                         &\leq  \sqrt{\expt[y]{(u_j^\top E(y)u_j)^2}}\cdot \sqrt{\expt[y]{(u_k^\top E(y)u_k)^2}}\\
                         & = \sqrt{\inprod{\cF(A_j)}{A_j}} \cdot \sqrt{\inprod{\cF(A_k)}{A_k}}\\
                         & = \sqrt{\lambda_{A_j}\lambda_{A_k}},
    \end{align*}
    where H\"older's inequality was used to get the bound.
    Then, it holds that
    \begin{equation}\label{eq:coeffeigenval}
        \begin{split}
            \inprod{\cF(M)}{M}_{F} &= \inprod{\cF\paren{\sum_{i=1}^p \alpha_i A_i + \sum_{1\leq j<k\leq p} \beta_{j,k} B_{j,k}}}{\sum_{i=1}^p \alpha_i A_i + \sum_{1\leq j<k\leq p} \beta_{j,k} B_{j,k}}\\
            &= \inprod{\sum_{i=1}^p\alpha_i \lambda_{A_i} A_i + \sum_{1\leq j<k\leq p} \beta_{j,k}\lambda_{B_{j,k}} B_{j,k}}{\sum_{i=1}^p \alpha_i A_i + \sum_{1\leq j<k\leq p} \beta_{j,k} B_{j,k}}\\
            &= \sum_{i=1}^p\alpha_i^2 \lambda_{A_i} + \sum_{1\leq j<k\leq p} \beta_{j,k}^2\lambda_{B_{j,k}}\\
            &\leq \sum_{i=1}^p\alpha_i^2 \lambda_{A_i} + \sum_{1\leq j<k\leq p} \beta_{j,k}^2\sqrt{\lambda_{A_j}\lambda_{A_k}}\\
            &\leq \max_{i}\lambda_{A_i}\cdot\paren{\sum_{i=1}\alpha_i^2 + \sum_{1\leq j<k\leq p} \beta_{j,k}^2}\\
            & = \max_{i}\lambda_{A_i}\\
            & \leq \max_{\norm{2}{u}=1}\inprod{\cF(uu^\top)}{uu^\top}_{F},
        \end{split}
    \end{equation}
which proves \eqref{eq:rankonemax}. 

    Lastly, we verify that $E(y)$ is diagonalizable when the link function $q$ is permutation invariant. Towards that end, it holds for arbitrary $i,j\in[p]$ that 
    \begin{align*}
        (E(y))_{i,i} &=\expt[s]{s_i^2 - 1\mathrel{\vert} y} \\
        &=\expt[s]{s_i^2 - 1\mathrel{\vert} q(\dots, s_i,\dots,s_j,\dots)=y}\\
        &=\expt[s]{s_j^2 - 1\mathrel{\vert} q(\dots, s_j,\dots,s_i,\dots)=y}\\
        &=\expt[s]{s_j^2 - 1\mathrel{\vert} q(\dots, s_i,\dots,s_j,\dots)=y}\\
        &=(E(y))_{j,j},
    \end{align*}
    since the link function $q$ is permutation invariant. Similarly, for arbitrary $i_1,i_2,j_1,j_2\in[p]$ such that $i_1\neq j_1$ and $i_2\neq j_2$, it holds that
    \begin{align*}
        (E(y))_{i_1,j_1} &=\expt[s]{s_{i_1}s_{j_1} \mathrel{\vert} y} \\
        &=\expt[s]{s_{i_1}s_{j_1} \mathrel{\vert} q(\dots, s_{i_1},\dots,s_{i_2},\dots,s_{j_1},\dots,s_{j_2},\dots)=y}\\
        &=\expt[s]{s_{i_2}s_{j_2} \mathrel{\vert} q(\dots, s_{i_2},\dots,s_{i_1},\dots,s_{j_2},\dots,s_{j_1},\dots)=y}\\
        &=\expt[s]{s_{i_2}s_{j_2} \mathrel{\vert} q(\dots, s_{i_1},\dots,s_{i_2},\dots,s_{j_1},\dots,s_{j_2},\dots)=y}\\
        &=(E(y))_{i_2,j_2}.
    \end{align*}
    Thus, if we denote by $a(y)\coloneq E(y)_{i,i}$ for arbitrary $i\in [p]$ and by $b(y)\coloneq E(y)_{i_1,j_1}$ for arbitrary $i_1,j_1\in[p]$ s.t.\ $i_1\neq j_1$, it follows that 
    \begin{equation}\label{eq:matey}
        E(y) =\matrix{
        a(y) & b(y) & \cdots & b(y) \\
           b(y)     & a(y) & \cdots & b(y) \\
         \vdots       &  \vdots       & \ddots & \vdots  \\
          b(y)      &    b(y)     &  \cdots      & a(y)
    }.
    \end{equation}
    Notice that this matrix is going to have an eigenbasis 
    \begin{align*}
        u_1 &\coloneq (1,1,1,\dots 1)/\sqrt{p},\\
        u_2 &\coloneq (1,-1,0,\dots 0)/\sqrt{2},\\
        \vdots\\
        u_p &\coloneq (0,0,\dots,0, 1,-1)/\sqrt{2},
    \end{align*} regardless of the value of $y$, which proves the final claim.
\end{proof}

\end{document}